\documentclass{article}

\usepackage[main,final]{neurips_2026}

\usepackage[utf8]{inputenc} 
\usepackage[T1]{fontenc}    
\usepackage[table]{xcolor}         
\definecolor{lightblue}{HTML}{A7BED3}
\definecolor{blue}{HTML}{3B65CD}
\definecolor{green}{HTML}{81B5AE}
\definecolor{pink}{HTML}{F2545B}
\definecolor{red}{HTML}{A4243B}
\definecolor{lightgray}{HTML}{C3BABA}
\definecolor{darkgray}{HTML}{9A8F97}
\definecolor{darkblue}{HTML}{114083}
\usepackage[hidelinks]{hyperref}
\hypersetup{
    colorlinks=True,
    linkcolor=darkblue,
    urlcolor=darkblue,
    citecolor=darkblue,
    anchorcolor=darkblue}
\usepackage{url}            
\usepackage{booktabs}       
\usepackage{amsfonts,amsmath,amssymb}       
\usepackage{nicefrac}       
\usepackage{microtype}      

\usepackage{fontawesome5}

\newcommand{\legendbox}[1]{%
  \textcolor{#1}{\rule{\fontcharht\font`X}{\fontcharht\font`X}}%
}

\title{Measure Less, Know More: \\ Self-Supervised Test-Time Feature Acquisition}

\usepackage{amsmath,amssymb,amsfonts}
\usepackage{amsthm}
\usepackage{mathrsfs}
\usepackage{bm}
\usepackage{graphicx}
\usepackage{subcaption}
\usepackage{booktabs}
\usepackage{xspace}
\usepackage{hyperref}
\usepackage{etoc}
\usepackage{float}
\usepackage{pifont}
\usepackage{multirow}

\DeclareMathOperator*{\argmin}{arg\,min}
\usepackage{subcaption}
\usepackage[capitalize,noabbrev]{cleveref}
\usepackage{wrapfig}
\usepackage{caption, subcaption}
\usepackage{caption}
\usepackage{multicol}
\usepackage{multirow}
\usepackage{longtable}
\usepackage{booktabs}
\usepackage{adjustbox}
\newtheorem{theorem}{Theorem}[section]
\newtheorem{proposition}[theorem]{Proposition}
\newtheorem{lemma}[theorem]{Lemma}

\newtheorem{assumption}[theorem]{Assumption}

\usepackage[textsize=tiny]{todonotes}

\makeatletter
\renewcommand*{\@fnsymbol}[1]{\ensuremath{\ifcase#1\or \dagger\or \ddagger\or \star\or \S \else\@ctrerr\fi}}
\makeatother

\author{%
  Eeshaan Jain$^{1}$ \quad
  Linus Bleistein$^{1,2}$ \quad
  Bart Deplancke$^{2}$ \quad
  Charlotte Bunne$^{1,2}$ \\[0.5em]
  $^{1}$School of Computer and Communication Sciences, EPFL \\
  $^{2}$School of Life Sciences, EPFL \\
  \texttt{\{eeshaan.jain, linus.bleistein, bart.deplancke, charlotte.bunne\}@epfl.ch}
}

\DeclareRobustCommand\onedot{\futurelet\@let@token\@onedot}
\def\@onedot{\ifx\@let@token.\else.\null\fi\xspace}

\def\RR{\mathbb{R}}
\def\EE{\mathbb{E}}

\def\Prob{\mathbb{P}}
\newcommand{\bad}{\textsc{Bad}}
\def\xhdr{\paragraph}
\usepackage{xspace}
\def\ours{$\textsc{echo}\textrm{-}k$\xspace}
\definecolor{topcornercolor}{HTML}{A32412}
\def\proxyloss{\widetilde{\mathcal{L}}}
\def\trueloss{\mathcal{L}}
\newcommand{\rank}{\textsc{rk}}

\newcommand{\mnist}{MNIST\xspace}
\newcommand{\fmnist}{Fashion-MNIST\xspace}
\newcommand{\minib}{MiniBooNE\xspace}
\newcommand{\hp}{hPancreas\xspace}

\newcommand{\intset}[2]{[\![#1,#2]\!]}

\usepackage{caption}
\usepackage{lipsum}
\usepackage{enumitem}
\newcommand{\oracleset}{\bm{P}^\star}
\newcommand{\ttset}{\widehat{\bm{P}}}

\newcommand{\mb}{\bm{m}}

\newcommand{\Pb}{{\bm{P}}}

\newcommand{\yb}{\bm{y}}
\newcommand{\Acal}{\mathcal{A}}

\newcommand{\Rb}{\bm{R}}
\newcommand{\Yb}{\bm{Y}}
\newcommand{\Eb}{\bm{E}}
\newcommand{\Wb}{\bm{W}}
\newcommand{\Ib}{\bm{I}}

\newcommand{\Cb}{\bm{C}}

\newcommand{\Ical}{\mathcal{I}}

\newcommand{\Ncal}{\mathcal{N}}

\newcommand{\Lcal}{\mathcal{L}}

\newcommand{\Mcal}{\mathcal{M}}
\newcommand{\xb}{\bm{x}}
\newcommand{\Bb}{\bm{B}}
\newcommand{\wb}{\bm{w}}

\newcommand{\zb}{\bm{z}}
\newcommand{\Fcal}{\mathcal{F}}

\definecolor{green1}{rgb}{0.8,1,0.8}
\definecolor{yellow1}{rgb}{1,1,0.87}
\definecolor{blue1}{rgb}{0.9,0.9,1}
\definecolor{red1}{rgb}{1,0.9,0.9}
\newcommand{\first}[1]{{\cellcolor{blue!40} #1}}

\newcommand{\second}[1]{{\cellcolor{pink!40} #1}}

\newcommand{\best}{\first}
\newcommand{\secbest}{\second}

\makeatletter

\expandafter\def\expandafter\normalsize\expandafter{%
    \normalsize%
    \setlength\abovedisplayskip{2pt}%
    \setlength\belowdisplayskip{2pt}%
    \setlength\abovedisplayshortskip{2pt}%
    \setlength\belowdisplayshortskip{2pt}%
}

\begin{document}

\maketitle

\begin{abstract}
\vspace{-10pt}
Recent progress in multimodal, high-dimensional learning has enabled  foundation models to process heterogeneous, large-scale data. However, at test time, acquiring all features or modalities can be prohibitively costly and often redundant. Sequentially selecting  informative modalities is therefore critical, yet challenging when the downstream task or prediction target is unknown.
To this end, we introduce \textsc{echo}-$k$, a \textit{task-agnostic} and \textit{self-supervised} learning principle for modality acquisition: we use a deep model’s \textit{internal} pretrained representations (e.g., from a foundation model) as proxy targets that summarize cross-modal information. We provide theoretical guarantees in a stylized linear setting that motivate a reinforcement learning (RL) policy for sequential  modality selection. 
Across task-agnostic and label-free acquisition baselines, \ours consistently improves budgeted downstream performance across diverse foundation-model backends.
Our method provides a principled route to cost-aware test-time deployment, with implications for any multimodal system where measurements are expensive or time-constrained, and downstream tasks unknown \emph{a priori}. The code is available at \href{https://github.com/bunnelab/echo-k}
{\raisebox{-0.1em}{\faGithub}\,\texttt{github.com/bunnelab/echo-k}}.
\vspace{-10pt}
\end{abstract}

\etocdepthtag.toc{mtmain}
\section{Introduction}

\looseness -1 Multimodal foundation models (FMs) thrive on \emph{more} information: the more modalities we can feed them at test time, the more accurate, robust, and context-aware their representations become \citep{kaplan2020scaling, zhai2022scaling,mizrahi20234m, shukor2025scaling}. In practice, however, not every modality is available by default. Each additional assay, sensor, or data source carries a tangible cost in time, money, throughput, or patient burden. This creates a deployment gap: while models can theoretically produce rich representations by exploiting \emph{multiview} data, acquiring all views for every instance is infeasible.

Modalities are stratified by cost, latency, and availability: in clinical diagnostics, routine measurements are often available immediately, while additional molecular or imaging assays can disambiguate uncertain cases but are constrained by turnaround time, cost, or limited sample material. Similar asymmetries arise in scientific discovery, where inexpensive proxies can be collected at scale, but high-fidelity measurements require specialized instrumentation or scarce experimental time. Measurements are therefore not acquired exhaustively, nor independently, but as structured groups of modalities or features corresponding to coherent sensing processes. We refer to each such group as a \textit{view}. Under a fixed budget, only a small number of views can be queried per instance, and the value of each view depends on which views have already been observed. The acquisition problem is therefore inherently sequential and adaptive: \textit{at test time, given the views observed so far, which view should a foundation model acquire next to most improve its downstream predictions?}

\begin{wrapfigure}{r}{0.5\textwidth}
    \centering
   \includegraphics[width=0.9\linewidth]{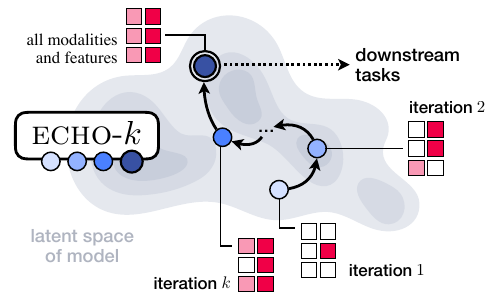}
    \caption{\small \ours learns an RL policy to iteratively selects views to approximate the full latent representation learned by \textit{any} pretrained model, \textit{without} any knowledge of downstream tasks. Once learned, the policy can be deployed for budgeted feature acquisition. }
    \label{fig:main}
\end{wrapfigure}
Existing approaches to sequential acquisition are typically \textit{task-} and \textit{data-}driven: they define utility with respect to a specific downstream objective, such as classification or regression, and optimize expected task loss 
~\citep{gong2019icebreaker,gadgil2023estimating,guney2025active}. 
This paradigm requires labeled data at acquisition time, task-specific reward design, or explicit modeling of task uncertainty. 
As a result, acquisition policies are tightly coupled to a particular task and dataset, and must be retrained or redesigned when downstream objectives change. This reliance on supervision has left the fully task-agnostic, label-free acquisition setting largely unaddressed (see Related Work in \S~\ref{app:related_work}). Yet modern pretrained models are increasingly deployed as frozen backbones whose representations support many downstream tasks.
This shift suggests a different perspective on acquisition: rather than selecting measurements for a single supervised objective, one can ask which measurements are needed to preserve the representation formed by the pretrained model.

In this work, we propose to use these latent representations learned by modern foundation models as a task-agnostic signal for view acquisition. When trained on heterogeneous, multiview data, these models learn compressed representations that integrate complementary evidence across views into a compact state \citep{radford2021learning,simeoni2025dinov3,wenckstern2025ai,oquab2023dinov2,lu2024visual,qu2025tabicl,shmatko2025learning}. 
This latent state encodes the information the model deems relevant across many downstream tasks. 
We use these rich latent states as a learning objective: rather than defining utility with respect to a specific downstream objective, we learn a view selection policy that iteratively selects views to approximate the full latent representation from a minimal number of views.

Learning acquisition policies in representation spaces has two main advantages. On the one hand, it removes dependence on downstream supervision: acquisition decisions can be learned without task labels or task-specific losses. On the other hand, it enables broad transfer: because the representation is shared across tasks, preserving it supports many downstream objectives simultaneously. If the partial-view representation closely matches the full-view representation, then any downstream predictor operating on that representation should incur minimal degradation, independent of the task. 

\looseness -1 We instantiate this principle with \textsc{echo}-$k$, a reinforcement learning algorithm for long-horizon sequential view acquisition that modularly integrates with any self-supervised (foundation) model (Fig.~\ref{fig:main}). At each step, the RL agent selects which view to acquire next 
and the reward quantifies progress toward the corresponding full-view representation. Once trained, the policy is deployed at inference time on unobserved views to allocate acquisition budgets on an instance-by-instance basis. 

We support this algorithm with the first theoretical analysis of self-supervised feature acquisition: in a linear model, where we show that representation recovery controls the latent representation regret (Theorem~\ref{theorem:loss_bound}), and that conditioning on partial observations yields provable gains over fixed orderings. These results in turn justify the RL reward design and motivate the long-horizon acquisition strategy, enabling cost-effective deployment of large multimodal models in settings where full multiview acquisition is impractical.

\textbf{Contributions.} Our contributions are as follows. 
\begin{enumerate}[nosep, leftmargin=*]
    \item We introduce a \textit{model-driven} framework for self-supervised feature acquisition, in which acquisition decisions are guided by the internal representations of pretrained models, without access to task-specific supervision.
        \item We provide the first theoretical analysis of self-supervised feature acquisition. In a stylized linear setting, we derive performance bounds that relate optimal test-time policies and oracle selection strategies, formally characterizing the gap induced by limited information that motivate an RL-based acquisition strategy.
        \item We develop \textsc{echo}-$k$, a cost-efficient reinforcement learning algorithm that learns long-horizon acquisition policies using self-supervised latent representations. The policy is trained to maximize progress toward the full-view representation, translating the representation-recovery principle into a practical sequential acquisition method.

        \item Empirically, we show that \ours learns effective acquisition strategies across diverse data modalities and acquisition horizons. Across image, tabular, single-cell transcriptomics, spatial proteomics, and multimodal geospatial benchmarks, the learned policies rapidly concentrate informative views in early acquisition stages, validating the effectiveness and generality of representation-driven acquisition under realistic budget constraints.

\end{enumerate}
Our contributions bridge theory and practice in unsupervised feature acquisition, enabling foundation models to reason about what to measure next under realistic constraints.

\section{Representation-Driven Acquisition as a Markov Decision Process}
\label{sec:unsupervised_FA}

In the absence of a specified downstream task, acquisition requires a notion of value that is independent of labels. We define this value through the pretrained multiview encoder: acquired views should recover the representation obtained from all views.

\textbf{Setup.} Concretely, we consider $n$ i.i.d. samples, each with $K$ possible views 
$\xb^i = (\mb_1^i, \dots, \mb_K^i)$ drawn from a joint distribution over the product space of multiple views $\Mcal^1 \times \cdots \times \Mcal^K$. \footnote{Throughout, we
use the term \textit{view} to represent either a modality or a feature group acquired jointly, as our framework
applies uniformly to all such settings.} After selection, each view $\mb_k$ is processed by a view-specific encoder $\Phi_k: \Mcal^k \to \RR^{d_k}$ \footnote{Note, under this framework, multiple views can also share the same joint encoder, or correspond to unimodal encoders.} and combined via a 
fixed aggregation $\mathbf{A}_\mathbf{W}: \RR^{d_1} \times \dots \RR^{d_K} \to \RR^p$ parameterized by $\mathbf{W}$, yielding the full-view representation 
$\zb^* = \mathbf{A}_{\mathbf{W}}\!\left(\Phi_1(\mb_1), \dots, \Phi_K(\mb_K)\right)$. This formulation
subsumes both multimodal, where each view corresponds to a distinct sensing modality, and unimodal
models in which views are partitioned into acquisition units but processed by a shared encoder.
For notational simplicity, we absorb encoders into the views and write $\zb^*$ directly. 
We let $\Pb_t \subseteq \intset{1}{K}$ be the acquired set after $t$ steps, with induced 
partial representation $\zb(\Pb_t) := \mathbf{A}_{\mathbf{W}}\!\left(\{\Phi_k(\mb_k): 
k \in \Pb_t\}\right)$.

\xhdr{Objective.} We measure acquisition quality through the representation-recovery loss $
    \Lcal(\Pb_t) := \|\zb(\Pb_t)-\zb^\star\|_2^2,
$
and seek a sequence $(k_1,\ldots,k_T)$ minimizing $\Lcal(\Pb_T)$ under budget $T \ll K$. The ideal one-step oracle greedy rule is
\begin{equation} \label{eqcfeat_aq_problem}
    k_{t+1} \in \argmin_{k \notin \Pb_t}
    \left\|\zb\!\left(\Pb_t \cup \{k\}\right) - \zb^*\right\|_2^2. 
\end{equation}

\xhdr{From training to test-time deployment.} The ideal policy defined above is not deployable at test time. Indeed, it requires both the full-view representation $\zb^\star$ 
and forward passes for every candidate view. During \textit{training}, we use unlabeled data to construct a self-supervised representation-recovery signal to train a view-selection policy. In the complete-reference setting, fully observed training samples allow us to compute the full-view representation $\zb^\star$ and use it as the reference target. More generally, when only a subset of views $R_i\subseteq\intset{1}{K}$ is available for training instance $i$, we can use the corresponding reference representation $\zb(R_i)$ as a partially observed target. In both cases, downstream labels are not used to train the acquisition policy. At \textit{deployment}, the learned policy acts solely from the current 
acquisition state $s_t = (\Pb_t, \zb(\Pb_t), b_t)$; $\zb^\star$ is never accessed.

\xhdr{Downstream tasks.} A representation-recovery objective is only meaningful if minimizing it translates to better performance on downstream tasks. If $g$ is an $L_g$-Lipschitz downstream predictor and the loss $\ell$ is $L_\ell$-Lipschitz in the prediction, then for any acquired set $\Pb$,
\begin{equation*}
    \ell\!\left(g(\zb(\Pb)),\, y\right) - \ell\!\left(g(\zb^*),\, y\right) \;\leq\; L_\ell L_g \,\|\zb(\Pb) - \zb^*\|_2.
\end{equation*}
This provides a simple sufficient condition under which latent recovery bounds the degradation of a fixed downstream predictor. In our experiments, downstream labels are not used to train the acquisition policy; they are used only post hoc to evaluate whether the recovered representations preserve task-relevant information.

The training-deployment asymmetry shown above raises the central theoretical question: how much is lost when acquisition proceeds from partial observations, relative to an oracle greedy policy that can evaluate the full-information value of each view? 
A complete MDP formulation is given in \S~\ref{appendix:extended_math}.

\vspace{-5pt}
\section{Theoretical Guarantees}
\newcommand{\Wm}{\mathbf{W}}
\renewcommand{\wb}{\mathbf{w}}
\label{section:theory}

To quantify the cost of this partial observability, we compare two idealized strategies in a stylized linear setting: (i)~an \emph{oracle policy} that observes the full view vector $\xb$ and greedily selects the view that most reduces representation error, and (ii)~an \emph{optimal test-time policy} that makes the same greedy choice but conditions only on previously acquired views. The gap between such oracle and limited-information strategies has been studied in supervised settings \citep{he2012imitation,madasu2022learning,valancius2023acquisition}; our contribution is to characterize it in the self-supervised, representation-recovery setting, where no task labels are available.

\looseness -1 Let $\xb \in \RR^d$ be a random view vector and $\zb^* = \zb(\xb) := \mathbf{W}\xb \in \RR^m$ a latent representation where $\Wm = (\wb_{ik}) \in \RR^{m \times d}$ is a fixed linear aggregation operator. This setting captures the last-layer behavior of linear or locally linearized representation models. Policies iteratively construct a view subset $\Pb_t \subseteq \intset1d$; at step $t$, the quality of the selected set $\Pb_t$ is measured by the conditional expected square reconstruction error
\begin{align*}
\mathcal{L}(\Pb_t  \mid \Ical_t) &:= \sum_{i=1}^m \EE\!\left[\Big(\sum_{k=1}^d \wb_{ik} x_k - \sum_{k \in \Pb_t} \wb_{ik} x_k\Big)^2 \Big|\; \Ical_t\right] = \sum_{i=1}^m \EE\!\left[\Big(\sum_{k \notin \Pb_t} \wb_{ik}x_k\Big)^2 \Big|\;\Ical_t \right],
\end{align*}
where $\mathcal{I}_t$ denotes the information available to the policy at time $t$, which is different for each policy.

\vspace{-10pt}
\xhdr{One-step oracle policy.} The \emph{one-step oracle policy} greedily selects views with full access to the realization of $\xb$. Given a set $\bm{P}_t^\star$ selected at time $t$, the oracle chooses
\begin{align*}
k_t^\star &\in \argmin_{k \notin \bm{P}_t^\star}
\sum_{i=1}^m \Bigg(
\sum_{j \notin \bm{P}_t^\star \cup \{k\}} \wb_{ij} x_j
\Bigg)^2 = \argmin_{k \notin \bm{P}_t^\star} \sum_{j \notin \Pb_t^* \cup \{k\}} \underbrace{\|\wb_j\|_2^2x_j^2}_{ =: \Eb_j} + \sum_{\substack{j \neq j' \\ j,j' \notin \Pb_t* \cup \{k\}} }\underbrace{\wb_j^\top \wb_{j'} x_j x_{j'}}_{=: \Cb_{jj'}},
\label{eq:oracle_rule}
\end{align*}
where $\wb_k$ denotes the $k$-th column of $\Wm$. The oracle therefore exploits both individual view contributions $\Eb_k$ (energies) and cross-view interaction terms $C_{kk'}$, made possible by full knowledge of $\xb$. A test-time policy, by contrast, cannot evaluate the $C_{kk'}$ terms, since they depend on the unobserved views; it must instead rely on conditional expectations given observed views.

\vspace{-8pt}
\xhdr{Test-time policies.} We define the class of \emph{one-step test-time policies} as stochastic mappings that, at each step, select a new view based solely on the history of previously observed views and update their beliefs over the unobserved ones. Formally, such policies are adapted to the filtration $\Fcal_t := \sigma(x_k : k \in \Pb_t)$ and therefore operate under partial observability, in contrast to the oracle. 
Among this class, the \emph{optimal test-time policy} greedily minimizes the conditional expected representation loss, with perfect knowledge of the conditional distribution of unobserved views given observed ones. Given the current selected set $\widehat{\bm{P}}_t$, this policy chooses
\begin{align*}
\widehat{k}_t \in \argmin_{k \notin \widehat{\Pb}_t} \Bigg[\sum_{j \notin \widehat{\Pb}_t \cup \{k\}} \EE[\Eb_j | \Fcal_t] + \sum_{\substack{j \neq j' \\ j,j' \notin \widehat{\Pb}_t \cup \{k\}}} \EE[\Cb_{jj'} | \Fcal_t]\Bigg],
\end{align*}
where $\Eb_j$ and $\Cb_{jj'}$ are as defined in the oracle decomposition. 
To quantify performance, we define the \emph{true} loss incurred after selecting $\Pb_t$ as $\trueloss(\Pb_t,\xb) := \left\|\sum_{k\notin \Pb_t}\wb_k x_k\right\|_2^2$,
and the latent regret of the test-time policy as 
$
    \mathcal{R}_t := \mathbb E _{\xb}\Big[\trueloss(\ttset_t,\xb)\Big] - \mathbb E _{\xb}\Big[\trueloss(\oracleset_t,\xb)\Big],
$
where $\bm{P}_t^\star$ denotes the view set selected by the oracle after $t$ steps. 

\xhdr{Proxy loss.} 
The true loss includes pairwise interaction terms $\Cb_{jj'}$, so the marginal value of a view depends on the entire set of unobserved views; thus bounding expected regret would  require controlling the distribution of orderings in a quadratically coupled system, which is analytically intractable without additional structure. To isolate the effect of partial observability from this combinatorial difficulty, we analyze a proxy objective $\proxyloss(\bm{P}_t, \xb) := \sum_{k \notin P_t} \mathbb E [\Eb_k\mid \mathcal{I}_t]$ and connect it back to the true regret in Theorem~\ref{theorem:loss_bound}.
Two quantities govern the proxy analysis: the centered variables $\Yb_{j,s} := \Eb_j - \EE[\Eb_j | \Fcal_s]$ measure how far each view's energy deviates from its conditional expectation at step $s$, and the maximal conditional deviation $\Rb_s := \max_{i,j \notin \widehat{\Pb}_{t}} (Y_{i,s} - Y_{j,s})$ captures the worst-case spread of these estimation errors among unselected views.

\subsection{Pathwise Decomposition of the Proxy Regret} \label{sec:pathwise}
We now study the gap between the one-step oracle policy and the optimal test-time policy under the proxy objective. Lemma~\ref{lemma:diff_proxy_loss} provides a pathwise bound on their difference.
\begin{lemma}
\label{lemma:diff_proxy_loss}
For all $t \ge 2$, the proxy loss gap satisfies
\begin{equation}\proxyloss(\ttset_t,\xb) - \proxyloss(\oracleset_t,\xb) \leq \sum\limits_{s=1}^{t} \bm{1}_{\{\widehat{k}_s \in B_t\}} \Rb_s,
\end{equation} where $B_t$ denotes the mismatch set between the oracle and test-time selections up to time $t$.
\end{lemma}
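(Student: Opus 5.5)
We compare the two proxy losses by writing each as the total energy minus the energy collected so far. Write $S := \sum_{j=1}^d \Eb_j$. Under the proxy objective evaluated pathwise (i.e., on the realized $\xb$), $\proxyloss(\Pb_t,\xb) = S - \sum_{k\in\Pb_t}\Eb_k$. Hence the gap is
\[
\proxyloss(\ttset_t,\xb) - \proxyloss(\oracleset_t,\xb) = \sum_{k\in\oracleset_t\setminus\ttset_t}\Eb_k - \sum_{k\in\ttset_t\setminus\oracleset_t}\Eb_k .
\]
All indices in the intersection cancel, so only the mismatch set $B_t$, the symmetric difference of the two selected sets, contributes.

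The plan is to pair mismatched selections step by step. Enumerate the test-time choices $\widehat k_1,\dots,\widehat k_t$. Those with $\widehat k_s\notin B_t$ lie in both sets and cancel. For each $s$ with $\widehat k_s\in B_t$, I match $\widehat k_s$ with a distinct oracle-selected index $k^\star_{\sigma(s)}\in\oracleset_t\setminus\ttset_t$. Both difference sets have equal cardinality, so such a bijection $\sigma$ exists. I would choose $\sigma$ so that $k^\star_{\sigma(s)}$ was still unselected by the test-time policy at step $s$, i.e.\ $k^\star_{\sigma(s)}\notin\widehat\Pb_{s-1}$. This holds automatically, since it is never selected by the test-time policy up to time $t$. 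The gap then becomes a sum over mismatched steps of $\Eb_{k^\star_{\sigma(s)}} - \Eb_{\widehat k_s}$.

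Each term is bounded using the greedy optimality of $\widehat k_s$ under the proxy. At step $s$ the test-time rule (proxy version) picks $\widehat k_s$ to maximize $\EE[\Eb_k\mid\Fcal_s]$ among unselected $k$. Since $k^\star_{\sigma(s)}$ was available, $\EE[\Eb_{k^\star_{\sigma(s)}}\mid\Fcal_s]\le \EE[\Eb_{\widehat k_s}\mid\Fcal_s]$. Therefore
\[
\Eb_{k^\star_{\sigma(s)}} - \Eb_{\widehat k_s} \le \Yb_{k^\star_{\sigma(s)},s} - \Yb_{\widehat k_s,s} \le \Rb_s ,
\]
using the definition $\Yb_{j,s}=\Eb_j-\EE[\Eb_j\mid\Fcal_s]$ and the maximal deviation $\Rb_s$. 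Both indices are unselected at the relevant time, which justifies taking the max over $i,j\notin\widehat\Pb$. Summing over $s$ with the indicator $\bm 1_{\{\widehat k_s\in B_t\}}$ gives the claim. Nonmismatched steps contribute zero, and $\Rb_s\ge 0$ (take $i=j$), so adding them would only loosen the bound.

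The main obstacle is the bookkeeping in the matching step: ensuring that each paired oracle index is genuinely a feasible competitor of $\widehat k_s$ at step $s$, and that the index set in the definition of $\Rb_s$ (stated with $\widehat\Pb_t$ rather than $\widehat\Pb_{s-1}$) covers both indices. I would handle this by noting that both $k^\star_{\sigma(s)}$ and $\widehat k_s$ lie outside $\widehat\Pb_{s-1}$. If needed, I would interpret $\Rb_s$ as the maximum over indices unselected at step $s$. This is the natural reading, and it is consistent with the restriction $t\ge2$ in the statement, which only ensures a nontrivial history.
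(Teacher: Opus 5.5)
Your proposal is correct and follows essentially the same route as the paper's proof: you write the gap as $\sum_{g\in G_t}\Eb_g-\sum_{b\in B_t}\Eb_b$, pair $B_t$ with $G_t$ by a bijection, add and subtract conditional expectations, and discard the conditional-mean difference using the greedy optimality of the test-time rule together with the fact that the paired oracle index is still available. Your two interpretive choices also match what the appendix proof implicitly uses: $\Rb_s$ is taken as a maximum over the views still available at step $s$ (the set $\ttset_{-s}$), and the test-time rule is taken to be greedy on $\EE[\Eb_k\mid\Fcal_s]$.
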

Lemma~\ref{lemma:diff_proxy_loss} reduces the proxy regret to a sum of weighted \emph{mistake events}: regret is incurred only when the test-time policy selects a view outside the oracle-selected set, and the magnitude of each mistake is controlled by the conditional deviation $\Rb_s$. 

Crucially, extending this bound beyond a pathwise statement requires detailed control over the distribution of the order statistics of $(\Eb_k)_{k=1}^d$. In particular, bounding the expected regret entails enumerating all possible orderings of the view energies and characterizing the frequencies and magnitude of deviation of the two policies. Without structural assumptions on the joint distribution of $(\Eb_k)$, we find such a control to be intractable.
We therefore posit a latent-variable model in which the data is driven by a shared discrete factor with Gumbel noise, which induces a tractable Plackett–Luce distribution over view rankings \citep{yellott1977relationship,ma2021learning} and makes the regret analysis possible.

\begin{assumption}[Gumbel latent correlation model]
\label{assumption:dgp}
The view vector $\xb \in \mathbb{R}^d$ is strictly positive and satisfies
$\log\xb = \bm{\eta} + \bm{\beta}\odot Z + \varepsilon,
$
where $\bm{\eta}, \bm{\beta} \in \mathbb{R}^d$ are deterministic, and the common latent factor $Z$ is a discrete latent variable taking values in ${z_1,\dots,z_M}$ with probabilities ${p_1,\dots,p_M}$. The noise terms follow $\varepsilon_j \overset{\mathrm{i.i.d.}}{\sim}\mathrm{Gumbel}(0,\sigma)$ with $\sigma \in (0,1/4)$. We further assume that for all latent states $z_m$, $\mathbb{P}[Z=z_m \mid \mathcal{F}_s] > 0$.
\end{assumption}

Under Assumption~\ref{assumption:dgp}, the conditional ordering of the energies $(\Eb_j)_{j=1}^d$ given $Z=z_m$ follows a Plackett–Luce distribution. Specifically, for any permutation $\pi$ of ${1,\dots,d}$,
\begin{align*}
    \Prob \Big(\Eb_{\pi(1)} \geq \dots \geq \Eb_{\pi(d)}\mid Z = z_m\Big) = \prod\limits_{s=1}^d \frac{e^{h_{\pi(s),z_m}}}{\sum\limits_{r=s}^d e^{h_{\pi(r),z_m}}},
\end{align*}
where $(h_{j,z_m})_{j,m}$ are model-dependent utilities determined by $(\bm{\eta},\bm{\beta},\sigma)$. Marginally, the induced ranking distribution is therefore a mixture of Plackett–Luce models over the latent states of $Z$. 
The role of the Gumbel assumption is not primarily to model the data distribution itself, but to ensure that the ranking of view energies follows a tractable parametric family, which is what makes the regret analysis possible. Fig.~\ref{fig:app-cor} provides visualizations of datasets generated under this model.

\subsection{Regret Bounds under the Gumbel Model} \label{sec:regret_bounds}
For each latent state $m$, define the utilities $u_j^{(m)} := 2\log (\| \wb_j\|_2) + 2\eta_j + 2\beta_j z_m$ and let $\gamma := \min_{m \in [M]} \min_{j \neq j'} \left| u_j^{(m)} - u_{j'}^{(m)} \right| > 0$ denote the minimum separation between utilities. A larger $\gamma$ allows easier identification of view-energy rankings. Also, denote  $p_{\min} = \min_{m=1,\dots, M}p_m$.

A key step is to bound the number of disagreements between the oracle and optimal test-time policies. Let $G_t$ (resp.\ $B_t$) denote the set of views selected only by the oracle (resp.\ optimal test-time) policy at time $t$, and define
\[
R_{\max} := \max_{j \in [d]} \frac{\max_m e^{2\beta_j z_m}}{\min_m e^{2\beta_j z_m}} < \infty, \text{ and } \varepsilon_\gamma := \frac{e^\gamma - 1}{R_{\max} + e^\gamma - 1} \in (0, 1).
\]
\begin{lemma}
The expected number of oracle-test-time mismatches satisfies, for any $\lambda > 0$,
\begin{equation}
\EE|B_t| = \EE|G_t| \leq D_t(\lambda) := \min\left\{t, d-t, \frac{C_\lambda}{1-\rho_\lambda} + \frac{1}{e^{\rho_*}-1}\right\}
\end{equation}
where $C_\lambda := (M - 1) \left( \frac{1 - \varepsilon_\gamma}{\varepsilon_\gamma p_{\text{min}}} \right)^\lambda$, $\rho_\star := 1-e^{-\gamma/(2\sigma)}$, and $\rho_\lambda$ is a contraction constant (see \S~\ref{technical_lemma:posterior-concentration}).
\end{lemma}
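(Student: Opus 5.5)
The argument has three parts: an exact counting identity, two trivial caps, and a probabilistic bound on mistakes that splits into two error sources.

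\emph{Counting identity and trivial caps.} Both policies select exactly $t$ views after $t$ steps, so $|\ttset_t| = |\oracleset_t| = t$. Hence
\[
|B_t| = |\ttset_t \setminus \oracleset_t| = t - |\ttset_t\cap\oracleset_t| = |G_t|
\]
pathwise, and $\EE|B_t| = \EE|G_t|$. For the same reason, $|B_t| \le t$, since $B_t \subseteq \ttset_t$. Also $|B_t| \le d-t$, since $B_t$ lies in the complement of $\oracleset_t$, which has $d-t$ elements. Taking expectations gives the first two terms of the minimum. What remains is the nontrivial third term.

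\emph{Source of mistakes.} Under the proxy objective, the oracle sorts the realized energies $\Eb_j$ and takes the top $t$. The test-time policy instead ranks the unobserved views by $\EE[\Eb_j \mid \Fcal_s]$. By Assumption~\ref{assumption:dgp}, $\log \Eb_j = u_j^{(m)} + 2\varepsilon_j$ given $Z=z_m$, so this conditional expectation is a posterior mixture over $m$ of $e^{u_j^{(m)}}$ times a common Gumbel moment factor (finite since $\sigma<1/4$). A view $\widehat{k}_s$ can end up in $B_t$ for only two reasons:
\begin{enumerate}[nosep]
\item the posterior over $Z$ has not yet concentrated on the true state $z_{m^\star}$, so the mixture ranking differs from the $u^{(m^\star)}$ ranking;
\item the posterior has concentrated, but the Gumbel noise inverts the realized order of two views relative to their utility order.
\end{enumerate}
I would bound the expected number of steps of each kind separately, which produces the two summands.

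\emph{Term from latent-state uncertainty.} Each acquisition reveals an $x_k$ whose likelihood ratio between states $m \ne m^\star$ is governed by $e^{2\beta_k(z_m - z_{m^\star})}$. This is bounded by $R_{\max}$, which gives a per-step multiplicative update for the posterior. If the state is misidentified, the wrong ordering must favor a view by a utility margin of at least $\gamma$. Combining the two, the probability of a wrong-state ranking at step $s$ is controlled by the posterior odds $\Prob[Z\ne z_{m^\star}\mid\Fcal_s]/\Prob[Z=z_{m^\star}\mid\Fcal_s]$.

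I would then use the posterior-concentration technical lemma (\S~\ref{technical_lemma:posterior-concentration}). Raising the odds to the power $\lambda$ and using Markov's inequality, it should give
\[
\EE\bigl[\text{odds}_s^\lambda\bigr] \le C_\lambda \rho_\lambda^{s},
\]
with the initial odds bounded by $(M-1)\bigl((1-\varepsilon_\gamma)/(\varepsilon_\gamma p_{\min})\bigr)^\lambda$. Summing this geometric series over $s$ gives $C_\lambda/(1-\rho_\lambda)$.

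\emph{Term from noise inversions.} Given the true state, the energy order is Plackett--Luce. The probability that a view with utility lower by $\ge r\gamma$ is ranked above the other is at most $e^{-r\gamma/(2\sigma)}$-type. Summing over the rank gaps $r \ge 1$ gives a geometric series of the form $\sum_{r\ge1} e^{-r\rho_\star}$, which equals $1/(e^{\rho_\star}-1)$.

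\emph{Main obstacle.} The hardest step is the third one: showing the posterior odds contract geometrically in expectation. The difficulty is that the policy chooses which coordinates to observe adaptively. I would handle this by working with the odds raised to the power $\lambda$ as a supermartingale, and bounding each one-step likelihood ratio uniformly over coordinates using $R_{\max}$ and $\varepsilon_\gamma$. This uniform bound is what makes the argument insensitive to the adaptive selection.
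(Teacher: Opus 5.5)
Your skeleton matches the paper's proof: the caps $t$ and $d-t$ from $|\ttset_t|=|\oracleset_t|=t$, then a split of each mistake into a posterior-failure part, summed as $\sum_s C_\lambda\rho_\lambda^s$, and a Plackett--Luce noise part. The genuine gap is in the noise term. Summing pairwise inversion probabilities over rank gaps $r$ controls, at a \emph{single} step, how many lower-utility views beat the pick. Nothing in your argument makes the step-$s$ failure probability shrink with $s$, so summing over the $t$ steps gives order $t$, not a constant.

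The missing idea is the inclusion $\{\widehat{k}_{s+1}\in B_t\}\subseteq\bad(s,t)$. If $\widehat{k}_{s+1}$ is among the realized top $t-s$ of the $d-s$ unobserved views, at most $(t-s-1)+s$ views beat it globally, so it lies in $\oracleset_t$. Hence a mismatch at step $s$ requires the pick to be beaten by at least $t-s$ remaining views. On the concentrated event the pick is the $u^{(m^\star)}$-argmax of the remaining set. Given $(\Fcal_s,Z=z_{m^\star})$, the remaining energies are Plackett--Luce with weights $\theta_j=e^{u_j^{(m^\star)}/(2\sigma)}$. By $\gamma$-separation, the top item holds at least a $1-e^{-\gamma/(2\sigma)}=\rho_\star$ share of the weight of every subset containing it. It therefore survives $t-s$ sequential draws with probability at most $(1-\rho_\star)^{t-s}\le e^{-(t-s)\rho_\star}$, and $\sum_{s=0}^{t-1}e^{-(t-s)\rho_\star}\le 1/(e^{\rho_\star}-1)$. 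That is where $\rho_\star$ comes from. Your switch from $e^{-r\gamma/(2\sigma)}$ to $e^{-r\rho_\star}$ is unexplained. The natural pairwise union bound (some remaining view at utility gap $\ge(t-s)\gamma$ beats the pick) sums to $e^{-a}/(1-e^{-a})^2$ with $a=\gamma/(2\sigma)$, which exceeds $1/(e^{\rho_\star}-1)$ by a factor of order $1/a$ when $a$ is small.

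The latent-state term also rests on the wrong mechanism. $R_{\max}$ bounds ratios of the mean factors $e^{2\beta_j z_m}$, not likelihood ratios. For two Gumbel locations, $f_{j,m}(X_j)/f_{j,m^\star}(X_j)$ is unbounded in $X_j$. Moreover, since both densities integrate to one, no pointwise bound on a one-step likelihood ratio can be below $1$, so such bounds can never give contraction. Your supermartingale idea is right, and it treats adaptivity more carefully than the paper: $\widehat{k}_r$ is $\Fcal_{r-1}$-measurable and the fresh coordinate is conditionally independent of the past given $Z$. But the per-step factor must be the Chernoff coefficient $\EE\bigl[(f_{j,m}(X_j)/f_{j,m^\star}(X_j))^\lambda\mid Z=z_{m^\star}\bigr]=\int f_{j,m^\star}^{1-\lambda}f_{j,m}^{\lambda}\le\rho_\lambda$. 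This is $<1$ only for $\lambda\in(0,1)$; for $\lambda\ge1$ Jensen gives $\ge1$, so the stated bound is vacuous there.

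The factor $((1-\varepsilon_\gamma)/\varepsilon_\gamma)^\lambda$ comes from applying Markov at the odds threshold $\varepsilon_\gamma/(1-\varepsilon_\gamma)$, not from the initial odds. What $C_\lambda\rho_\lambda^s$ bounds is $\Prob(\alpha_{m^\star}(\Fcal_s)<1-\varepsilon_\gamma\mid Z=z_{m^\star})$, not $\EE[\mathrm{odds}_s^\lambda]$.

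$R_{\max}$ and $\varepsilon_\gamma$ belong to a separate deterministic step that you only gesture at. When $\alpha_{m^\star}(\Fcal_s)\ge1-\varepsilon_\gamma$, each $\EE[\Eb_j\mid\Fcal_s]$ equals, up to a common constant, $e^{u_j^{(m^\star)}}$ times a factor in $[1-\varepsilon_\gamma,\,1-\varepsilon_\gamma+\varepsilon_\gamma R_{\max}]$. The choice of $\varepsilon_\gamma$ makes these endpoints differ by exactly $e^\gamma$. So, with strictness from the positivity assumption, the posterior mixture cannot overturn a $\gamma$-gap, and the test-time pick is the $u^{(m^\star)}$-argmax. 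This is what licenses the Plackett--Luce step above. Citing the posterior-concentration lemma as a black box closes this term, but the proof route you outline for it would not.
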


\paragraph{Proof outline.}
The analysis proceeds in three steps:
(i) we bound the number of oracle--test-time disagreements
(ii) this yields a proxy regret bound via the pathwise decomposition;
(iii) we relate proxy and true losses through a structural assumption on the embedding weights. For proofs, see \S~\ref{appendix:proofs}.

The regret on the proxy controls errors due to partial observability, but says nothing about the gap between the proxy and true losses. Bridging this gap requires a condition on the geometry of the embedding weights: if the normalized columns of $\Wm$ are approximately orthogonal, the cross-view interaction terms $\Cb_{jj'}$, that the proxy ignores, remain small.

\begin{assumption}[Near Isometry]
\label{asm:isometry}
     There exists constants $C$, $p_{\textnormal{eff}} > 0$ such that for any subset $J \subseteq \{1,\dots,d\}$ of size $m$, the matrix $\Bb_J = \big(\wb_u /\left\|\wb_u\right\|\big)_{u \in J}$ satisfies
    \begin{align*}
        \left\|\Bb_J^\top \Bb_J - \Ib_m\right\|_\textnormal{op} \leq C \Bigg(\sqrt{\frac{m}{p_\textnormal{eff}}} + \frac{m}{p_{\textnormal{eff}}}\Bigg). 
    \end{align*}
\end{assumption}

This condition enforces approximate orthogonality of embedding directions and is empirically satisfied by large-scale vision transformers such as the DINOv2 family~\citep{oquab2023dinov2} and by trained autoencoders across a range of dimensions (\S~\ref{appendix:viz_weights}). It ensures that the cross-view interaction terms ignored by the proxy objective remain controlled, so that the proxy loss remains aligned with the true latent reconstruction loss. We can now state the main result.
\begin{theorem}
\label{theorem:loss_bound}
    Take $p \in \big(1,\frac{1}{2\sigma}\big)$ and let $q = \frac{p}{p-1}$. Under Assumptions \ref{assumption:dgp} and \ref{asm:isometry}, for any $\lambda >0$, the latent regret satisfies
    \begin{align}
        \mathcal{R}_t \leq& \left(\sum_{i=1}^d M_{(i)} \min(t, d-i+1)\right) ^{1/p} D_t^{1/q} 
        +  C' \Bigg(\sqrt{\frac{N_t}{p_{\textnormal{eff}}}} + \frac{N_t}{p_{\textnormal{eff}}}\Bigg)  \sqrt{D_tN_t},
\end{align}
where $M_j := \EE[\Eb_j^p]$ and $M_{(1)} \ge \cdots \ge M_{(d)}$ be the sorted $p$-th moments, and $N_t = d -t + D_t$.
\end{theorem}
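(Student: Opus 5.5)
The plan is to split the true loss into a diagonal (energy) part and a cross-term part, and bound the regret contribution of each separately. For any set $\Pb$, write
\[
\trueloss(\Pb,\xb)=\sum_{j\notin\Pb}\Eb_j+\sum_{\substack{j\neq j'\\ j,j'\notin\Pb}}\Cb_{jj'}.
\]
The diagonal part is the pathwise version of the proxy loss. Then
\[
\mathcal{R}_t=\EE\Big[\sum_{j\in G_t}\Eb_j-\sum_{j\in B_t}\Eb_j\Big]\cdot(-1)+\EE[\text{cross}(\ttset_t)-\text{cross}(\oracleset_t)],
\]
since the unselected sets of the two policies differ exactly by $B_t$ and $G_t$. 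So the diagonal regret equals $\EE[\sum_{j\in G_t}\Eb_j-\sum_{j\in B_t}\Eb_j]\le \EE[\sum_{j\in G_t}\Eb_j]$, because energies are nonnegative. Equivalently, one could use Lemma~\ref{lemma:diff_proxy_loss} and bound $\Rb_s$ by the energies of the mismatched views.

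\textbf{Diagonal term via H\"older.} I will write $\EE\sum_j \Eb_j\bm 1_{\{j\in G_t\}}$ and apply H\"older with exponents $(p,q)$:
\[
\EE\sum_j \Eb_j\bm 1_{\{j\in G_t\}}\le\Big(\EE\sum_j \Eb_j^p\bm 1_{\{j\in G_t\}}\Big)^{1/p}\Big(\EE|G_t|\Big)^{1/q}.
\]
The second factor is at most $D_t^{1/q}$ by the mismatch lemma. For the first factor I need $\sum_j M_j\,\Prob(\cdot)$-type weights. I will bound $\EE[\Eb_j^p\bm 1_{\{j\in G_t\}}]\le M_j$ and use that at most $\min(t,d-i+1)$ indices can contribute. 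Concretely, the worst case places the largest moments on indices that can appear at most $t$ times in $G_t$, which yields a bound of the form $\sum_i M_{(i)}\min(t,d-i+1)$. The restriction $p<1/(2\sigma)$ is exactly what makes $M_j=\EE[\Eb_j^p]$ finite: $\Eb_j\propto x_j^2=e^{2\log x_j}$, and Gumbel tails give $\EE e^{2p\varepsilon}<\infty$ iff $2p\sigma<1$.

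\textbf{Cross term via near isometry.} Let $U=\ttset_t^c$ and $V=(\oracleset_t)^c$. Both have size $d-t$, and their symmetric difference is $B_t\cup G_t$. Set $\bm v_j=\|\wb_j\|x_j$, so that the cross part over a set $S$ is $\bm v_S^\top(\Bb_S^\top\Bb_S-\Ib)\bm v_S$. The difference of cross parts over $U$ and $V$ involves only pairs with at least one index in $B_t\cup G_t$, all inside $J=U\cup V$, where $|J|=d-t+|B_t|$. I will bound this by $2\|\bm v_{B_t\cup G_t}\|\,\|\bm v_J\|\,\|\Bb_J^\top\Bb_J-\Ib\|_{\mathrm{op}}$ and apply Assumption~\ref{asm:isometry} with $m=|J|$. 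After taking expectations and using Cauchy–Schwarz, replacing $|B_t|$ by its mean produces $N_t=d-t+D_t$ and the factor $\sqrt{D_tN_t}$. I will also use monotonicity and concavity of $m\mapsto\sqrt{m}$ (Jensen) to move the expectation inside. The energy second moments are absorbed into $C'$.

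\textbf{Main obstacle.} The hardest step is the cross term: $|B_t|$ is random and correlated with the $x_j$, so pulling $\EE|B_t|$ out is not automatic. I would first try a crude bound: take $m\le d-t+|B_t|$ inside the operator norm, bound $\|\bm v_J\|^2\le \sum_j \Eb_j$, and use Cauchy–Schwarz $\EE[\sqrt{|B_t|}\,\cdot]\le\sqrt{\EE|B_t|}\,(\cdot)$. Any residual moment factor would go into $C'$.
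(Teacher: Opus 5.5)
Your decomposition $\mathcal R_t=\widetilde{\mathcal R}_t+\EE[\Mcal(\ttset_t)-\Mcal(\oracleset_t)]$ is the paper's, but you bound the proxy part by a different and simpler route.

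\textbf{Proxy part.} The paper combines Lemma~\ref{lemma:diff_proxy_loss}, which uses optimality of the test-time policy to replace energy gaps by the spreads $\Rb_s$, with the inclusion $\{\widehat k_s\in B_t\}\subseteq\bad(s,t)$. It then applies the moment bound of Lemma~\ref{tech_lem_1} and a second H\"older inequality over $s$. The weights $\min(t,d-i+1)$ in $A_t$ come from that sum over time. You instead drop the $B_t$ energies and apply H\"older once on $\Omega\times[d]$. This needs only $M_j<\infty$ and $\EE|G_t|\le D_t$.

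Your explanation of the weights is muddled, since an index lies in $G_t$ at most once. You do not need them: for $t\ge1$ every weight is at least $1$, so $\sum_j\EE[\Eb_j^p\mathbf 1_{\{j\in G_t\}}]\le\sum_jM_j\le A_t$. Your route therefore gives the first term with constant exactly $1$, and even with $\sum_jM_j$ in place of $A_t$. The paper's chain, carried out carefully, carries a factor $4$ from Lemma~\ref{tech_lem_1} that its final display drops. What the paper's decomposition buys is an interpretation: regret comes only from mistake events, weighted by estimation error. That refinement does not survive into the final bound, because $\Rb_s$ is then bounded by raw energy moments.

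\textbf{Cross term.} Here you reproduce Lemma~\ref{tech_lemma_miss} with $J=(\ttset_t\cap\oracleset_t)^c$ and $|J|=d-t+|B_t|$. The weak spot is the one you flag. Let $f(m):=\sqrt{m/p_{\mathrm{eff}}}+m/p_{\mathrm{eff}}$. Moving the expectation inside $f$ by Jensen is not legitimate, because $f(|J|)$ multiplies energy factors that are correlated with $B_t$. The paper's proof has the same hole: it factorizes the expectation of a triple product and tacitly treats $\EE\sum_{j\in J}\Eb_j$ and $\EE\sum_{j\in B_t\cup G_t}\Eb_j$ as $N_t$ and $D_t$.

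The missing ingredient is a deterministic cap rather than Jensen. Since $|B_t|\le d-t$ pathwise, $|J|\le 2N_t$ and hence $f(|J|)\le 2f(N_t)$. Combine this with three facts:
\begin{itemize}
\item your crude bound $\|\bm v_{B_t\cup G_t}\|\,\|\bm v_J\|\le S:=\sum_{j=1}^d\Eb_j$;
\item the cross difference vanishes unless $|B_t|\ge1$;
\item $\mathbb{P}(|B_t|\ge1)\le\EE|B_t|\le D_t$, and $N_t\ge1$ for $t<d$ (both sides vanish at $t=d$).
\end{itemize}
Cauchy--Schwarz then gives
\[
\EE\big[\Mcal(\ttset_t)-\Mcal(\oracleset_t)\big]\le 4Cf(N_t)\,\EE\big[S\,\mathbf{1}_{\{|B_t|\ge 1\}}\big]\le 4C\|S\|_{L^2}f(N_t)\sqrt{D_t}\le 4C\|S\|_{L^2}f(N_t)\sqrt{D_tN_t}.
\]
Here $\|S\|_{L^2}<\infty$ precisely because $\sigma<1/4$, since $\EE[e^{4\varepsilon_j}]=\Gamma(1-4\sigma)$. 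Your ``energy moments absorbed into $C'$'' thus becomes rigorous with $C'=4C\|S\|_{L^2}$. This constant grows with $d$, so the $\sqrt{N_t}$ factor is cosmetic in this argument. Neither your argument nor the paper's yields a dimension-free $C'$.
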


\begin{wrapfigure}{r}{0.4\textwidth}
    \vspace{-15pt}
    \centering
   \includegraphics[width=0.8\linewidth]{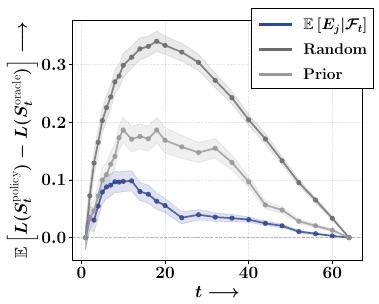}
    \caption{\small Expected true loss gap curve between the oracle and the test-time optimal, random, and prior-based ($\mathbb{E}[\Eb_j]$) policies.}
    \label{fig:croissant}
    \vspace{-30pt}
\end{wrapfigure}
The bound separates two sources of error. 
The first term is an \emph{oracle-mismatch term}:
it captures the cost of partial observability through the disagreement envelope $D_t$. 
This term decreases as the utility gap $\gamma$ increases, reflecting that well-separated energy rankings are easier to infer from partial observations. 
The second term is a \emph{spectral misalignment term}: it controls the error introduced by replacing the true latent loss with the marginal-energy proxy. Consequently, the bound is boundary-correct: since $D_t=0$ at $t=0$ and $t=d$, both terms vanish at the endpoints, matching the reverse-U regret profile observed empirically in Fig.~\ref{fig:croissant}.

\def\trueloss{\mathcal{L}}

\section{Methodology}
\label{sec:method}

\subsection{From Theoretical Test-Time Policies to Learnable Policies}

The theoretical analysis in \S~\ref{section:theory} yields three insights that directly inform algorithm design. First, Theorem~\ref{theorem:loss_bound} 
shows that representation-recovery errors control latent regret up to an oracle-mismatch term and a spectral misalignment term, which becomes small when embedding directions are approximately orthogonal. 
This motivates using progress toward the full-view representation as a self-supervised acquisition signal. Second, the posterior concentration result (Lemma~\ref{technical_lemma:posterior-concentration}) shows that conditioning on observed views can reduce the gap to a full-information oracle, motivating adaptive, instance-specific policies that update after each acquisition. Third, the analysis is restricted to \emph{myopic} policies: the oracle itself is one-step greedy, and the regret bound characterizes the gap between two greedy strategies. In practice, the conditional law of unobserved views given $\Fcal_t$ is unknown, and acquisition is a finite-horizon decision problem in which early choices affect the value of later measurements, especially when views are correlated \citep{shim2018joint,zannone2019odin,kachuee2019opportunistic,schutz2025afabench}. 
This motivates moving beyond the greedy regime to long-horizon planning.


Together, these insights culminate in \ours, a reinforcement-learning algorithm for self-supervised, long-horizon view acquisition. We use the theory to define a reward, and learn a parameterized policy that optimizes cumulative improvement in representation quality over an entire acquisition trajectory, rather than approximating the one-step oracle.
Formally, we aim to learn a parameterized policy $\widehat{\pi}_{\bm{\theta}}$ that maximizes cumulative discounted improvement starting from any step $s$
\begin{align*}
    \max\limits_{\bm{\theta} \in \bm{\Theta}} \,\,\sum\limits_{t=s}^d \alpha^{t-s} \EE \big[R_t\Big(\pi_{\bm{\theta}}(\bm{P}_{t-1})\big)\Big],
\end{align*}
where $R_t$ denotes the incremental improvement in latent representation quality at step $t$, $\alpha \in (0,1]$ is a discount factor that allows the policy to trade off immediate gains against future benefits. The self-supervised reward requires a reference representation against which partial representations are compared. We consider two training regimes. In the \emph{complete-reference} setting, each training instance is fully observed, so the reference is the full-view representation $\zb_i^{\star}=\zb_i(\intset{1}{K})$. In the \emph{partially observed-reference} setting, training instance $i$ contains only a subset of available views $R_i\subseteq \intset{1}{K}$; in this case, we use $\zb_i^{\mathrm{ref}}=\zb_i(R_i)$ as a noisy but label-free reference target. 
\subsection{Learning Representation-Driven Policies}
\looseness -1 
\xhdr{Reward.} 
Given the reference representation $\zb_i^{\mathrm{ref}}$, we define the representation error for instance $i$ as $\ell_i(S) := \|\zb_i^{\mathrm{ref}}-\zb_i(S)\|_2 $. 
The step-wise reward is the normalized improvement in the error,
\begin{equation}
\label{eq:echok_reward}R_t(\widehat{\pi}_{\bm{\theta}})
:=
\omega_t
\frac{1}{N}
\sum_{i=1}^N
\frac{
\ell_i(\Pb_{t-1}^i)
-
\ell_i(\Pb_{t-1}^i\cup\{k_t^i\})
}{
\ell_i(\varnothing)+\epsilon
},
\end{equation}

where $N$ is the minibatch size, and $k_t^i\sim \widehat{\pi}_{\bm{\theta}}(\cdot\mid s_{t-1}^i)$ is the view selected for instance $i$ at step $t$. We use the unsquared norm rather than the squared loss analyzed in \S~\ref{section:theory}, as this provides more stable reward scaling in practice. The reward~\ref{eq:echok_reward} retains the same structure as the representation-recovery objective studied theoretically: it measures marginal progress toward a reference latent representation without using downstream labels. The factor $\omega_t$ optionally upweights early acquisition steps, where marginal gains are typically largest, and normalization prevents high-error instances from dominating the learning signal. Moreover, when views have nonuniform acquisition costs $c_k$, we use a cost-aware reward of the form $r_t^{\mathrm{cost}} = r_t - \lambda_c \frac{1}{N} \sum_{i=1}^N c_{k_t^i}$ with costs normalized across views.

\xhdr{Policy input.} At each step, the policy receives the current partial representation $\zb_i(\Pb_t^i)$, the observed view values with unobserved entries masked, the acquisition mask, and the remaining budget. In heterogeneous-cost experiments, the state is additionally augmented with the view-cost vector and remaining cost budget; the full state specification is given in \S~\ref{appendix:implementation}.  

\paragraph{Optimization.}
We optimize $\widehat{\pi}_{\bm{\theta}}$ using Proximal Policy Optimization (PPO) \citep{schulman2017proximal}, with invalid action masking~\citep{huang2020closer} to prevent re-selection of already acquired views. During both training and deployment, the policy has access only to the frozen representation model and partial observations induced by the current acquisition set.

\section{Experimental Results}
\label{section:exp}
We evaluate \ours across controlled synthetic experiments, vision, and tabular encoders, and three frozen foundation-model settings: single-cell transcriptomics with TABULA~\citep{dingtabula}, spatial proteomics with VirTues~\citep{wenckstern2025ai}, and multimodal geospatial sensing with a self-supervised multimodal encoder~\citep{gordon2026mmearth}. 
The experiments are designed to answer four questions: (i) whether representation-driven acquisition improves downstream performance under limited budgets; (ii) whether the same label-free acquisition principle composes with heterogeneous pretrained backbones; (iii) whether learned acquisition policies transfer to downstream tasks unseen during policy learning; and (iv) whether the method remains effective under realistic deployment constraints, including partially observed reference data and heterogeneous acquisition costs.
We discuss the results for foundation models, grouped by application domain, here. 
Additional controlled benchmarks on synthetic data, and vision datasets such as MNIST, Fashion-MNIST, and MiniBooNE, a physics benchmark, along with incorporating costs and adaptive stopping, are shown in \S~\ref{appendix:additional_experiments}. 

\xhdr{Evaluation.} Across all experiments, the representation model is frozen and receives partially observed inputs. \ours learns an acquisition policy using only self-supervised rewards derived from progress toward a reference latent representation. In the complete-reference setting, this reference is the full-view embedding; in the partially observed-reference setting, it is computed from the subset of views available for each training instance. Downstream labels are never used for policy learning and are reserved exclusively for post-hoc evaluation. At deployment, the policy observes only the values of the views acquired so far and the acquisition mask, and never accesses the full-view representation or downstream labels. To measure downstream performance, we train a single linear probe per downstream task on partial embeddings using random masking augmentation, and keep it fixed across acquisition budgets. The same probe is applied to the partial embeddings induced by every acquisition method, so that differences in performance reflect only which views were acquired. Moreover, a single acquisition policy is learned per backbone and reused across all budgets and downstream tasks, including tasks that were not specified during policy learning. All results are averaged over three random seeds (\S~\ref{appendix:evaluation}).

\xhdr{Baselines and metrics.} We compare against 12~baselines spanning oracle/random references, model-driven heuristics (\texttt{GIR}, ESM2-based orderings), and unsupervised feature selection methods (\texttt{Laplacian Score}, \textsc{spec}-$\varphi_3$, \texttt{MCFS}, \texttt{UDFS}, \texttt{SVD-Ent}, \texttt{Variance}) adapted as acquisition orders (see \S~\ref{appendix:baselines} for details). The \texttt{Oracle} greedily selects the view that maximally reduces the distance to the full-view embedding using unobserved values; it is not deployable at test time and serves as an upper bound on greedy acquisition. Domain-specific baselines are evaluated only where they are defined, e.g., ESM2-based orderings require protein-sequence embeddings and are therefore restricted to protein markers. Since most of these baselines induce a fixed, instance-independent acquisition order, we additionally compare against label-free instance-adaptive policies (\S~\ref{subsec:adaptive}) and supervised acquisition methods that use downstream labels (\S~\ref{appendix:supervised}). For regression, we report $R^2$ or MSE, and for other real-world tasks, classification accuracy as a function of the acquisition budget $t \in \intset{1}{d}$, reported as a fraction of the total number of views. To directly assess the training objective, we further report the normalized representation recovery (\S~\ref{subsec:recovery}). For full evaluation details, refer to \S~\ref{appendix:exp_details}.

\begin{figure*}[t]
\centering
\includegraphics[width=\linewidth]{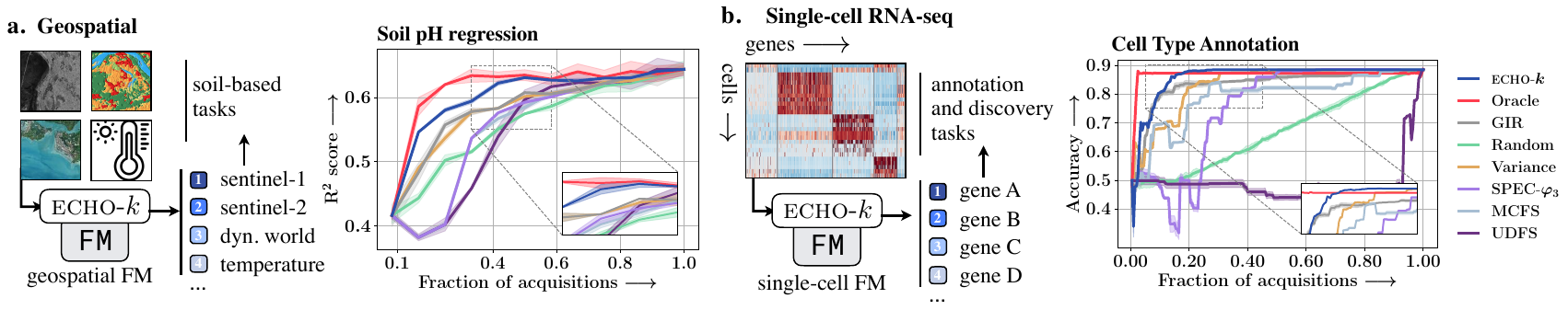}
\caption{ \textbf{a.} Results of \ours on acquiring geospatial modalities to predict soil pH regression on MMEarth-Bench ($d=12$) \textbf{b.} \ours can operate on single-cell RNA-seq foundation models to sequentially acquire informative genes that support broad set of downstream tasks. We further present comparison of \ours to baselines, evaluated on a cell type annotation task on \hp (with $d=200$).  }
\label{fig:scrnaseq}
\end{figure*}

\subsection{Evaluation on Foundation Model Backends}
\xhdr{Single-cell foundation models. }
\looseness -1 In single-cell RNA-seq, 
selecting the \emph{right} genes is a critical bottleneck, and the same panel must often support multiple analyses such as cell type annotation, trajectory inference, or gene program discovery that may not be specified at design time (Fig.~\ref{fig:scrnaseq}a). We pair \ours with TABULA~\citep{dingtabula}, a FM for single-cell transcriptomics (\S~\ref{appendix:architectures}).

\looseness -1 \emph{Results.}
We evaluate on \hp, a curated single-cell transcriptomics (RNA-seq) collection from human pancreatic tissue~\citep{dingtabula}, retaining the top 200 highly variable genes across 14 cell types. We use cell type annotation as a representative downstream task; it is the standard evaluation objective in single-cell analysis and requires the gene subset to preserve fine-grained phenotypic distinctions (\S~\ref{appendix:datasets}). \ours underperforms slightly during the first $\sim$2--3\% of acquisitions but rapidly recovers and consistently outperforms \texttt{Variance}, \texttt{GIR}, and \texttt{Random} thereafter (Fig.~\ref{fig:scrnaseq}b). \texttt{Random} performs poorly throughout, reflecting the sparsity of informative genes in high-dimensional expression data.

\begin{figure*}[t]
  \includegraphics[width=\textwidth]{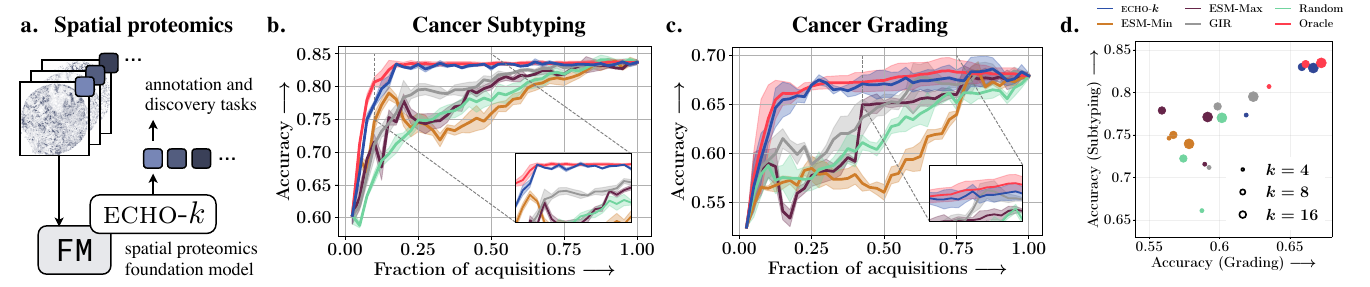}
  \caption{\textbf{Results of \ours\ on the NSCLC dataset. }\textbf{a.} Overview of the experimental setting: given high-dimensional multiplex imaging data, we study how foundation models trained on such data can be leveraged to sequentially acquire informative markers for downstream annotation and discovery tasks.
\textbf{b.} Comparison of \ours\ with baseline methods on the cancer subtyping task.
\textbf{c.} Comparison of \ours\ with baseline methods on the cancer grading task.
\textbf{d.} Scatter plot illustrating the accuracy trade-off between cancer subtyping and grading when using the same set of views.}
  \label{fig:cords}
\end{figure*}

\xhdr{Spatial proteomics foundation models. }
\label{subsec:spatial}

In spatial proteomics, multiplexed protein imaging characterizes tissue through successive rounds of antibody staining, each revealing a new marker and adding to assay cost, imaging time, and tissue degradation risk. Panel design must therefore support multiple diagnostic objectives simultaneously (e.g., cancer subtyping \emph{and} grading), with different patients potentially benefiting from different markers (Fig.~\ref{fig:cords}a). We compose \ours with the VirTues foundation model~\citep{wenckstern2025ai}, which embeds multiplex imaging data into tissue-level representations. At deployment, the policy mirrors the iterative staining workflow: selecting which marker to acquire next, updating its partial representation, and using the updated embedding to inform the next selection (\S~\ref{appendix:implementation}).

\emph{Results.}
We evaluate on a non-small cell lung cancer (NSCLC) patient cohort~\citep{cords2024cancer} profiled by imaging mass cytometry with 40 protein markers spanning tumor, immune, stromal, and vascular compartments. We consider two clinically relevant downstream tasks: cancer subtyping (2 classes) and cancer grading (3 classes), the latter being substantially more challenging (\S~\ref{appendix:datasets}). \ours consistently outperforms baselines on both tasks (Figs.~\ref{fig:cords}b,c). The scatter plot in Fig.~\ref{fig:cords}d shows that \ours achieves favorable accuracy on \emph{both} tasks under identical acquisition orders, demonstrating task-agnostic panel design. A qualitative analysis (\S~\ref{appendix:marker_analysis}) reveals that \ours prioritizes clinically established markers such as \texttt{panCK}, \texttt{Ki-67}, and  \texttt{CD4} early in the process, consistent with domain knowledge~\citep{martin2004ki,su2006role}, despite receiving no supervision. Acquisition orders differ substantially across samples, with mean pairwise Kendall’s $\tau=0.06$. Across training seeds, orders for the same sample show greater agreement: mean $\tau=0.512$ and 61.6\% top-five overlap, compared with 12.8\% under random ordering.

\xhdr{Multimodal geospatial foundation models. } 
Earth observation integrates heterogeneous sensing products, from optical and radar imagery to terrain, land-cover, canopy-height, meteorological, and geolocation-derived variables. We evaluate \ours on MMEarth-Bench~\citep{gordon2026mmearth}, a globally distributed benchmark of 12 aligned geospatial modalities, treating each modality as a separately acquirable view.

\emph{Results.}
We evaluate on soil pH regression as an unseen downstream task, reporting $R^2$
 as a function of the number of modalities. \ours outperforms non-oracle baselines consistently under limited modality budgets, reaching strong predictive performance after acquiring only 20-30\% of the views (Fig.~\ref{fig:scrnaseq}b).  We also notice that \texttt{GIR} and Variance-based baselines are the second-best performers.

\subsection{Robustness to Partially Observed Reference Data}
In Table~\ref{tab:incomplete_data}, we evaluate whether \ours remains effective when trained from incomplete views by randomly masking a fraction $m$ of views before computing the reference embedding, and train the policy against this partially observed target. For single-cell transcriptomics FMs, performance is nearly unchanged for $m=$10\%, remains strong for $m=$20\%, and still recovers high accuracy under heavier missingness once moderate acquisition budgets are allowed. For proteomics FMs, performance degrades more gradually with $m$, but partially observed-reference policies remain competitive with strong baselines across budgets. We also notice that for high budgets, \texttt{GIR} is competitive with \ours. Overall, \ours degrades smoothly rather than collapsing, suggesting that useful acquisition policies can be learned from incomplete reference representations.

\begin{table}[t]
\centering
\setlength{\aboverulesep}{0pt}
\setlength{\belowrulesep}{0pt}
\caption{\small \textbf{Results of robustness to partially observed reference data.} Performance of \ours when an m-fraction of views is randomly masked while computing the training reference representation. Accuracy is reported at different acquisition budgets on hPancreas and NSCLC grading. Lower $m$ corresponds to more complete reference data. \legendbox{blue!40} denotes best and \legendbox{pink!40} second best performance, apart from the Oracle.}
\label{tab:results}
\adjustbox{max width=\textwidth}{
\begin{tabular}{lccccc@{\hspace{\tabcolsep}\hspace{\tabcolsep}}lcccc}
\toprule
\multirow{2}{*}{\textbf{Method}} & \multicolumn{4}{c}{\textbf{hPancreas}} &  & \multirow{2}{*}{\textbf{Method}} & \multicolumn{4}{c}{\textbf{NSCLC (Grading)}} \\
\arrayrulecolor{lightgray} \cmidrule{2-5} \cmidrule{8-11} \arrayrulecolor{black}
 & \textbf{20\%} & \textbf{40\%} & \textbf{60\%} & \textbf{80\%} &  &  & \textbf{20\%} & \textbf{40\%} & \textbf{60\%} & \textbf{80\%} \\ \midrule
Oracle & $0.872 {\scriptstyle \pm 0.002}$ & $0.872 {\scriptstyle \pm 0.001}$ & $0.872 {\scriptstyle \pm 0.001}$ & $0.872 {\scriptstyle \pm 0.001}$ &  & Oracle & $0.665 {\scriptstyle \pm 0.006}$ & $0.672 {\scriptstyle \pm 0.010}$ & $0.680 {\scriptstyle \pm 0.012}$ & $0.681 {\scriptstyle \pm 0.010}$ \\ \arrayrulecolor{lightgray} \hline \arrayrulecolor{black}
Var & $0.679 {\scriptstyle \pm 0.004}$ & $0.823 {\scriptstyle \pm 0.004}$ & $0.873 {\scriptstyle \pm 0.001}$ & $0.874 {\scriptstyle \pm 0.001}$ &  & ESM-Max & $0.564 {\scriptstyle \pm 0.005}$ & $0.649 {\scriptstyle \pm 0.004}$ & $0.655 {\scriptstyle \pm 0.007}$ & $0.676 {\scriptstyle \pm 0.008}$ \\
$\textsc{SPEC}$-$\varphi_3$ & $0.476 {\scriptstyle \pm 0.009}$ & $0.421 {\scriptstyle \pm 0.012}$ & $0.696 {\scriptstyle \pm 0.001}$ & $0.819 {\scriptstyle \pm 0.004}$ &  & ESM-Min & $0.582 {\scriptstyle \pm 0.005}$ & $0.569 {\scriptstyle \pm 0.008}$ & $0.597 {\scriptstyle \pm 0.015}$ & \best{$0.684 {\scriptstyle \pm 0.009}$} \\
MCFS & $0.694 {\scriptstyle \pm 0.000}$ & $0.762 {\scriptstyle \pm 0.004}$ & $0.840 {\scriptstyle \pm 0.006}$ & $0.813 {\scriptstyle \pm 0.009}$ &  & \texttt{GIR} & $0.607 {\scriptstyle \pm 0.004}$ & $0.636 {\scriptstyle \pm 0.019}$ & \best{$0.678 {\scriptstyle \pm 0.013}$} & \secbest{$0.683 {\scriptstyle \pm 0.009}$} \\
Random & $0.500 {\scriptstyle \pm 0.007}$ & $0.516 {\scriptstyle \pm 0.010}$ & $0.566 {\scriptstyle \pm 0.005}$ & $0.608 {\scriptstyle \pm 0.013}$ &  & Random & $0.573 {\scriptstyle \pm 0.012}$ & $0.611 {\scriptstyle \pm 0.008}$ & $0.640 {\scriptstyle \pm 0.011}$ & $0.661 {\scriptstyle \pm 0.005}$ \\ \arrayrulecolor{lightgray} \hline \arrayrulecolor{black}
\ours & \secbest{$0.808 {\scriptstyle \pm 0.003}$} & \best{$0.880 {\scriptstyle \pm 0.001}$} & \best{$0.884 {\scriptstyle \pm 0.002}$} & \best{$0.884 {\scriptstyle \pm 0.002}$} &  & \ours & \best{$0.658 {\scriptstyle \pm 0.006}$} & \best{$0.671 {\scriptstyle \pm 0.009}$} & \secbest{$0.674 {\scriptstyle \pm 0.009}$} & \secbest{$0.683 {\scriptstyle \pm 0.016}$} \\
\ours ($m=$10\%) & \best{$0.812 {\scriptstyle \pm 0.003}$} & \best{$0.880 {\scriptstyle \pm 0.002}$} & \best{$0.884 {\scriptstyle \pm 0.003}$} & \best{$0.884 {\scriptstyle \pm 0.002}$} &  & \ours ($m=$10\%) & \secbest{$0.655 {\scriptstyle \pm 0.015}$} & \best{$0.671 {\scriptstyle \pm 0.014}$} & $0.669 {\scriptstyle \pm 0.013}$ & $0.678 {\scriptstyle \pm 0.012}$ \\
\ours ($m=$20\%) & \secbest{$0.808 {\scriptstyle \pm 0.003}$} & $\secbest{0.871 {\scriptstyle \pm 0.002}}$ & \secbest{$0.883 {\scriptstyle \pm 0.002}$} & \secbest{$0.883 {\scriptstyle \pm 0.002}$} &  & \ours ($m=$20\%) & $0.641 {\scriptstyle \pm 0.009}$ & \secbest{$0.658 {\scriptstyle \pm 0.009}$} & $0.660 {\scriptstyle \pm 0.010}$ & $0.672 {\scriptstyle \pm 0.010}$ \\
\ours ($m=$30\%) & $0.783 {\scriptstyle \pm 0.004}$ & $0.865 {\scriptstyle \pm 0.004}$ & $0.882 {\scriptstyle \pm 0.002}$ & $0.882 {\scriptstyle \pm 0.002}$ &  & \ours ($m=$30\%) & $0.615 {\scriptstyle \pm 0.007}$ & $0.653 {\scriptstyle \pm 0.011}$ & $0.656 {\scriptstyle \pm 0.008}$ & $0.671 {\scriptstyle \pm 0.018}$ \\
\ours ($m=$40\%) & $0.751 {\scriptstyle \pm 0.005}$ & $0.851 {\scriptstyle \pm 0.006}$ & $0.864 {\scriptstyle \pm 0.004}$ & $0.871 {\scriptstyle \pm 0.003}$ &  & \ours ($m=$40\%) & $0.609 {\scriptstyle \pm 0.010}$ & $0.636 {\scriptstyle \pm 0.011}$ & $0.640 {\scriptstyle \pm 0.007}$ & $0.664 {\scriptstyle \pm 0.009}$ \\
\ours ($m=$50\%) & $0.733 {\scriptstyle \pm 0.007}$ & $0.832 {\scriptstyle \pm 0.005}$ & $0.861 {\scriptstyle \pm 0.004}$ & $0.872 {\scriptstyle \pm 0.002}$ &  & \ours ($m=$50\%) & $0.595 {\scriptstyle \pm 0.012}$ & $0.622 {\scriptstyle \pm 0.008}$ & $0.636 {\scriptstyle \pm 0.007}$ & $0.669 {\scriptstyle \pm 0.015}$ \\
\bottomrule
\end{tabular}
}
\label{tab:incomplete_data}

\end{table}

\subsection{Comparison with Adaptive Acquisition}
\label{subsec:adaptive}
\begin{wraptable}{r}{0.4\textwidth}
\vspace{-5mm}
\centering
\setlength{\aboverulesep}{0pt}
\setlength{\belowrulesep}{0pt}
\caption{\small \textbf{Comparison with unsupervised adaptive acquisition on \minib.} Accuracy (\%) at different acquisition budgets. \legendbox{blue!40} denotes best and \legendbox{pink!40} second best performance.}
\label{tab:adaptive}
\adjustbox{max width=\linewidth}{
\begin{tabular}{lccc}
\toprule
\textbf{Method} & \textbf{20\%} & \textbf{40\%} & \textbf{60\%} \\ \midrule
CIG & \secbest{$79.80$} & \secbest{$84.87$} & $86.17$ \\
ALG & $77.50$ & $83.60$ & $86.12$ \\
GSMRL-AIR & $79.18$ & $84.78$ & \secbest{$86.40$} \\ \arrayrulecolor{lightgray} \hline \arrayrulecolor{black}
\ours & \best{$81.70$} & \best{$86.87$} & \best{$87.60$} \\
\bottomrule
\end{tabular}
}
\vspace{-4mm}
\end{wraptable}
Beyond fixed acquisition orderings, we compare \ours with label-free, instance-adaptive policies under the same information setting: GSMRL-AIR~\citep{li2021active}, based on conditional generative imputations; Adaptive Latent Gain (ALG), which predicts one-step representation gain; and Cluster Information Gain (CIG), which estimates view-wise information gain under a Gaussian mixture model. \ours outperforms all three across every budget on \minib (Table~\ref{tab:adaptive}) and \hp(\S\ref{appendix:adaptive_baselines}). Notably, ALG shares \ours’s state and reward but optimizes only one-step gain, highlighting the benefit of long-horizon acquisition. Finally, \ours remains competitive with supervised acquisition policies that have access to downstream labels (\S~\ref{appendix:supervised}), quantifying the cost of task agnosticism.

\subsection{Comparison of representation-recovery}
\vspace{-5pt}
\label{subsec:recovery}
\begin{wraptable}{r}{0.42\textwidth}
\vspace{-5mm}
\centering
\setlength{\aboverulesep}{0pt}
\setlength{\belowrulesep}{0pt}
\caption{\small \textbf{Results of representation recovery.} Comparison of $\rho_t$ of \ours and \texttt{Variance} at different budgets. Higher $\rho_t$ denotes better recovery. \legendbox{blue!40} denotes best performance.}
\label{tab:recovery}
\adjustbox{max width=\linewidth}{
\begin{tabular}{llcccc}
\toprule
\textbf{Dataset} & \textbf{Method} & \textbf{20\%} & \textbf{40\%} & \textbf{60\%} & \textbf{80\%} \\ \midrule
\multirow{2}{*}{\hp} & Var & $0.38$ & $0.53$ & $0.65$ & $0.84$ \\
 & \ours & \best{$0.41$} & \best{$0.58$} & \best{$0.70$} & \best{$0.89$} \\ \arrayrulecolor{lightgray} \hline \arrayrulecolor{black}
\multirow{2}{*}{\minib} & Var & $0.35$ & $0.54$ & $0.60$ & $0.73$ \\
 & \ours & \best{$0.41$} & \best{$0.60$} & \best{$0.71$} & \best{$0.80$} \\
\bottomrule
\end{tabular}
}
\end{wraptable}

While our main results focus on downstream performance, we also directly evaluate the representation-recovery objective optimized by \ours (Eq.~\ref{eq:echok_reward}). For each instance $i$, we define normalized recovery after $t$ acquisitions as
\begin{equation*}
\rho_{i,t} := 1 - \frac{\|\zb_i(\Pb_t^i) - \zb_i^{\star}\|_2}{\|\zb_i(\varnothing) - \zb_i^{\star}\|_2},
\end{equation*}
We report $\rho_t$ averaged over test instances. Table~\ref{tab:recovery} shows that \ours achieves higher recovery than \texttt{Variance}, the strongest competing baseline on both \hp and \minib, across all budgets. Moreover, recovery and downstream accuracy exhibit the same ordering (Fig.~\ref{fig:scrnaseq}b; Fig.~\ref{fig:bigfig}): better preservation of the full-view representation corresponds to better downstream performance. This provides empirical support for the connection established in Theorem~\ref{theorem:loss_bound} between representation recovery and downstream error.

\vspace{-5pt}
\section{Conclusion}
\vspace{-5pt}
We presented \textsc{echo}-$k$, a task-agnostic and label-free framework for sequential test-time view acquisition that uses a pretrained model’s latent representation as a self-supervised proxy: under a cost budget, acquire views so that the induced partial representation rapidly approaches the full-view representation. We support this formulation with a theoretical analysis in a linear setting that characterizes the gap between oracle and feasible test-time policies, and we introduce a practical cost-aware RL algorithm that learns long-horizon acquisition strategies. Experiments with controlled synthetic data and multimodal foundation models across single-cell transcriptomics, spatial proteomics, and geospatial sensing show improved budgeted acquisition, provide evidence for transfer to downstream tasks unseen during policy learning, robustness to partially observed reference data, and extensibility to heterogeneous acquisition costs.
The broader implication is that foundation models need not remain passive predictors over pre-collected inputs; they can become resource-aware systems that actively shape the evidence available for inference.

\section*{Acknowledgments}
This work was supported by the Swiss AI Initiative through a Swiss National Supercomputing Centre (CSCS) allocation under the Swiss AI Large Grants \texttt{a146} and \texttt{ab007} (“Virtual Patient Platform”), an EPFL AI Center Postdoctoral Fellowship, an AI2050 Early Career Fellowship, and by the ISREC Foundation.

\bibliographystyle{icml2026}
\bibliography{ref_final}
\appendix 
\appendix
\onecolumn
\newpage
\section*{\Large Appendix}
\etocdepthtag.toc{mtappendix}
\etocsettagdepth{mtmain}{none}
\etocsettagdepth{mtappendix}{subsection}
\etocsettocstyle{}{}
\tableofcontents
\newpage
\newcommand{\filt}{\mathcal{F}}
\newcommand{\proxyset}{P^\star}
\newcommand{\prob}{\mathbb{P}}
\newcommand{\mathbbm}[1]{\text{\usefont{U}{bbm}{m}{n}#1}}

\section{Impact and Limitations}
\textsc{echo}-$k$ enables multimodal representation models to operate under test-time measurement budgets by learning task-agnostic, label-free acquisition policies. Instead of requiring downstream labels to decide which view to acquire next, the method uses the model’s own full-view representation as a self-supervised reference. This addresses a practical deployment bottleneck: multimodal and scientific foundation models often benefit from additional inputs, but acquiring all modalities can be infeasible due to cost, latency, limited tissue or sample material, scarce instrumentation time, or experimental constraints. In scientific settings, \textsc{echo}-$k$ could help prioritize which measurements, assays, markers, sensors, or experiments to run next under limited resources. In clinical settings, it could support budget-aware acquisition of follow-up tests or multimodal patient measurements, while keeping downstream decision-making within validated and expert-supervised workflows.
The main limitations concern deployment specificity rather than the general formulation. Acquisition costs and constraints may vary across domains, so practical use requires defining the relevant budget, availability, and cost model for the target setting. The method also benefits from reference representations that are informative enough to guide policy learning; our robustness experiments suggest that these references need not always be complete, but the appropriate level of missingness may depend on the application and backbone. Finally, as with any method intended for scientific or clinical use, policies should be evaluated on the intended endpoint and deployment distribution before adoption. Moreover, \ours preserves whatever information the pretrained backbone encodes, and does not assert that the full-view representation $\zb^{\star}$ is optimal for every downstream task; any task-irrelevant information in $\zb^{\star}$ is therefore inherited from the backbone. Our experiments evaluate transfer across downstream objectives, but do not test explicit distribution shift between the reference-construction data and the deployment domain. Similarly, the full-view representation is not always the best training target: masked references outperform it on MNIST-14 at 20\% and 40\% budgets and on \hp at 20\% (\S~\ref{appendix:additional_experiments}), suggesting that masking can act as a mild regularizer during training. Our robustness sweep considers randomly masked reference views and does not identify where partial references fail, since even at 50\% masking \ours remains competitive with strong baselines. A natural extension is structured incompleteness, where an entire assay or modality is missing for a subset of samples, and understanding how the reference representation interacts with that structure is an interesting direction for future work. These considerations leave substantial room for future work on richer cost models, uncertainty-aware acquisition, and prospective validation in real-world acquisition workflows.

\section{Related Work}
\label{app:related_work}

\textbf{Classical feature selection methods} aim to find a subset of relevant features that maximizes model performance while excluding irrelevancies~\citep{kohavi1997wrappers, guyon2003introduction}. Early surveys~\citep{blum1997selection} reviewed both empirical advances and theoretical definitions of feature relevance, formalizing the notions of strong and weak relevance. Several works cast the problem as submodular optimization~\citep{killamsetty2021grad, jain2023efficient, wei2014submodular,coleman2019selection}, though their end goal is to improve performance under limited budget during training rather than at test time. Previous work has also centered around greedy heuristics with theoretical guarantees for sequential search~\citep{golovin2011adaptive}. Another line of work focuses on instance-specific or personalized feature selection~\citep{melville2004active, li2018unsupervised}. Beyond selecting a subset of features, a related field is \textbf{feature importance quantification}: given a fixed dataset and (most commonly) a target, the goal is to isolate features that contribute significantly to predictions \citep{guyon2003introduction,settles2009active,li2017feature,lundberg2017unified,sristi2023contextual,jain2023efficient,lifshitz2025unsupervised}. Model-specific methods such as~\citep{breiman2001random} leverage the structure of the model to assign importance ranks, while model-agnostic methods~\citep{fisher2019all} rely on importance metrics applicable to any model.

\textbf{Active learning} (AL) incorporates a sample selection policy in the training loop in an online fashion: at each time $t$, a new sample is queried to augment the existing dataset \citep{settles2009active,zhan2022comparative,tharwat2023survey,li2024survey}. AL methods are particularly well suited for settings where acquiring new samples is expensive. They can be broadly split into three types of strategies \citep{zhan2022comparative}. \textit{Uncertainty-based} querying strategies select points based on the estimated information they provide to the learning algorithm, including maximum entropy selection \citep{shannon1948mathematical}, confidence-based selection \citep{wang2014new}, and adversarial methods \citep{ducoffe2018adversarial}. \textit{Diversity-based strategies} aim to increase the coverage of samples in the dataset, for instance through core-set approaches that build subsets approximating the full dataset \citep{sener2017active,claici2018wasserstein,feldman2019core,kim2022defense}. Lastly, \textit{combined} strategies mix both approaches \citep{yin2017deep,zhdanov2019diverse,shui2020deep}.

\textbf{Feature acquisition} (FA). Our method focuses on designing instance-specific \textit{feature} acquisition strategies rather than \textit{sample} acquisition policies \citep{settles2009active,kirsch2019batchbald,holzmuller2023framework,kolossov2023towards,gupte2024revisiting}. FA strategies can be broadly split \citep{rahbar2025survey} into \textit{greedy} strategies that sequentially acquire information by maximizing expected utility, and \textit{non-greedy} methods that plan acquisition over multiple time steps \citep{zubek2004pruning,dulac2011datum,shim2018joint,huang2018active,huang2025information,norcliffe2025stochastic,piskorz2025active}, often using reinforcement learning \citep{shim2018joint,zannone2019odin,kachuee2019opportunistic}. This problem has also been studied under the name of \textit{Bayesian experiment design} (BED), which aims to optimize the distribution of $\xi$, the random variable controlling the experiment, in order to maximize the information gain obtained by running the experiment parametrized by $\xi$ \citep{chaloner1995bayesian,sebastiani2000maximum,cavagnaro2010adaptive,ryan2016review,rainforth2024modern}. Classical BED methods sequentially update a model from a stream of incoming experiments; recently, the focus has shifted towards policy learning \citep{foster2021deep,blau2022optimizing,blau2023cross} and decision-aware experiment design \citep{huang2024amortized}.

\looseness -1 \textbf{Task-agnostic acquisition.} Contrarily to previous work, our proposed method is fundamentally task-agnostic. While some active learning and feature selection strategies have considered weakly or unlabeled settings \citep{ash2019deep,yehuda2022active, lifshitz2025unsupervised, kobayashi2025learning}, task-agnostic feature acquisition has received little attention. Closest to our setup are \citet{li2021active}, who consider a partially supervised setup and tackle this problem by training a surrogate reward model, and introduce an active instance recognition task in which the goal is to select features to reduce uncertainty.

\textbf{Theoretical aspects of FA} have received little attention, both in the supervised and unsupervised setting. \citet{chen2015sequential} study a probabilistic model inspired by Bayesian experiment design, in which the goal is to learn a policy that sequentially acquires features to infer the value of a target. \citet{kobayashi2025learning} and \citet{covert2023learning} provide theoretical insights in the supervised case. Our work is the first to consider the unsupervised case and to provide a bound on the gap between the greedy oracle and the optimal test-time policy.


\section{Extended Mathematical Framework}
\label{appendix:extended_math}

\paragraph{Data generation.} We consider $n$ i.i.d.\ instances. For each $i \in \intset{1}{n}$, 
the data $\xb^i = (\mb_1^i, \dots, \mb_K^i)$ is drawn from a joint distribution over 
$\Mcal^1 \times \cdots \times \Mcal^K$. We use \emph{view}, \emph{modality}, and 
\emph{feature group} interchangeably, as our framework applies uniformly to all settings.

\paragraph{Representation model.} Each view $\mb_k$ is processed by a view-specific encoder 
$\Phi_k: \Mcal^k \to \RR^{d_k}$. The encoded views are combined through a fixed aggregation 
function $\mathbf{A}_\mathbf{W}: \RR^{d_1} \times \dots \times \RR^{d_K} \to \RR^p$, 
yielding the full-view representation $\zb^* = \mathbf{A}_{\mathbf{W}}\!\left(\Phi_1(\mb_1), 
\dots, \Phi_K(\mb_K)\right)$. This subsumes both multimodal models (each $\mb_k$ is a 
distinct modality: image, audio, protein marker, \dots) and unimodal models (features 
partitioned into jointly acquired groups processed by a shared encoder). For notational 
simplicity, we drop the encoders in subsequent expressions and treat views as directly 
mapped into the latent space.

\paragraph{Sequential acquisition.} Let $\Pb_t \subseteq \intset{1}{K}$ denote the 
acquired set after $t$ steps and $\zb(\Pb_t) := \mathbf{A}_{\mathbf{W}}\!\left(\{\Phi_k(\mb_k): 
k \in \Pb_t\}\right)$ the induced partial representation, with unobserved views treated 
as missing. The finite-horizon acquisition problem is to select a sequence 
$(k_1,\ldots,k_T)$, with $k_{t+1}\notin \Pb_t$, minimizing the representation-recovery loss
\[
    \Lcal(\Pb_t) := \|\zb(\Pb_t)-\zb^\star\|_2^2.
\]
Equivalently, the policy maximizes cumulative progress toward $\zb^\star$ along the 
acquisition trajectory. The ideal one-step greedy rule is
\begin{equation} \label{eq:feat_aq_problem_app}
    k_{t+1} \in \argmin_{k \notin \Pb_t}
    \left\|\zb\!\left(\Pb_t \cup \{k\}\right) - \zb^*\right\|_2^2,
\end{equation}
which is not deployable at test time, since it requires access to both $\zb^\star$ and 
the representation after acquiring each candidate view.

\paragraph{Markov decision process formulation.} We frame sequential acquisition as an 
instance-specific MDP with a continuous state space and evolving action set. Dropping 
the instance superscript for clarity, the components are:

\begin{itemize}
    \item \textbf{State.} At step $t$, the state is $s_t = (M_t, \xb_{\Pb_t}, \zb(\Pb_t), b_t)$, 
    where $M_t \in \{0,1\}^K$ is the acquisition mask, $\xb_{\Pb_t}$ are the observed view 
    values, $\zb(\Pb_t)$ is the current partial representation, and $b_t$ is the remaining 
    budget. To maintain a fixed state space across steps, unobserved views are set to 
    $\texttt{NA}$, so the augmented space is 
    $\big(\Mcal^1 \cup \{\texttt{NA}\}\big) \times \cdots \times \big(\Mcal^K \cup \{\texttt{NA}\}\big)$ 
    \citep{norcliffe2025stochastic}.

    \item \textbf{Actions.} The available actions at step $t$ are the unacquired views: 
    $\Acal(s_t) = \{k \in \intset{1}{K} : k \notin \Pb_t\}$.

    \item \textbf{Transitions.} Upon selecting $k_{t+1}$, its value is drawn from 
    $\mb_{k_{t+1}} \sim p_{\texttt{data}}(\cdot \mid \mb_k, k \in \Pb_t)$, the state 
    updates to $\Pb_{t+1} = \Pb_t \cup \{k_{t+1}\}$, and $\zb(\Pb_{t+1})$ is recomputed. 
    The transition probability between states differing only in coordinate $k_t$ is
    \[
        \Prob\!\left[\Pb_t \mid \Pb_{t-1}\right] = 
        \Prob\!\left[a_t = k_t \mid \Pb_{t-1}\right] \cdot 
        p_{\texttt{data}}\!\left(\mb_{k_t} \mid \mb_k,\, k \in \Pb_{t-1}\right).
    \]

    \item \textbf{Reward.} The reward at step $t+1$ quantifies progress toward the 
    reference-view representation:
    \[
        r_{t+1}(\Pb_{t+1}) := \|\zb(\Pb_t) - \zb^\mathrm{ref}\|_2^2 - \|\zb(\Pb_{t+1}) - \zb^\mathrm{ref}\|_2^2.
    \]
    where $\zb^{\mathrm{ref}}$ can be the full view embedding $\zb^*$ or the embedding from all the views (can be a subset of full views) measured for the underlying data point.
\end{itemize}

\paragraph{Selection policies.} A stochastic selection policy is a map 
$\pi_\theta(\cdot \mid s_t) \in \Delta(\Acal(s_t))$, where $\Delta(\Acal(s_t))$ is the 
probability simplex over available actions. During training, fully observed samples 
allow $\zb^\star$ to be computed and used as a self-supervised target — no downstream 
labels are required. When only a reference subset $R_i$ is observed for instance $i$, 
$\zb(R_i)$ serves as a partial target. At deployment, the learned policy is fixed and 
acts solely from the current state $s_t$.

\paragraph{Connection to sequential decision making.} This MDP is an instance of a 
$K$-armed bandit with correlated arms and a single-pull constraint per arm 
\citep{gupta2021multi}; see also \citet{guha2007multi,chakrabarti2008mortal,carrara2019budgeted} 
for related setups. The sequential decision problem is Markovian with respect to 
$(\Pb_t, \zb(\Pb_t))$: given the current partial representation and acquired set, 
future acquisition values are independent of the full history of selections.

\section{Notation}
\label{appendix:notation}

\renewcommand{\arraystretch}{1.12}
\begin{longtable}{>{\raggedright\arraybackslash}p{0.24\textwidth}
                  >{\raggedright\arraybackslash}p{0.68\textwidth}}
\toprule
\textbf{Notation} & \textbf{Meaning} \\
\midrule
\endfirsthead

\toprule
\textbf{Notation} & \textbf{Meaning} \\
\midrule
\endhead

\bottomrule
\endfoot

\multicolumn{2}{l}{\textbf{Representation-acquisition setup}} \\
\midrule

$n$ & Number of i.i.d. samples. \\

$K$ & Number of possible views in the general acquisition problem. \\

$\mb_k^i$ & The $k$-th view of sample $i$. A view may be a modality or a jointly acquired feature group. \\

$\Mcal^k$ & Space of possible values for view $k$. \\

$\Phi_k : \Mcal^k \to \RR^{d_k}$ & View-specific encoder for view $k$. \\

$\mathbf{A}_{\mathbf{W}}$ & Fixed aggregation map combining encoded views into a representation. \\

$\zb^\star$ & Full-view representation, obtained from all available views. \\

$\Pb_t \subseteq \{1,\dots,K\}$ & Set of acquired views after $t$ acquisition steps in the general MDP formulation. \\

$\zb(\Pb_t)$ & Partial representation induced by the acquired set $\Pb_t$. \\

$T$ & Acquisition budget. \\

$s_t = (\Pb_t,\zb(\Pb_t),b_t)$ & Deployment state of the acquisition policy. \\

$b_t$ & Remaining or current budget component in the MDP state. \\

$\Lcal(\Pb_t)$ & Representation-recovery loss, $\Lcal(\Pb_t)=\|\zb(\Pb_t)-\zb^\star\|_2^2$. \\

\midrule
\multicolumn{2}{l}{\textbf{Stylized linear model}} \\
\midrule

$d$ & Number of views/features in the stylized linear theory. \\

$m$ & Dimension of the latent linear representation. \\

$\xb \in \RR^d$ & Random view vector in the linear theory. \\

$x_j$ & The $j$-th component of the view vector $\xb$. \\

$\Wm \in \RR^{m \times d}$ & Fixed linear aggregation matrix. \\

$\wb_j$ & The $j$-th column of $\Wm$. \\

$\zb^* = \Wm\xb$ & Latent representation in the linear theory. \\

$\Ical_t$ & Information available to a policy at time $t$. This differs between oracle and test-time policies. \\

$\Fcal_t = \sigma(x_k : k \in \Pb_t)$ & Filtration generated by the views observed up to time $t$. \\

$\mathcal{L}(\Pb_t \mid \Ical_t)$ & Conditional expected squared reconstruction error of selected set $\Pb_t$ given information $\Ical_t$. \\

\midrule
\multicolumn{2}{l}{\textbf{Policies and selected sets}} \\
\midrule

$\Pb_t^\star$ & Set selected by the oracle policy after $t$ steps. \\

$\widehat{\Pb}_t$  & Set selected by the optimal test-time policy after $t$ steps. \\

$k_t^\star$ & View selected by the oracle policy at step $t$. \\

$\widehat{k}_t$ & View selected by the optimal test-time policy at step $t$. \\

$\ttset_{-s}$ & Set of views still available to the test-time policy at step $s$. \\

\midrule
\multicolumn{2}{l}{\textbf{Losses, energies, and regrets}} \\
\midrule

$\Eb_j$ & Energy of view $j$, defined by $\Eb_j := \|\wb_j\|_2^2 x_j^2$. \\

$\Cb_{jj'}$ & Cross-view interaction term, defined by $\Cb_{jj'} := \wb_j^\top \wb_{j'}x_jx_{j'}$. \\

$\trueloss(\Pb_t,\xb)$ & True latent reconstruction loss of selected set $\Pb_t$ on realization $\xb$. \\

$\proxyloss(\Pb_t,\xb)$ & Proxy loss based only on marginal energy terms. \\

$\mathcal{R}_t$ & True latent regret of the optimal test-time policy relative to the oracle policy. \\

$\widetilde{\mathcal{R}}_t$ & Proxy regret of the optimal test-time policy relative to the oracle policy. \\

$\Yb_{j,s}$ & Centered energy error at step $s$, $\Yb_{j,s}:=\Eb_j-\EE[\Eb_j\mid\Fcal_s]$. \\

$\Rb_s$ & Maximal conditional deviation spread among available views at step $s$. \\

$M_j^{(p)}$ & $p$-th moment of the energy of view $j$, $M_j^{(p)}:=\EE[\Eb_j^p]$. \\

$M_{(1)}^{(p)} \ge \cdots \ge M_{(d)}^{(p)}$ & Sorted $p$-th energy moments. \\

$A_t$ & Moment envelope,
$A_t := \sum_{i=1}^d M_{(i)}^{(p)} \min(t,d-i+1)$. \\

$p,q$ & H\"older conjugate exponents, with $p \in (1,1/(2\sigma))$ and $q=p/(p-1)$. \\

\midrule
\multicolumn{2}{l}{\textbf{Mismatch and ranking events}} \\
\midrule

$G_t$ & Views selected only by the oracle policy at time $t$. \\

$B_t$ & Views selected only by the optimal test-time policy at time $t$. \\

$\rank_A(k)$ & Rank of $\Eb_k$ among the energy values $(\Eb_j)_{j\in A}$. \\

$\bad(s,t)$ & Bad ranking event,
$\bad(s,t):=\{\rank_{\ttset_{-s}}(\widehat{k}_{s+1})>t-s\}$. \\

$D_t(\lambda)$ & Disagreement envelope controlling the expected number of oracle--test-time mismatches. \\

$N_t$ & Effective size term in the true regret bound, $N_t:=d-t+D_t$. \\

\midrule
\multicolumn{2}{l}{\textbf{Gumbel latent correlation model}} \\
\midrule

$Z$ & Discrete latent factor governing correlations between views. \\

$z_1,\dots,z_M$ & Possible latent states of $Z$. \\

$p_m$ & Prior probability of latent state $z_m$. \\

$p_{\min}$ & Minimum latent-state probability, $p_{\min}:=\min_{m\in[M]}p_m$. \\

$\bm{\eta},\bm{\beta}\in\RR^d$ & Deterministic location and latent-loading vectors in the Gumbel model. \\

$\varepsilon_j$ & Independent Gumbel noise term for view $j$. \\

$\sigma$ & Gumbel scale parameter. \\

$a_j$ & Squared column norm, $a_j:=\|\wb_j\|_2^2$, when writing utilities in the form $\log a_j+2\eta_j+2\beta_jz_m$. \\

$u_j^{(m)}$ & Utility of view $j$ under latent state $z_m$,
$u_j^{(m)}:=\log a_j+2\eta_j+2\beta_jz_m$. \\

$\theta_j^{(m)}$ & Plackett--Luce weight,
$\theta_j^{(m)}:=\exp(u_j^{(m)}/(2\sigma))$. \\

$\gamma$ & Minimum utility separation,
$\gamma:=\min_{m\in[M]}\min_{j\ne j'}|u_j^{(m)}-u_{j'}^{(m)}|$. \\

$R_{\max}$ & Maximum latent-state variation in view weights,
$R_{\max}:=\max_{j\in[d]}\frac{\max_m e^{2\beta_jz_m}}{\min_m e^{2\beta_jz_m}}$. \\

$\varepsilon_\gamma$ & Posterior accuracy threshold,
$\varepsilon_\gamma:=\frac{e^\gamma-1}{R_{\max}+e^\gamma-1}$. \\

$\rho_{\mathrm{PL}}(k)$ & Minimum Plackett--Luce selection probability over subsets of size $k$. \\

$\rho_\star$ & Ranking-concentration parameter, $\rho_\star:=1-e^{-\gamma/(2\sigma)}$. \\

$\alpha_m(\Fcal_s)$ & Posterior probability of latent state $z_m$ after observing $\Fcal_s$,
$\alpha_m(\Fcal_s):=\prob[Z=z_m\mid\Fcal_s]$. \\

$E_{\mathrm{post}}(s,z_m)$ & Posterior failure event under true state $z_m$. \\

$c_j(m,m';\lambda)$ & Chernoff coefficient between the conditional distributions of view $j$ under states $m$ and $m'$. \\

$\rho_\lambda$ & Worst-case Chernoff contraction coefficient,
$\rho_\lambda:=\max_{j\in[d]}\max_{m\ne m'}c_j(m,m';\lambda)$. \\

$C_\lambda$ & Posterior concentration constant,
$C_\lambda:=(M-1)\left(\frac{1-\varepsilon_\gamma}{\varepsilon_\gamma p_{\min}}\right)^\lambda$. \\

\midrule
\multicolumn{2}{l}{\textbf{Spectral misalignment quantities}} \\
\midrule

$\Mcal(\Pb)$ & Difference between true and proxy losses for selected set $\Pb$,
$\Mcal(\Pb):=\trueloss(\Pb)-\proxyloss(\Pb)$. \\

$u(\Pb)$ & Vector of unobserved weighted view magnitudes,
$u(\Pb):=[\mathbf{1}_{\{j\notin\Pb\}}\|\wb_j\|x_j]_{j=1}^d$. \\

$\bm{U}(\Pb,\Pb')$ & Set of indices not jointly selected by $\Pb$ and $\Pb'$,
$\bm{U}(\Pb,\Pb'):=(\Pb\cap\Pb')^c$. \\

$\Gamma(\Pb,\Pb')$ & Symmetric difference between two selected sets,
$\Gamma(\Pb,\Pb'):=\Pb\Delta\Pb'$. \\

$\Bb_J$ & Matrix of normalized embedding columns indexed by $J$,
$\Bb_J := (\wb_u/\|\wb_u\|)_{u\in J}$. \\

$p_{\mathrm{eff}}$ & Effective dimension parameter in the near-isometry assumption. \\

$C,C'$ & Constants appearing in the near-isometry and spectral misalignment bounds. \\

$B$ & Matrix of all normalized embedding columns, $B:=[\wb_j/\|\wb_j\|]_{j=1}^d$. \\

$H$ & Gram deviation matrix, $H:=B^\top B-I$. \\

$H_U$ & Restriction of the Gram deviation matrix to the index set $\bm{U}(\Pb,\Pb')$. \\

\end{longtable}

\section{Proofs}
\label{appendix:proofs}

\subsection{Proof of Lemma \ref{lemma:diff_proxy_loss}}

\begin{proof}
    Let $G_t$ be the set of features selected only by the oracle policy at time $t$, and $B_t$ the set of features selected only by the optimal test-time policy. First, observe that we have the trivial bound
    \begin{align*}
        | B_t | = | G_t | \leq \min(d-t,t)  
    \end{align*}
    We have
    \begin{align*}
        \proxyloss(\ttset_t,\xb) - \proxyloss(\oracleset_t,\xb) = \sum\limits_{g \in G_t} \Eb_g - \sum\limits_{b \in B_t}\Eb_b.
    \end{align*}
    Write $\widehat{k}_2,\dots,\widehat{k}_t$ for the indices of the features selected by the optimal test-time policy. Since $G_t$ and $B_t$ are of equal size, there exists a permutation $\sigma$ between $B_t$ and $G_t$. Also, by definition of $G_t$, we know that $G_t \cap B_t = \{\varnothing\}$. By iterating over time steps, we have 
    \begin{align*}
        \sum\limits_{g \in G_t} \Eb_g - \sum\limits_{b \in B_t}\Eb_b \leq \sum\limits_{s=2}^{t} \bm{1}_{\{\widehat{k}_s \in B_t\}}\Big(\Eb_{\sigma(\widehat{k}_s)} - \Eb_{\widehat{k}_s}\Big).
    \end{align*}
    Now, we also have 
    \begin{align*}
        &\sum\limits_{s=2}^{t} \bm{1}_{\{\widehat{k}_s \in B_t\}}\Big(\Eb_{\sigma(\widehat{k}_s)} - \Eb_{\widehat{k}_s}\Big)\\ 
        = &\sum\limits_{s=2}^{t} \bm{1}_{\{\widehat{k}_s \in B_t\}}\Big(\Eb_{\sigma(\widehat{k}_s)} - \EE \big[\Eb_{\sigma(\widehat{k}_s)} \mid \filt_s \big] + \EE \big[\Eb_{\sigma(\widehat{k}_s)} \mid \filt_s \big] - \Eb_{\widehat{k}_s}-\EE \big[\Eb_{\widehat{k}_s} \mid \filt_s \big] + \EE \big[\Eb_{\widehat{k}_s} \mid \filt_s \big]\Big)\\
        =& \sum\limits_{s=2}^{t} \bm{1}_{\{\widehat{k}_s \in B_t\}} \Big(Y_{\sigma(\widehat{k}_s),s} - Y_{\widehat{k}_s,s} + \underbrace{\EE \big[\Eb_{\sigma(\widehat{k}_s)} \mid \filt_s \big] -\EE \big[\Eb_{\widehat{k}_s} \mid \filt_s \big]}_{\leq 0} \Big)\\
        \leq & \sum\limits_{s=2}^{t} \bm{1}_{\{\widehat{k}_s \in B_t\}} \Big(Y_{\sigma(\widehat{k}_s),s} - Y_{\widehat{k}_s,s} \Big).
    \end{align*}
    where the last equality comes from optimality of the test time policy and the fact that $\sigma(\widehat{k}_s)$ is available, i.e. $\sigma(\widehat{k}_s) \in \ttset_{-s}$ . By definition of $\Rb_s$, we have
    \begin{align*}
        \proxyloss(\ttset_t,\xb) - \proxyloss(\oracleset_t,\xb) \leq \sum\limits_{s=2}^{t} \bm{1}_{\{\widehat{k}_s \in B_t\}} \Rb_s.
    \end{align*}
\end{proof}

\subsection{Proof of Proposition C.1}

We prove the following Proposition. 

\begin{proposition}
\label{prop:proxy_bound_app}
    Take $p \in \big(1,\frac{1}{2\sigma}\big)$ and let $q = \frac{p}{p-1}$. Under Assumptions, we have 
    \begin{align*}
      \widetilde{\mathcal{R}}_t \leq A_t ^{1/p} \left( \min \Big\{t,d-t, \frac{C_\lambda}{1-\rho_\lambda} + \frac{1}{e^{\rho_0}-1}\Big\}\right)^{1/q}
\end{align*}
where $A_t = \sum_{i=1}^d M_{(i)} \min(t, d-i+1)$. 
\end{proposition}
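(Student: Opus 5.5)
We would start from the pathwise decomposition of Lemma~\ref{lemma:diff_proxy_loss}. Taking expectations, the proxy regret is
\[
\widetilde{\mathcal{R}}_t = \EE\big[\proxyloss(\ttset_t,\xb) - \proxyloss(\oracleset_t,\xb)\big] \le \EE\Big[\sum_{g\in G_t}\Eb_g - \sum_{b\in B_t}\Eb_b\Big] \le \EE\Big[\sum_{g\in G_t}\Eb_g\Big],
\]
where the last step uses nonnegativity of the energies. Here $G_t$ is the set of views selected only by the oracle. Rather than working with the deviations $\Rb_s$ directly, we would write this as
\[
\EE\Big[\sum_{j=1}^d \Eb_j \bm{1}_{\{j\in G_t\}}\Big]
\]
and apply H\"older's inequality with the exponents $(p,q)$. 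Pointwise this gives
\[
\sum_j \Eb_j \bm{1}_{\{j\in G_t\}} \le \Big(\sum_{j\in G_t}\Eb_j^p\Big)^{1/p}|G_t|^{1/q},
\]
and a second H\"older step in expectation yields
\[
\widetilde{\mathcal{R}}_t \le \Big(\EE\sum_{j\in G_t}\Eb_j^p\Big)^{1/p}\big(\EE|G_t|\big)^{1/q}.
\]
The condition $p<1/(2\sigma)$ is exactly what makes $M_j=\EE[\Eb_j^p]$ finite. Indeed, $\Eb_j^p=\|\wb_j\|^{2p}x_j^{2p}$ with $x_j^{2p}=e^{2p(\eta_j+\beta_j Z)}e^{2p\varepsilon_j}$, and $\EE[e^{2p\varepsilon_j}]=\Gamma(1-2p\sigma)<\infty$ for Gumbel$(0,\sigma)$ noise.

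The second factor is handled by the mismatch lemma. It gives $\EE|G_t|=\EE|B_t|\le D_t(\lambda)=\min\{t,d-t,C_\lambda/(1-\rho_\lambda)+1/(e^{\rho_\star}-1)\}$, which is precisely the minimum appearing in the statement (the $\rho_0$ there should be read as $\rho_\star$).

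It remains to bound the first factor by $A_t=\sum_i M_{(i)}\min(t,d-i+1)$. Since $|G_t|\le \min(t,d-t)$ and $G_t\subseteq\oracleset_t$, we have
\[
\EE\sum_{j\in G_t}\Eb_j^p\le \sum_j M_j\,\Prob[j\in G_t],
\]
with constraints $\Prob[j\in G_t]\le 1$ and $\sum_j\Prob[j\in G_t]\le t$. Any such weighted sum is maximized by placing unit mass on the largest moments. This gives at most $\sum_{i\le t}M_{(i)}$, which is bounded by $A_t$ since each term in $A_t$ has coefficient $\min(t,d-i+1)\ge 1$ for $i\le d$. So the bound holds, if somewhat loosely. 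If the authors instead count $\Eb_j$ once per step $s\le t$ at which view $j$ can contribute through $\Rb_s$, the multiplicity $\min(t,d-i+1)$ arises naturally and the same argument applies.

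The main obstacle is the H\"older step in expectation. The event $\{j\in G_t\}$ is correlated with $\Eb_j$, since the oracle picks high-energy views, so $\EE[\Eb_j\bm 1_{\{j\in G_t\}}]$ cannot be factorized. The plan avoids factorization entirely by applying H\"older to the pair $(\Eb_j,\bm 1_{\{j\in G_t\}})$ on the product space of $(\omega,j)$, with counting measure in $j$:
\[
\EE\sum_j \Eb_j\bm 1_{\{j\in G_t\}}\le\Big(\EE\sum_j \Eb_j^p\bm 1_{\{j\in G_t\}}\Big)^{1/p}\Big(\EE\sum_j \bm 1_{\{j\in G_t\}}\Big)^{1/q}.
\]
This is valid without any independence, so only the moment bookkeeping for the first factor needs care.
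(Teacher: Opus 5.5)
Your overall route works and is genuinely different from the paper's, but one step in the first-factor bookkeeping is false as written and must be replaced. The inequality $\EE\sum_{j\in G_t}\Eb_j^p\le\sum_j M_j\,\Prob[j\in G_t]$ factorizes $\EE[\Eb_j^p\bm{1}_{\{j\in G_t\}}]$. That is exactly the positively correlated pair you warn about in your last paragraph, since membership in $G_t$ is driven by large realized energies.

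Its consequence, the bound $\sum_{i\le t}M_{(i)}$, can fail. Take $t=1$, nearly exchangeable views, and an oracle ranking realized energies (as in the paper's proxy analysis). Then $G_1$ is, with probability about $1-1/d$, the index of $\max_j\Eb_j$. The first factor is therefore of order $\EE[\max_j\Eb_j^p]$, which exceeds $M_{(1)}$ and grows with $d$, because the $e^{2\varepsilon_j}$ are Fr\'echet-tailed.

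The fix is one line and needs no factorization. Use $\EE[\Eb_j^p\bm{1}_{\{j\in G_t\}}]\le M_j$, so the first factor is at most $\sum_{j=1}^d M_j=A_1\le A_t$. This holds because every coefficient $\min(t,d-i+1)$ is at least $1$ when $t\ge 1$.

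The paper, by contrast, never discards $-\sum_{b\in B_t}\Eb_b$. Its argument runs as follows:
\begin{itemize}
\item In Lemma~\ref{lemma:diff_proxy_loss} it matches each mistaken pick $\widehat{k}_s\in B_t$ with an oracle-only view. It then uses optimality of the test-time rule to cancel the conditional-mean difference and charges each mistake the deviation spread $\Rb_s$.
\item It applies H\"older step by step and replaces $\{\widehat{k}_s\in B_t\}$ by $\bad(s,t)$ (Lemma~\ref{technical_lemma_bad_event}).
\item It bounds $\EE[\Rb_s^p]^{1/p}$ by raw moments of the $d-s$ still-available views (Lemma~\ref{tech_lem_1}), then applies H\"older over $s$. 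Summing these per-step moment sums is exactly what produces the multiplicity $\min(t,d-i+1)$ in $A_t$, as you guessed.
\item Its second factor is $\sum_s\Prob(\bad(s,t))$, controlled by Lemmas~\ref{technical_lemma_bound_bad} and~\ref{technical_lemma:posterior-concentration}.
\end{itemize}
That last chain is the content behind the mismatch lemma of \S\ref{sec:regret_bounds} that you invoke. So both proofs rest on the same probabilistic core, and your $\EE|G_t|=\sum_s\Prob(\widehat{k}_s\in B_t)$ is never larger than the paper's second factor.

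Your single product-space H\"older step has several advantages:
\begin{itemize}
\item It is shorter and uses test-time optimality only through the mismatch count.
\item It yields the sharper envelope $A_1^{1/p}D_t^{1/q}$, better than $A_t^{1/p}D_t^{1/q}$ by up to a factor $t^{1/p}$.
\item It honestly delivers the stated constant $1$. In the paper's chain, Lemma~\ref{tech_lem_1} contributes a factor $4$ that disappears in the final H\"older display, so the argument as written only gives $4A_t^{1/p}(\cdots)^{1/q}$.
\end{itemize}
What the paper's route buys is interpretive: it expresses regret through the estimation errors $\Yb_{j,s}$ at mistake times. That bound could be sharpened if those errors were shown to shrink as the posterior concentrates. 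However, Lemma~\ref{tech_lem_1} bounds $|\Yb_{j,s}|$ crudely by raw moments, so as implemented the paper's route gains nothing over your direct bound.
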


\begin{proof}
    By Lemma \ref{lemma:diff_proxy_loss}, we have 
    \begin{align*}
        \widetilde{\mathcal{R}}_t \leq \sum\limits_{s=2}^{t} \EE_{\xb}\big[\bm{1}_{\{\widehat{k}_s \in B_t\}} \Rb_s\big].
    \end{align*}
    By H\"older's inequality, we have 
    \begin{align*}
        \EE_{\xb}\big[\bm{1}_{\{\widehat{k}_s \in B_t\}} \Rb_s\big] \leq & \EE_{\xb}\big[ \Rb^p_s\big]^\frac{1}{p} \EE_{\xb}\big[ \bm{1}_{\widehat{k}_s \in B_t}^q\big]^\frac{1}{q}\\
        =&\EE_{\xb}\big[ \Rb^p_s\big]^\frac{1}{p} \EE_{\xb}\big[ \bm{1}_{\widehat{k}_s \in B_t}\big]^\frac{1}{q}
    \end{align*}
    for any $p \in \big(1,\frac{1}{2\sigma}\big)$ and $q=\frac{p}{p-1}$. Using Lemma \ref{technical_lemma_bad_event}, we obtain 
    \begin{align*}
        \EE_{\xb}\big[ \Rb^p_s\big]^\frac{1}{p} \EE_{\xb}\big[ \bm{1}_{\widehat{k}_s \in B_t}\big]^\frac{1}{q} \leq \EE_{\xb}\big[ \Rb^p_s\big]^\frac{1}{p}\EE_{\xb}\big[ \bm{1}_{\bad(s,t)}\big]^\frac{1}{q}
    \end{align*}

    Now, using Lemma \ref{tech_lem_1}, we have
    \begin{align*}
        \EE_{\xb}\big[ \Rb^p_s\big]^\frac{1}{p} \leq 4\left(\sum_{i=1}^{d-s}M^{(p)}_{(i)}\right)^{\frac{1}{p}} 
    \end{align*}
    where $M^{(p)}_{(j)} := \EE[\Eb_j^p]$, and $M^{(p)}_{(1)} \geq ... \geq M^{(p)}_{(d)}$ are the sorted $p$-moments. Hence, we have
    \begin{align*}
         \widetilde{\mathcal{R}}_t \leq \sum\limits_{s=2}^{t} \EE_{\xb}\big[ \Rb^p_s\big]^\frac{1}{p}\EE_{\xb}\big[ \bm{1}_{\bad(s,t)}\big]^\frac{1}{q} \leq \sum\limits_{s=2}^{t}4\left(\sum_{i=1}^{d-s}M^{(p)}_{(i)}\right)^{\frac{1}{p}}\EE_{\xb}\big[ \bm{1}_{\bad(s,t)}\big]^\frac{1}{q}. 
    \end{align*}
    For any $s \leq d$, let $S_{d-s} := \sum_{i=1}^{d-s} M^{(p)}_{(i)}$ and $A_t = \sum_{s=0}^{t-1} S_{d-s} = \sum_{s=0}^{t-1} \sum_{i=1}^{d-s} M^{(p)}_{(i)} = \sum_{i=1}^d M^{(p)}_{(i)} \min(t, d-i+1)$. Then, 
\[
\sum\limits_{s=2}^{t}4\left(\sum_{i=1}^{d-s}M^{(p)}_{(i)}\right)^{\frac{1}{p}}\EE_{\xb}\big[ \bm{1}_{\bad(s,t)}\big]^\frac{1}{q} \leq A_t ^{1/p} \left(\sum_{s=0}^{t-1} \EE_{\xb}\big[ \bm{1}_{\bad(s,t)}\big]\right)^{1/q} = A_t ^{1/p} \left(\sum_{s=0}^{t-1} \prob\big({\bad(s,t)}\big)\right)^{1/q}
\]
by H\"older's inequality. Now, combining this with Lemma \ref{technical_lemma_bound_bad}, we have the upper bound
\begin{align*}
    A_t ^{1/p} \left(\sum_{s=0}^{t-1} \prob\big({\bad(s,t)}\big)\right)^{1/q} \leq A_t ^{1/p} \left(\min \Bigg\{t,d-t,\sum_{s=0}^{t-1} \mathbb{E}_{Z}\Big[\prob\big(E_{\mathrm{post}}(s,z_{m^\star})\big)\Big] + e^{-(t-s) \rho_{\mathrm{PL}}(d-t)}\Bigg\}\right)^{1/q}
\end{align*}
which can in turn be bounded from above by 
\begin{align*}
    A_t ^{1/p} \left(\sum_{s=0}^{t-1} \mathbb{E}_{Z}\Big[\prob\big(E_{\mathrm{post}}(s,z_{m^\star})\big)\Big] + e^{-k \rho_{\mathrm{PL}}(d-t)}\right)^{1/q} \leq A_t ^{1/p} \left( \min \Bigg\{t,d-t, \sum_{s=1}^{t} C_\lambda \rho_\lambda^s + e^{-(t-s) \rho_{\mathrm{PL}}(d-t)}\Bigg\}\right)^{1/q}
\end{align*}
where $C_\lambda := (M - 1) \left( \frac{1 - \varepsilon_\gamma}{\varepsilon_\gamma p_{\min}} \right)^\lambda$ using Lemma \ref{technical_lemma:posterior-concentration}. The sum in the right-hand side term of this bound can be further upper bounded by 
\begin{align*}
    \frac{C_\lambda}{1-\rho_\lambda} + \frac{1}{e^{\rho_0}-1}.
\end{align*}
This yields the final bound 
\begin{align*}
      \widetilde{\mathcal{R}}_t \leq A_t ^{1/p} \left( \min \Big\{t,d-t, \frac{C_\lambda}{1-\rho_\lambda} + \frac{1}{e^{\rho_0}-1}\Big\}\right)^{1/q}. 
\end{align*}
\end{proof}

\subsection{Proof of Theorem \ref{theorem:loss_bound}}

\begin{proof}
    From Lemma \ref{tech_lemma_miss}, we have  
    \begin{align*}
        \trueloss(\ttset_t,\xb) - \trueloss(\oracleset_t,\xb) &\leq \proxyloss(\ttset_t,\xb) - \proxyloss(\oracleset_t,\xb) \\&+ C' \left(\sqrt{\frac{|\bm{U}(\bm{P},\bm{P}')|}{p_{\text{eff}}} } + \frac{|\bm{U}(\bm{P},\bm{P}')|}{{p}_{\text{eff}}}\right) \sqrt{\sum_{j \in \bm{U}(\bm{P},\bm{P}')} \Eb_j}\sqrt{\sum_{j \in \Gamma(\bm{P},\bm{P}')} \Eb_j}. 
    \end{align*}
    Taking expectations over $\xb$ and using the Cauchy-Schwarz inequality, we obtain
    \begin{align*}
        \mathcal{R}_t \leq \widetilde{\mathcal{R}}_t + C' \Bigg(\frac{\EE_{\xb}\Big(\sqrt{|\bm{U}(\bm{P},\bm{P}')|}\Big)}{p_{\text{eff}}} + \frac{\EE_{\xb}\Big(|\bm{U}(\bm{P},\bm{P}')|\Big)}{p_{\text{eff}}}\Bigg) \EE_{\xb} \sqrt{ \sum_{j \in \bm{U}(\bm{P},\bm{P}')} \Eb_j} \EE_{\xb} \sqrt{ \sum_{j \in \Gamma(\bm{P},\bm{P}')} \Eb_j}
    \end{align*}
    and using Jensen's inequality, we obtain 
    \begin{align*}
        \mathcal{R}_t \leq \widetilde{\mathcal{R}}_t + C' \Bigg(\frac{\sqrt{\EE_{\xb}|\bm{U}(\bm{P},\bm{P}')|}}{p_{\text{eff}}} + \frac{\EE_{\xb}\Big(|\bm{U}(\bm{P},\bm{P}')|\Big)}{p_{\text{eff}}}\Bigg)  \sqrt{ \EE_{\xb} \sum_{j \in \bm{U}(\bm{P},\bm{P}')} \Eb_j}. \sqrt{ \EE_{\xb} \sum_{j \in \Gamma(\bm{P},\bm{P}')} \Eb_j}.
    \end{align*}
    We can in turn bound this using \ref{technical_lemma_bound_bad} by 
    \begin{align*}
        & \widetilde{\mathcal{R}}_t + C' \Bigg(\frac{\sqrt{\EE_{\xb}|\bm{U}(\bm{P},\bm{P}')|}}{p_{\text{eff}}} + \frac{\EE_{\xb}\Big(|\bm{U}(\bm{P},\bm{P}')|\Big)}{p_{\text{eff}}}\Bigg)  \sqrt{ \EE_{\xb} \sum_{j \in \bm{U}(\bm{P},\bm{P}')} \Eb_j}\\ 
        \leq & \widetilde{\mathcal{R}}_t + C' \Bigg(\frac{\sqrt{d-t+D_t}}{p_{\text{eff}}} + \frac{d-t+D_t}{p_{\text{eff}}}\Bigg)  \sqrt{D_t(d-t+ D_t)}.
    \end{align*}
\end{proof}

\section{Technical Lemmas}
\label{appendix:technical_lemmas}

\subsection{Technical Lemma 1: Inclusion of Mismatch Events}

Recall that for any subset $A \subseteq \{1,\dots,d\}$, $\rank_{A}(k)$ is the rank of $\Eb_k$ among all $(\Eb_j)_{j \in A}$. Consider the event 
\begin{align*}
    \bad(s,t) := \big\{\rank_{\ttset_{-s}}(\widehat{k}_{s+1}) > t-s\big\} = \Big\{ \Eb_{\widehat{k}_{s+1}} < \Eb_{(t-s)}\Big\}. 
\end{align*}

\begin{lemma}
\label{technical_lemma_bad_event}
    We have
    \begin{align*}
        \sum\limits_{s=2}^{t} \bm{1}_{\{\widehat{k}_s \in B_t\}} \Rb_s \leq \sum\limits_{s=2}^{t} \bm{1}_{\bad(s,t)} \Rb_s. \vspace{-10pt}
    \end{align*}
\end{lemma}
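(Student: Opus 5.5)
The plan is a termwise comparison. Since $\Rb_s \ge 0$ (the maximum over pairs includes $i=j$, giving $0$), it suffices to prove the event inclusion $\{\widehat{k}_s \in B_t\} \subseteq \bad(s-1,t)$ for each $s$. Here the index convention matches $\bad(s,t)$, which concerns the choice $\widehat{k}_{s+1}$ made from the available set $\ttset_{-s}$. Summing $\bm{1}_{\{\widehat{k}_s\in B_t\}}\Rb_s \le \bm{1}_{\bad}\Rb_s$ over $s$ then gives the claim. I will read the statement's $\bm{1}_{\bad(s,t)}$ with this one-step index shift, which only relabels the sum.

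The key step is to characterize the oracle set under the proxy objective. The oracle has full knowledge of $\xb$ and ranks views by the marginal energies $\Eb_j$. Its set $\oracleset_t$ therefore consists of the top-$t$ views by energy, and greedy selection of these is consistent across steps. Now suppose $\widehat{k}_s \in B_t$, i.e.\ $\widehat{k}_s \notin \oracleset_t$. Then $\Eb_{\widehat{k}_s}$ lies strictly below the $t$-th largest energy overall.

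The remaining task is to translate this into a rank statement within $\ttset_{-(s-1)}$, the set of views still available to the test-time policy before its $s$-th pick. At most $s-1$ views have been removed, so the top-$t$ energies among all views contain at least $t-(s-1)$ elements that are still available. Each of them exceeds $\Eb_{\widehat{k}_s}$, since it belongs to the oracle's top-$t$ and $\widehat{k}_s$ does not. Hence $\rank_{\ttset_{-(s-1)}}(\widehat{k}_s) > t-(s-1)$, which is exactly $\bad(s-1,t)$.

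The main obstacle is the off-by-one bookkeeping, together with making sure ties in the energies do not break the strict inequality. Ties have probability zero under the continuous Gumbel model of Assumption~\ref{assumption:dgp}, so I will argue almost surely. I will also check the removed views with care: some removed views may themselves lie in the oracle's top-$t$, and this is precisely why the count is $t-(s-1)$ and not $t$. If the paper's indexing of $\ttset_{-s}$ differs, I will adjust by matching $\bad(s,t)$ to the pick $\widehat{k}_{s+1}$ as in its definition.
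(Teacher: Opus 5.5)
Your proposal is correct and follows the paper's argument. Both prove a termwise inclusion of $\{\widehat{k}_s \in B_t\}$ into the bad-ranking event for the pick $\widehat{k}_s$, using that the proxy oracle's set is the top-$t$ energies and counting how many of those views are still available when $\widehat{k}_s$ is chosen. Your index shift reinterprets which pick $\bad(\cdot,t)$ refers to rather than relabeling the sum, but the paper makes exactly the same identification when it concludes $\{\widehat{k}_s \in B_t\} \subseteq \bad(s,t)$. Your explicit handling of $\Rb_s \ge 0$, of ties, and of the count $t-(s-1)$ (where the paper writes $t-s-1$) is a cleaner version of the paper's bookkeeping.
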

\vspace{-10pt}
\begin{proof}
    At time $s \leq t$, the test-time policy has selected at most $s-1$ items from the top $t-1$ values of $(\Eb_k)_{k=1}^d$, the most favorable case being when it selects the same items as the oracle policy. Hence, there are at least $t-s-1$ items from the top $t-1$ values of $(\Eb_k)_{k=1}^d$ in the set $\ttset_{-s}$. Therefore, if $\widehat{k}_{s} \in B_t$, then by definition it is not in the top $t-s-1$ items, because it would otherwise also belong to $G_t = \oracleset_t\backslash B_t$. Hence,
    \begin{align*}
        \Eb_{\widehat{k}_s} \leq \Eb_{(t-s-1)}. 
    \end{align*}
    We have hence proved that $\{\widehat{k}_s \in B_t\} \subseteq \bad(s,t)$, which proves our claim. 
\end{proof}

\subsection{Technical Lemma 2: Moment Bound on Conditional Deviations}

\begin{lemma}
\label{tech_lem_1}
    Let $M^{(p)}_j := \EE[\Eb_j^p]$, and $M^{(p)}_{(1)} \geq ... \geq M^{(p)}_{(d)}$ be the sorted $(p)$-moment order. Then, for every $s$ and for any $p \in \big(1,\frac{1}{2\sigma}\big)$
\[
\EE[\Rb_s^p]^{1/p} \leq 4\left(\sum_{i=1}^{d-s}M_{(i)}\right)^{\frac{1}{p}}
\]
\end{lemma}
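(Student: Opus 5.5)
We need to bound $\EE[\Rb_s^p]^{1/p}$, where $\Rb_s=\max_{i,j\in\ttset_{-s}}(\Yb_{i,s}-\Yb_{j,s})$ and $\Yb_{j,s}=\Eb_j-\EE[\Eb_j\mid\Fcal_s]$. The plan is to reduce the maximal spread to a sum over the unselected coordinates and then control each summand by the $p$-th moment of the corresponding energy.

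\textbf{Step 1: pathwise reduction.} By the triangle inequality,
\[
\Rb_s\le 2\max_{j\in\ttset_{-s}}|\Yb_{j,s}|.
\]
Since $p\ge1$ and all terms are nonnegative, we also have
\[
\max_{j}|\Yb_{j,s}|^p\le\sum_{j\in\ttset_{-s}}|\Yb_{j,s}|^p .
\]
Here $\ttset_{-s}$ has $d-s$ elements; depending on the indexing convention it may have $d-s+1$, which only changes the index range in the final sum. We therefore get $\Rb_s^p\le 2^p\sum_{j\in\ttset_{-s}}|\Yb_{j,s}|^p$.

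\textbf{Step 2: moments of the centered variables.} For a fixed index $j$, apply Minkowski's inequality and then conditional Jensen ($|\cdot|^p$ is convex and $\Eb_j\ge0$):
\[
\EE|\Eb_j-\EE[\Eb_j\mid\Fcal_s]|^p\le 2^p\,\EE[\Eb_j^p]=2^pM_j^{(p)}.
\]
The restriction $p<1/(2\sigma)$ enters only here. Under Assumption~\ref{assumption:dgp}, $\Eb_j=\|\wb_j\|^2e^{2\eta_j+2\beta_jZ+2\varepsilon_j}$, and $\EE[e^{2p\varepsilon_j}]=\Gamma(1-2p\sigma)<\infty$ exactly when $2p\sigma<1$. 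Hence $M_j^{(p)}<\infty$ and all the quantities above are well defined.

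\textbf{Step 3: handling the random index set.} This is the main obstacle. The set $\ttset_{-s}$ is $\Fcal_s$-measurable and therefore random, so we cannot simply take expectations term by term. I would first try writing $\sum_{j\in\ttset_{-s}}|\Yb_{j,s}|^p=\sum_{j=1}^d\bm 1_{\{j\in\ttset_{-s}\}}|\Yb_{j,s}|^p$ and bounding pathwise by the sum of the $d-s$ largest values of $|\Yb_{j,s}|^p$. The difficulty is that the expectation of a top-$(d-s)$ order statistic sum is not in general bounded by the sum of the top $d-s$ moments $M_{(i)}$; it is bounded only by the full sum $\sum_{j=1}^d$.

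The cleaner route is to condition on $\Fcal_s$. Given $\Fcal_s$, the set is fixed, so
\[
\EE\Big[\sum_{j\in\ttset_{-s}}|\Yb_{j,s}|^p\,\Big|\,\Fcal_s\Big]=\sum_{j\in\ttset_{-s}}\EE\big[|\Yb_{j,s}|^p\mid\Fcal_s\big]\le 2^p\sum_{j\in\ttset_{-s}}\EE[\Eb_j^p\mid\Fcal_s].
\]
Under the Gumbel model, with noise independent across coordinates, $\EE[\Eb_j^p\mid\Fcal_s]$ for unobserved $j$ depends only on the posterior of $Z$. Bounding it by $\max_m\EE[\Eb_j^p\mid Z=z_m]$ gives a deterministic quantity for each $j$, and the sum over any $d-s$ indices is at most the sum of the $d-s$ largest such values. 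This yields the stated bound with $M_{(i)}$ read as these latent-state-uniform moments; if the paper's $M_{(i)}$ are the unconditional moments, an extra factor of order $R_{\max}^{p}$ would have to be absorbed.

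\textbf{Step 4: combine.} Taking the outer expectation and collecting constants gives
\[
\EE[\Rb_s^p]\le 4^p\sum_{i=1}^{d-s}M_{(i)}.
\]
Taking the $p$-th root yields the factor $4$ in the statement.
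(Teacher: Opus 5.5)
Your Steps 1, 2 and 4 are the paper's proof essentially verbatim: reduce $\Rb_s$ to $2\max_j|\Yb_{j,s}|$, bound the max by the $\ell^p$-sum, use $\EE|\Yb_{j,s}|^p\le 2^pM^{(p)}_j$, and collect $4^p$. The paper disposes of your Step 3 in one line. It writes $\EE[\Rb_s^p]\le 2^p\sum_{j\in\ttset_{-t}}\EE|\Yb_{j,s}|^p\le 4^p\sum_{j\in\ttset_{-t}}M^{(p)}_j$, and then invokes $|\ttset_{-t}|=d-s$ ``deterministically'' to pass to the top $d-s$ sorted moments. That is exactly the move you flag. Only the cardinality of the remaining set is deterministic; the set itself is chosen from the observed values, so the expectation cannot be pushed inside the sum. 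Your objection is sound, and the paper's argument is rigorous only along a fixed, non-adaptive selection path.

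That said, your proposal does not prove the lemma as stated either. After conditioning on $\Fcal_s$ and applying the tower property, what you actually control is $\EE[\Rb_s^p]\le 4^p\,\EE\big[\sum_{j\in\ttset_{-s}}\Eb_j^p\big]$.

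Replacing $\EE[\Eb_j^p\mid\Fcal_s]$ by $\bar M_j:=\max_m\EE[\Eb_j^p\mid Z=z_m]$ removes the randomness of the set. The price is up to a factor $\min(R_{\max}^p,p_{\min}^{-1})$ per term. You therefore end with $\EE[\Rb_s^p]^{1/p}\le 4R_{\max}\big(\sum_{i=1}^{d-s}M_{(i)}\big)^{1/p}$ (or $4p_{\min}^{-1/p}$ in place of $4R_{\max}$). That is not the constant $4$ with the unconditional moments, so the final display of your Step 4 is unsupported for the $M_{(i)}$ in the statement.

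Closing the gap would need the missing inequality $\EE\big[\sum_{j\in\ttset_{-s}}\Eb_j^p\big]\le\sum_{i=1}^{d-s}M_{(i)}$. In words: the adaptively left-over set cannot carry more expected $p$-th energy than the best fixed set of the same size. Neither you nor the paper establishes this.

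A crude but rigorous substitute is to sum over all $d$ views, which gives $\EE[\Rb_s^p]\le 4^p\sum_{j=1}^dM^{(p)}_j\le 4^p\frac{d}{d-s}\sum_{i=1}^{d-s}M_{(i)}$. So what you have is a correct proof of a weaker, constant-inflated version. You should state it as such rather than asserting the original bound in Step 4.
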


\begin{proof}
Recall that $\Rb_s :=  \max_{i,j \in \ttset_{-t}} (\Yb_{i,s} - \Yb_{j,s})$. We have
\[
\Rb_s \leq \max_{i,j \in \ttset_{-t}} \bigl( |\Yb_{i,s}| + |\Yb_{j,s}| \bigr)
\leq 2 \max_{i,j \in \ttset_{-t}} |\Yb_{j,s}|.
\]
Take $p \in \big(1,\frac{1}{2\sigma}\big)$. Since
$\max_{i,j \in \ttset_{-t}} |\Yb_{j,s}| \leq \Bigl( \sum_{j \in \ttset_{-t}} |\Yb_{j,s}|^p \Bigr)^{1/p}$,
we have $\Rb_s^p \le 2^p \sum_{j \in \ttset_{-t}} |\Yb_{j,s}|^p$. Using the inequality for any $k \in \{1,\dots,d\}$
\[
|\Eb_k - \mathbb{E}[\Eb_k \mid \filt_s]|^p
\le 2^{p-1} \bigl( |\Eb_k|^p + |\mathbb{E}[\Eb_k \mid \filt_s]|^p \bigr),
\]
together with Jensen's inequality, we obtain
$\mathbb{E}|\Yb_{j,s}|^p \leq 2^p \mathbb{E}[|\Eb_j|^p] = 2^p M^{(p)}_j$.
Therefore,
\[
\mathbb{E}[\Rb_s^p]
\le 2^p \sum_{j \in \ttset_{-t}} \mathbb{E}|\Yb_{j,s}|^p
\le 4^p \sum_{j \in \ttset_{-t}} M^{(p)}_j .
\]
Since $|\ttset_{-t}| = d - s$ deterministically and
$\sum_{j \in \ttset_{-t}} M^{(p)}_j \le \sum_{i=1}^{d-s} M^{(p)}_{(i)}$,
the claimed bound follows.
\end{proof}

\subsection{Technical Lemma 3: Bounding Disagreement Probability}

For each state $m \in \{1,\dots,M\}$, define
\begin{align*}
& u_j^{(m)} := \log a_j + 2\eta_j + 2\beta_j z_m, \qquad \theta_j^{(m)} := \exp\left( \frac{u_j^{(m)}}{2\sigma} \right).
\end{align*}
Also let 
$\gamma := \min_{m \in [M]} \min_{j \neq j'} \left| u_j^{(m)} - u_{j'}^{(m)} \right| > 0$
and 
$R_{\max} := \max_{j \in [d]} \frac{\max_m e^{2\beta_j z_m}}{\min_m e^{2\beta_j z_m}} < \infty$.
Finally, define $\varepsilon_\gamma := \frac{e^\gamma - 1}{R_{\max} + e^\gamma - 1} \in (0, 1)$
and 
$\rho_{\mathrm{PL}}(k) := \min_{m \in [M]} \min_{\substack{J \subset [d] \\ |J| = k}} \frac{\max_{j \in J} \theta_j^{(m)}}{\sum_{\ell \in J} \theta_\ell^{(m)}} \geq \frac{1}{k}$.
Lastly, define $\alpha_m(\filt_s) := \prob\Big[Z=z_m\mid \filt_s\Big]$ and 
\begin{align*}
    E_{\mathrm{post}}(s,z_m) := & \left\{ \prob\Big[Z=z_m\mid \filt_s\Big] < 1 - \varepsilon_\gamma \mid Z=z_m\right\}, \\
    E_{\mathrm{post}}(s,Z) := & \left\{ \prob\Big[Z=z_m\mid \filt_s\Big] < 1 - \varepsilon_\gamma \mid Z\right\}.
\end{align*}

\begin{lemma}
\label{technical_lemma_bound_bad}
    Assume that $\prob\big[Z=z_m\mid \filt_s\big] >0$ for all $z_m$ and $s\geq 1$. We have 
    \begin{align*}
        \prob(\bad(s,t)) &\leq \mathbb{E}_{Z}\Big[\prob\big(E_{\mathrm{post}}(s,Z)\big)\Big] + e^{-k \rho_{\mathrm{PL}}(d-t)}.
    \end{align*}
\end{lemma}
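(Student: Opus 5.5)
We bound $\prob(\bad(s,t))$ by splitting on whether the posterior over the latent state has concentrated on the true state after $s$ acquisitions. Let $m^\star$ be the index of the realized state $Z=z_{m^\star}$, and introduce the good event $E_{\mathrm{post}}(s,Z)^c=\{\alpha_{m^\star}(\filt_s)\ge 1-\varepsilon_\gamma\}$. Then
\begin{align*}
\prob(\bad(s,t)) \le \mathbb{E}_Z\big[\prob(E_{\mathrm{post}}(s,Z)\mid Z)\big] + \prob\big(\bad(s,t)\cap E_{\mathrm{post}}(s,Z)^c\big).
\end{align*}
The first term is exactly the first summand of the claim, so everything reduces to bounding the second term by $e^{-k\rho_{\mathrm{PL}}(d-t)}$, with $k=t-s$ the number of positions the selected view may fall below.

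\textbf{Step 1 (the posterior picks the right ranking).} On the good event, the test-time policy scores each available $j$ by $\EE[\Eb_j\mid\filt_s]=\sum_m \alpha_m(\filt_s)\,\EE[\Eb_j\mid Z=z_m,\filt_s]$. Given $Z$, the views are independent, because $Z$ is the only source of dependence. Hence $\EE[\Eb_j\mid Z=z_m,\filt_s]$ is proportional to $e^{u^{(m)}_j}$, with a common factor coming from $\EE[e^{2\varepsilon_j}]$; this factor is finite because $2\sigma<1$. I then show that whenever $\alpha_{m^\star}\ge 1-\varepsilon_\gamma$, the ordering of the conditional expectations agrees with the ordering of the utilities $u^{(m^\star)}_j$. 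Consider two views with $u^{(m^\star)}_j\ge u^{(m^\star)}_{j'}+\gamma$. Their gap is at least $(1-\varepsilon_\gamma)(e^\gamma-1)e^{u^{(m^\star)}_{j'}}$. The contamination from the other states is at most $\varepsilon_\gamma R_{\max}e^{u^{(m^\star)}_{j'}}$ after normalization. The definition $\varepsilon_\gamma=(e^\gamma-1)/(R_{\max}+e^\gamma-1)$ is precisely the threshold at which these two quantities balance. So on the good event, $\widehat{k}_{s+1}$ is the available view with maximal $\theta^{(m^\star)}_j$.

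\textbf{Step 2 (Plackett--Luce tail).} Conditionally on $Z=z_{m^\star}$, the energies of the unobserved views are Gumbel-perturbed utilities, independent of $\filt_s$. Their ranking is therefore Plackett--Luce with weights $\theta^{(m^\star)}$, by the Gumbel-max property. The event $\bad(s,t)$ says that the view with the largest weight is not among the top $t-s$ realized energies. I would argue sequentially over the PL draws. At each of the first $k=t-s$ draws from a remaining set of size at least $d-t$, the top-weight view is drawn with probability at least $\rho_{\mathrm{PL}}(d-t)$. Strictly, the remaining set shrinks, so I use monotonicity of $\rho_{\mathrm{PL}}$ in the set size, or simply restrict to subsets of size $d-t$ as in the definition. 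It follows that
\begin{align*}
\prob(\bad\cap\text{good}\mid Z)\le\big(1-\rho_{\mathrm{PL}}(d-t)\big)^{k}\le e^{-k\rho_{\mathrm{PL}}(d-t)}.
\end{align*}
Averaging over $Z$ gives the second summand.

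\textbf{Main obstacle.} I expect Step 1 to be the delicate part. The conditional expectations given $\filt_s$ involve the posterior weights together with the $Z$-dependent factors $e^{2\beta_j z_m}$, and showing that the ratio bound $R_{\max}$ suffices to dominate the cross-state contamination needs care with the normalization. I would first try the two-view comparison above, treating the worst case where all misplaced posterior mass sits on the state maximizing $e^{2\beta_{j'}z_m}/e^{2\beta_j z_m}$. A secondary subtlety in Step 2 is that conditioning on $\filt_s$ and on the policy's choice does not distort the PL law of the remaining views. This follows from conditional independence given $Z$, together with the assumption that $\prob[Z=z_m\mid\filt_s]>0$, which keeps the posterior well defined.
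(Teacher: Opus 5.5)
Your decomposition and Step~1 match the paper's proof. You split on $E_{\mathrm{post}}(s,Z)$ and show through the $\varepsilon_\gamma$ threshold that on its complement $\widehat{k}_{s+1}$ is the maximal-utility available view $j^\star$. Your additive comparison is equivalent to the paper's ratio comparison, since $(1-\varepsilon_\gamma)(e^\gamma-1)=\varepsilon_\gamma R_{\max}$. You then bound a Plackett--Luce tail for $j^\star$.

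\textbf{The gap is the per-draw bound in Step~2.} During the first $k=t-s$ sequential draws from $\ttset_{-s}$, the remaining set $J_r$ ($r=0,\dots,k-1$) has size $d-s-r\ge d-t+1$. The view $j^\star$ is drawn with probability $\theta^{(m^\star)}_{j^\star}/\sum_{\ell\in J_r}\theta^{(m^\star)}_\ell$. Neither of your justifications makes $\rho_{\mathrm{PL}}(d-t)$ a lower bound for this:
\begin{itemize}
\item $\rho_{\mathrm{PL}}$ is not non-decreasing in the set size; for nearly equal weights $\rho_{\mathrm{PL}}(n)\approx 1/n$.
\item Restricting to a subset of size $d-t$ only increases the max-to-sum ratio, so it cannot lower-bound the draw probability from the larger set $J_r$.
\end{itemize}
The monotonicity that does hold is in $r$. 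As $J_r$ shrinks while still containing $j^\star$, the draw probability grows. So every draw succeeds with probability at least $\theta^{(m^\star)}_{j^\star}/\sum_{\ell\in\ttset_{-s}}\theta^{(m^\star)}_\ell\ge\rho_{\mathrm{PL}}(d-s)$. This gives $(1-\rho_{\mathrm{PL}}(d-s))^{t-s}\le e^{-(t-s)\rho_{\mathrm{PL}}(d-s)}$. This is also what the paper's argument actually establishes: it lower-bounds $p^\star(z_{m^\star})$ by $\rho_{\mathrm{PL}}(d-s)$ using the available set of size $d-s$. Its exponent $s$ should read $t-s$.

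\textbf{You cannot close the gap with $\rho_{\mathrm{PL}}(d-t)$, because the inequality with $k=t-s$ is false as stated.} Take the following instance:
\begin{itemize}
\item $M=1$, so $E_{\mathrm{post}}(s,Z)=\varnothing$;
\item $\bm{\beta}=0$ and orthonormal columns $\mathbf{w}_j$;
\item $\eta_j=j\delta$ with $\delta\downarrow 0$, so $\gamma=2\delta>0$.
\end{itemize}
Then $\EE[\Eb_j\mid\mathcal{F}_s]\propto e^{2\eta_j}$, and the test-time policy acquires views deterministically in decreasing order of $\eta_j$. The energies are independent and the Plackett--Luce weights become uniform. Hence $\mathbb{P}(\bad(s,t))\to (d-t)/(d-s)=1/(1+y)$ with $y=(t-s)/(d-t)$. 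Meanwhile $e^{-(t-s)\rho_{\mathrm{PL}}(d-t)}\to e^{-y}<1/(1+y)$. For $d=5$, $s=1$, $t=3$ this is $1/2$ against $e^{-1}$. So the statement you can prove, and the one the proof supports, has $\rho_{\mathrm{PL}}(d-s)$ in the exponent.

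A minor point in Step~1: at the balance point your bookkeeping gives only $\ge$, so ties are not excluded. Keep the strictly positive contribution $\sum_{m\neq m^\star}\alpha_m(\mathcal{F}_s)e^{2\beta_j z_m}$ to the score of the better view. That, rather than well-definedness of the posterior, is where the hypothesis $\mathbb{P}[Z=z_m\mid\mathcal{F}_s]>0$ is used.
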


\begin{proof}
Let us first work conditionally on $Z = z_{m^\star}$. Define $j^\star(z_{m^\star}) := \arg\max_{j \in \ttset_{-t}}u_j^{(m^\star)}$. Since $\gamma > 0$, this index is uniquely defined. We claim that on $E_{\mathrm{post}}(s,z_{m^\star})^c$, the test-time optimal policy selects $\widehat{k}_{s+1} = j^\star(z_{m^\star})$. To see this, note that
$\mathbb{E}[\Eb_j \mid \mathcal{F}_s] \propto e^{\log a_j + 2\eta_j} \sum_{m=1}^M \alpha_m(\mathcal{F}_s) e^{2\beta_j z_m}$.
Define $T_j := \sum_{m=1}^M \alpha_m(\mathcal{F}_s) e^{2\beta_j z_m}$.

On $E_{\mathrm{post}}(s,z_{m^\star})^c$, we have $\alpha_{m^\star}(\mathcal{F}_s) \geq 1 - \varepsilon_\gamma$. Hence:
\begin{align*}
T_j = \alpha_{m^\star}(\mathcal{F}_s) e^{2\beta_j z_{m^\star}} + \underbrace{\sum_{m \neq m^\star} \alpha_m(\mathcal{F}_s) e^{2\beta_j z_m}}_{> 0} > (1 - \varepsilon_\gamma) e^{2\beta_j z_{m^\star}}
\end{align*}
where the strict inequality uses our assumption $\prob\big[Z=z_m\mid \filt_s\big] >0$. On the other hand,
\begin{align*}
T_j &\leq (1 - \varepsilon_\gamma) e^{2\beta_j z_{m^\star}} + \varepsilon_\gamma R_{\max} e^{2\beta_j z_{m^\star}} = e^{2\beta_j z_{m^\star}} \left( 1 - \varepsilon_\gamma + \varepsilon_\gamma R_{\max} \right).
\end{align*}
Thus, up to a positive multiplicative constant:
$(1 - \varepsilon_\gamma) e^{u_j^{(m^\star)}} \leq \mathbb{E}[\Eb_j \mid \mathcal{F}_s]  \leq \left( 1 - \varepsilon_\gamma + \varepsilon_\gamma R_{\max} \right) e^{u_j^{(m^\star)}}$.

For any $j \neq j^\star(z_{m^\star})$ in $\ttset_{-t}$:
\begin{align*}
\frac{\mathbb{E}[\Eb_{j^\star} \mid \mathcal{F}_s]}{\mathbb{E}[\Eb_j \mid \mathcal{F}_s]} 
&> e^{u_{j^\star}^{(m^\star)} - u_j^{(m^\star)}} \frac{1 - \varepsilon_\gamma}{1 - \varepsilon_\gamma + \varepsilon_\gamma R_{\max}} \geq e^\gamma \frac{1 - \varepsilon_\gamma}{1 - \varepsilon_\gamma + \varepsilon_\gamma R_{\max}},
\end{align*}
and this ratio exceeds 1 by definition of $\varepsilon_\gamma$, proving our claim.

From Assumption \ref{assumption:dgp}, conditional on $Z = z_{m^\star}$, we have $\log \Eb_j = u_j^{(m^\star)} + \xi_j$ where $\xi_j \sim \text{Gumbel}(0, 2\sigma)$. The ranking follows a Plackett--Luce distribution with weights $\{\theta_j^{(m^\star)}\}$~\citep{yellott1977relationship,ma2021learning}. Under the PL sequential sampling view, the probability that $j^\star(z_{m^\star})$ is not selected in the first $s$ draws is bounded as
$\prob\left( \rank_{\ttset_{-s}}(j^\star) > t-s \mid Z = z_{m^\star} \right) \leq e^{-s p^\star(z_{m^\star})} \leq e^{-s \rho_{\mathrm{PL}}(d-s)}$,
where $p^\star(z_{m^\star}) := \theta_{j^\star}^{(m^\star)} / \sum_{\ell \in J_s} \theta_\ell^{(m^\star)} \geq \rho_{\mathrm{PL}}(d-s)$. This bound does not depend on $z_{m^\star}$, so we may average over $Z$.

Finally, decomposing $\bad(s,t)$ by latent states and conditioning on $E_{\mathrm{post}}(s,z_{m^\star})$ vs.\ its complement:
\begin{align*}
\prob(\bad(s,t)) \leq \prob(E_{\mathrm{post}}(s)) + e^{-k \rho_{\mathrm{PL}}(d-t)}.
\end{align*}
\end{proof}

\subsection{Technical Lemma 4: Posterior Concentration}

Recall that given $Z = z_m$, we have $X_j \sim \text{Gumbel}(\eta_j + \beta_j z_m, \sigma)$ with density $f_{j,m}$. Fix $\lambda \in (0, 1)$ and define the Chernoff coefficient \citep{nielsen2011chernoff} between states $m$ and $m'$ for feature $j$ as
$c_j(m, m'; \lambda) := \int f_{j,m}(x)^{1-\lambda} f_{j,m'}(x)^\lambda \, dx$.
For two Gumbel distributions with locations $\mu, \mu'$ and common scale $\sigma$, this has the closed form
$c(\mu, \mu'; \lambda) = \frac{e^{-\lambda \Delta}}{(1 - \lambda) + \lambda e^{-\Delta}}$, where $\Delta := \frac{\mu - \mu'}{\sigma}$.

Let $\rho_\lambda := \max_{j \in [d]} \max_{m \neq m'} c_j(m, m'; \lambda) < 1$ and $p_{\min} := \min_{m \in [M]} p_m > 0$.

\begin{lemma}
\label{technical_lemma:posterior-concentration}
For any value of $Z = z_{m^\star}$, any $\varepsilon \in (0, 1)$, any $s \geq 1$, and $\lambda \geq 1$, 
\[
\prob\left( \alpha_{m^\star}(\mathcal{F}_s) \leq 1 - \varepsilon \mid Z=z_{m^\star} \right) 
\leq (M - 1) \left( \frac{1 - \varepsilon}{\varepsilon p_{\min}} \right)^\lambda \rho_\lambda^s.
\]
In particular, setting $\varepsilon = \varepsilon_\gamma$, we have
$\prob\big(E_{\mathrm{post}}(s,z_{m^\star})\big) \leq C_\lambda \rho_\lambda^s$,
where $C_\lambda := (M - 1) \left( \frac{1 - \varepsilon_\gamma}{\varepsilon_\gamma p_{\min}} \right)^\lambda$.
\end{lemma}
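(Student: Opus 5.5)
The plan is a Chernoff/Markov argument on the posterior odds of the true state. For $\lambda \ge 1$ this range actually helps: convexity of $u\mapsto u^\lambda$ lets me pull the power inside a weighted average. Fix $Z=z_{m^\star}$ and write $p_\star:=p_{m^\star}$. Let $L_m(\mathcal{F}_s):=\prod_{j\in\widehat{\Pb}_s} f_{j,m}(X_j)$ be the likelihood of the $s$ observed views under state $m$, and $r_m:=L_m/L_{m^\star}$ the likelihood ratios. By Bayes,
\[
\frac{1-\alpha_{m^\star}(\mathcal{F}_s)}{\alpha_{m^\star}(\mathcal{F}_s)} = \frac{\sum_{m\neq m^\star} p_m r_m}{p_\star},
\]
and the assumption $\prob[Z=z_m\mid\mathcal{F}_s]>0$ keeps all these quantities finite and positive. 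Hence $\{\alpha_{m^\star}\le 1-\varepsilon\}$ is exactly the event
\[
\frac{1}{p_\star}\sum_{m\neq m^\star} p_m r_m \;\ge\; \frac{\varepsilon}{1-\varepsilon}.
\]

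First, raise both sides to the power $\lambda$ and apply Markov's inequality. This gives
\[
\prob\big(\alpha_{m^\star}\le 1-\varepsilon\mid Z=z_{m^\star}\big)\le \Big(\tfrac{1-\varepsilon}{\varepsilon}\Big)^\lambda\,\EE\Big[\Big(\tfrac{1}{p_\star}\textstyle\sum_{m\ne m^\star}p_m r_m\Big)^\lambda\,\Big|\,Z=z_{m^\star}\Big].
\]
Second, write $\sum_{m\ne m^\star}p_m r_m=(1-p_\star)\sum_{m\neq m^\star} w_m r_m$ with weights $w_m:=p_m/(1-p_\star)$ summing to one. Since $\lambda\ge 1$, Jensen's inequality gives $(\sum_m w_m r_m)^\lambda\le\sum_m w_m r_m^\lambda$. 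The prefactor is then at most $\big((1-p_\star)/p_\star\big)^\lambda\le p_{\min}^{-\lambda}$, so the bound reduces to $p_{\min}^{-\lambda}\max_{m\neq m^\star}\EE[r_m^\lambda\mid Z=z_{m^\star}]$. This is even stronger than the stated bound, since $\max_m\le(M-1)\max_m$ trivially recovers the factor $M-1$.

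Third, I compute $\EE[r_m^\lambda\mid Z=z_{m^\star}]$. Given $Z$, the views are independent, and one view's factor has expectation
\[
\int f_{j,m^\star}\,(f_{j,m}/f_{j,m^\star})^\lambda=\int f_{j,m^\star}^{1-\lambda}f_{j,m}^{\lambda}=c_j(m^\star,m;\lambda)\le\rho_\lambda.
\]
The views, however, are chosen adaptively: $\widehat{k}_{u}$ is $\mathcal{F}_{u-1}$-measurable. I therefore peel off one factor at a time with the tower property. The partial products $\prod_{u\le s}(f_{\widehat{k}_u,m}/f_{\widehat{k}_u,m^\star})^\lambda$ satisfy
\[
\EE[\,\cdot_{s}\mid\mathcal{F}_{s-1},Z=z_{m^\star}]\le \rho_\lambda\cdot(\,\cdot_{s-1}),
\]
because the fresh view's conditional law given $\mathcal{F}_{s-1}$ and $Z$ is still $f_{\widehat{k}_s,m^\star}$ by conditional independence. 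Induction then gives $\rho_\lambda^s$. Combining the three steps yields the claim; setting $\varepsilon=\varepsilon_\gamma$ gives the bound $C_\lambda\rho_\lambda^s$ for $E_{\mathrm{post}}$.

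The main obstacle is that for $\lambda\ge1$ the Chernoff-type integral is a Rényi integral that must be finite, and the proof needs $\rho_\lambda$ defined as the maximum of these coefficients, as the lemma does. In the Gumbel closed form $e^{-\lambda\Delta}/((1-\lambda)+\lambda e^{-\Delta})$, the denominator must stay positive, i.e.\ $\lambda e^{-\Delta}>\lambda-1$ for every pair of locations. I would check this directly from the Gumbel tail: the right tail of $f_m^\lambda/f_{m^\star}^{\lambda-1}$ is integrable iff this condition holds. If it fails for some pair, the integral is infinite and the bound is vacuous (trivially true as an inequality with $+\infty$). Under that positivity condition, everything else (Markov, Jensen, the conditional-independence peeling) goes through unchanged. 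The contraction $\rho_\lambda<1$ itself is used only downstream, when summing the geometric series in the proof of Proposition~\ref{prop:proxy_bound_app}.
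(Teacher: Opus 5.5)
Your proof is correct, but it takes a different route from the paper's. The paper first uses pigeonhole to find some $m\neq m^\star$ with $\kappa_s(m)\ge\varepsilon/((1-\varepsilon)(M-1))$. It then applies a union bound with a separate Markov inequality for each wrong state. It evaluates $\mathbb{E}[\kappa_s(m)^\lambda\mid Z=z_{m^\star}]$ as though $X_{i_1},\dots,X_{i_s}$ were independent draws at fixed indices. You instead apply Markov once to the whole posterior odds and pass the power inside the $p$-weighted average by convexity. You also handle the adaptively chosen indices by peeling off one factor at a time with the tower property.

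Your route gives the sharper constant $\bigl((1-p_{m^\star})(1-\varepsilon)/(p_{m^\star}\varepsilon)\bigr)^\lambda\rho_\lambda^s$, with no factor $M-1$. The paper's pigeonhole threshold, carried through literally, produces $(M-1)^{1+\lambda}$, not the stated $M-1$. Your peeling step also fills a gap the paper skips, since $\widehat{k}_u$ depends on $\mathcal{F}_{u-1}$. In exchange, the paper's union-bound argument needs no convexity and is valid for every $\lambda>0$.

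You should make one point explicit. For $\lambda\ge1$, Jensen gives $c_j(m^\star,m;\lambda)=\mathbb{E}_{f_{j,m^\star}}\bigl[(f_{j,m}/f_{j,m^\star})^\lambda\bigr]\ge\bigl(\int f_{j,m}\bigr)^\lambda=1$. Hence $\rho_\lambda\ge1$, and the bound never decays in $s$. The contraction $\rho_\lambda<1$ that is needed downstream is only possible for $\lambda\in(0,1)$, which is the range in which the paper actually defines the Chernoff coefficient.

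In that range, replace your Jensen step by subadditivity, $\bigl(\sum_m a_m\bigr)^\lambda\le\sum_m a_m^\lambda$. Markov with a fractional exponent still holds, and your peeling argument is unchanged. The Gumbel integral is then automatically finite, since $(1-\lambda)+\lambda e^{-\Delta}>0$. You recover exactly the stated factor $(M-1)p_{\min}^{-\lambda}$.

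A minor correction: with the paper's closed form, the divergence for $\lambda>1$ occurs in the left, doubly-exponential tail, not the right one.
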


\begin{proof}
    For any state $m \neq m^\star$, define the likelihood ratio
$\kappa_s(m) := \frac{p_m}{p_{m^\star}} \prod_{r=1}^s \frac{f_{i_r, m}(X_{i_r})}{f_{i_r, m^\star}(X_{i_r})}$,
where $X_{i_r} = \log x_{i_r}$. By Bayes' rule,
$\alpha_{m^\star}(\mathcal{F}_s) = \frac{1}{1 + \sum_{m \neq m^\star} \kappa_s(m)}$.
If $\alpha_{m^\star}(\mathcal{F}_s) \leq 1 - \varepsilon$, then $\sum_{m \neq m^\star} \kappa_s(m) \geq \frac{\varepsilon}{1 - \varepsilon}$. By the pigeonhole principle, there exists some $m \neq m^\star$ such that $\kappa_s(m) \geq \frac{\varepsilon}{(1 - \varepsilon)(M - 1)}$.

By a union bound and Markov's inequality with exponent $\lambda \geq 1$:
\begin{align*}
\prob\left( \alpha_{m^\star}(\mathcal{F}_s) \leq 1 - \varepsilon \mid Z=z_{m^\star} \right) 
\leq \sum_{m \neq m^\star} \frac{\mathbb{E}[\kappa_s(m)^\lambda\mid Z=z_{m^\star} ]}{\left( \frac{\varepsilon}{(1-\varepsilon)(M-1)} \right)^\lambda}.
\end{align*}

Since $X_{i_1}, \ldots, X_{i_s}$ are independent draws from $f_{i_1, m^\star}, \ldots, f_{i_s, m^\star}$ given $Z = z_{m^\star}$, each factor satisfies
$\mathbb{E}\left[ \left( \frac{f_{i_r, m}(X_{i_r})}{f_{i_r, m^\star}(X_{i_r})} \right)^\lambda \,\Big|\, Z = z_{m^\star} \right] = c_{i_r}(m^\star, m; \lambda) \leq \rho_\lambda$.
Therefore,
$\mathbb{E}\left[ \kappa_s(m)^\lambda \mid Z = z_{m^\star}\right] \leq p_{\min}^{-\lambda} \rho_\lambda^s$.
Summing over all $M - 1$ wrong states yields the claim. Since the bound does not depend on $z_{m^\star}$, it holds for any latent state.
\end{proof}

\subsection{Technical Lemma 5: Spectral Misalignment Bound}

Define for any subset $\bm{P} \subset \{1,\dots,d\}$:
$\mathcal{M}(\bm{P}):= \trueloss(\bm{P}) - \proxyloss(\bm{P})$
and 
$u(\bm{P}) := \Big[\mathbf{1}_{\{j \notin \bm{P}\}} \|\wb_j\| \xb_j\Big]_{j=1}^d$.

\begin{lemma}
\label{tech_lemma_miss}
    For any two sets $\bm{P},\bm{P}'$, let $\bm{U}(\bm{P},\bm{P}') := \big(\bm{P} \cap \bm{P}'\big)^c$ and $\Gamma(\Pb, \Pb') = \Pb \Delta \Pb'$. We have 
    \[
|\Mcal(\bm{P}) - \Mcal(\bm{P}') | \leq C' \left(\sqrt{\frac{|\bm{U}(\bm{P},\bm{P}')|}{p_{\text{eff}}} } + \frac{|\bm{U}(\bm{P},\bm{P}')|}{{p}_{\text{eff}}}\right) \sqrt{\sum_{j \in \bm{U}(\bm{P},\bm{P}')} \Eb_j}\sqrt{\sum_{j \in \Gamma(\bm{P},\bm{P}')} \Eb_j}
\] 
\end{lemma}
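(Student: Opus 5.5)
We write $\mathcal{M}(\Pb)$ as a quadratic form in the normalized weight directions. Let $B=[\wb_j/\|\wb_j\|]_{j=1}^d$ and $H=B^\top B-I$. Then
\[
\trueloss(\Pb)=\Big\|\sum_{j\notin\Pb}\wb_jx_j\Big\|_2^2=u(\Pb)^\top B^\top B\,u(\Pb),
\]
and the diagonal part equals $\sum_{j\notin\Pb}\Eb_j$. Treating $\proxyloss(\Pb)$ pathwise as this sum of energies (the form used in the lemma's application), we get the identity $\mathcal{M}(\Pb)=u(\Pb)^\top H\,u(\Pb)$. The diagonal of $H$ is zero, so only the cross terms $\Cb_{jj'}$ survive, as expected.

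Next we take the difference of the two quadratic forms. Write $u=u(\Pb)$ and $u'=u(\Pb')$. Both vectors are supported on $\bm{U}(\Pb,\Pb')=(\Pb\cap\Pb')^c$, since an index $j$ can have a nonzero entry only if $j\notin\Pb$ or $j\notin\Pb'$. Hence
\[
\mathcal{M}(\Pb)-\mathcal{M}(\Pb')=u^\top H_U u-u'^\top H_U u'=(u-u')^\top H_U(u+u'),
\]
where $H_U$ is the restriction of $H$ to $U=\bm{U}(\Pb,\Pb')$ and the last step uses symmetry of $H_U$. The vector $u-u'$ is supported on $\Gamma(\Pb,\Pb')=\Pb\Delta\Pb'$, with entries $\pm\|\wb_j\|x_j$. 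Therefore $\|u-u'\|_2=\sqrt{\sum_{j\in\Gamma}\Eb_j}$. The vector $u+u'$ has entries in $\{0,\pm\|\wb_j\|x_j,\pm2\|\wb_j\|x_j\}$ on $U$, so $\|u+u'\|_2\le 2\sqrt{\sum_{j\in U}\Eb_j}$.

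Applying Cauchy–Schwarz with the operator norm gives
\[
|\mathcal{M}(\Pb)-\mathcal{M}(\Pb')|\le 2\|H_U\|_{\mathrm{op}}\sqrt{\textstyle\sum_{j\in U}\Eb_j}\sqrt{\textstyle\sum_{j\in\Gamma}\Eb_j}.
\]
Since $H_U=\Bb_U^\top\Bb_U-\Ib_{|U|}$, Assumption~\ref{asm:isometry} with $J=U$ and $m=|U|$ bounds $\|H_U\|_{\mathrm{op}}\le C(\sqrt{|U|/p_{\mathrm{eff}}}+|U|/p_{\mathrm{eff}})$. This yields the claim with $C'=2C$.

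The only delicate step is the first one. We must make sure the proxy loss appearing in the lemma really is the pathwise diagonal $\sum_{j\notin\Pb}\Eb_j$, rather than conditional expectations of the energies. If it were the conditional version, $\mathcal{M}$ would not be a pure off-diagonal quadratic form, and we would instead apply the argument to the pathwise proxy, which is the version used in the proof of Theorem~\ref{theorem:loss_bound}. The support bookkeeping (that $u\pm u'$ live on $U$ and $u-u'$ on $\Gamma$) is then routine.
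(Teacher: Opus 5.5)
Your proof is correct and follows the paper's argument essentially step for step: the identity $\Mcal(\Pb)=u(\Pb)^\top H\,u(\Pb)$, the factorization $(u-u')^\top H_U(u+u')$ on $\bm{U}(\Pb,\Pb')$, the bounds $\|u-u'\|_2^2=\sum_{j\in\Gamma}\Eb_j$ and $\|u+u'\|_2^2\le 4\sum_{j\in\bm{U}}\Eb_j$, and near isometry applied with $J=\bm{U}(\Pb,\Pb')$, giving $C'=2C$. Your caveat is well placed: the proxy must be read pathwise as $\sum_{j\notin\Pb}\Eb_j$ rather than as conditional expectations, and the paper's proof (like that of Lemma~\ref{lemma:diff_proxy_loss}) silently uses exactly this pathwise version.
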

\begin{proof}
Let $B := \big[\frac{\wb_j}{\|\wb_j\|}\big]_{j=1}^d$ and define $H:= B^TB - I$. We have
$\Mcal(\bm{P}) - \Mcal(\bm{P}') = (u(\bm{P}) - u(\bm{P}'))^\top H_U (u(\bm{P}) + u(\bm{P}'))$
where $H_U := B_U^\top B_U - I$ is the restriction of $H$ to $\bm{U}(\bm{P},\bm{P}')$. Therefore:
\begin{align*}
   |\Mcal(\bm{P}) - \Mcal(\bm{P}') | &\leq \|H_U\|_\text{op} \|u(\bm{P}) - u(\bm{P}')\|_2 \|u(\bm{P}) + u(\bm{P}')\|_2  \\
&\leq \|H_U\|_\text{op} \|u(\bm{P}) - u(\bm{P}')\|_2(\|u(\bm{P})\|_2 + \|u(\bm{P}')\|_2 ).
\end{align*}
Note that $|\bm{U}(\bm{P},\bm{P}')| = d-t+k_t$, and $\|u(\bm{P}) - u(\bm{P}')\|_2^2 = \sum_{j \in {\Gamma}(\bm{P},\bm{P}')} \|w_j\|_2^2 x_j^2 = \sum_{j \in \bm{U}(\bm{P},\bm{P}')} V_j$. Using $\|u(\bm{P}) + u(\Pb')\|_2^2 \leq 4 \sum_{j \in U(\Pb, \Pb')} \Eb_j$ and Assumption \ref{asm:isometry} concludes the proof.
\end{proof}

\newpage

\section{Experimental Details}
\label{appendix:exp_details}

\subsection{Datasets}
\label{appendix:datasets}

We evaluate \ours and all baselines on six datasets spanning seven downstream tasks. Table~\ref{tab:app-datasets} provides a summary. Below we describe each dataset in detail, including the downstream task, preprocessing choices, and the specific aspects that make it relevant for evaluating budgeted feature acquisition.

\begin{table}[t]
    \centering
     \caption{Overview of datasets used for experiments.}
     \adjustbox{max width=\textwidth}{
    \begin{tabular}{lccccl}
   \toprule
        \textbf{Dataset} &\textbf{Task}& \textbf{Train/Val/Test} & \textbf{\# classes}&\# \textbf{views} &  \textbf{Source}\\
        \midrule
        Synthetic & \begin{tabular}{@{}c@{}}Random isotropic \\ regression \end{tabular} & 200/-/200 & ------ & 64 & ------ \\ \hline
         \multirow{2}{*}{NSCLC patient cohort} & Cancer grading & 1.6k/400/400& 3 & 40 & \citet{cords2024cancer} \\
         & Cancer subtyping & & 2 & 40 & \citet{cords2024cancer}\\ \hline
        {\hp} &\begin{tabular}{@{}c@{}}Cell type \\ annotation \end{tabular} & 9.5k/1k/4.2k & 14 & 200 &  \citet{dingtabula} \\ \hline
         MiniBooNE & \begin{tabular}{@{}c@{}}Particle \\ identification \end{tabular} &65k/5k/6k & 2 & 33 & \citet{miniboone}\\\hline
         MNIST & \begin{tabular}{@{}c@{}}Image \\ classification \end{tabular} & 50k/10k/10k 
         & 10 & 196 & \citet{lecun1998mnist}  \\ \hline
         Fashion-MNIST & \begin{tabular}{@{}c@{}}Image \\ classification \end{tabular} & 50k/10k/10k 
         & 10 & 196 & \citet{xiao2017fmnist}  \\ \hline 
         MMEarth-Bench & \begin{tabular}{@{}c@{}}  Soil pH regression \end{tabular} & 5.7k/1.2k/350 
         & ------ & 12 & \citet{gordon2026mmearth}  \\
         \bottomrule
         
    \end{tabular}
    \label{tab:app-datasets}
}
\end{table}

\subsubsection{Vision Datasets: \mnist and \fmnist}
\mnist~\citep{lecun1998mnist} and \fmnist~\citep{xiao2017fmnist} are standard computer vision benchmarks consisting of images of handwritten digits and Zalando article images, respectively. Both have 10-class classification as the downstream task. Following~\citet{li2021active}, we downsample images from $28 \times 28$ to $14 \times 14$ resolution, treating each pixel as an independently acquirable feature and yielding $d = 196$ acquisition units. This downsampling increases the information density per pixel and makes the acquisition problem more challenging, since fewer pixels suffice to reach a given accuracy. We use a 70:15:15 split (50k/10k/10k). To ensure a controlled comparison, all acquisition methods are initialized from the same top-corner pixel ($\vcenter{\hbox{\fcolorbox{black}{topcornercolor}{\rule{0pt}{0.5pt}\rule{0.5pt}{0pt}}}}$), so differences in accuracy reflect only the quality of the subsequent acquisition strategy.

\subsubsection{Physics: \minib}
The MiniBooNE dataset~\citep{miniboone} originates from the Fermilab MiniBooNE neutrino experiment. The goal is binary classification: distinguishing electron-neutrino signal events from muon-neutrino background events. The dataset contains $d = 33$ reconstructed physics variables derived from detector signals, including Cherenkov ring characteristics, event energy, and track reconstruction outputs. In practice, computing these reconstructed variables from raw detector data is expensive and the variables exhibit substantial mutual correlations, making it undesirable to rely on the full feature set at test time. We adopt the preprocessing protocol of~\citet{norcliffe2025stochastic}, which standardizes features and removes duplicate events, and use a 70:15:15 split (65k/5k/6k). This dataset is particularly relevant for evaluating \ours because the strong inter-feature correlations create a setting where greedy, myopic acquisition can be suboptimal: selecting the individually most informative variable first may be redundant with the second-most informative one, motivating the long-horizon planning that \ours provides.

\subsubsection{Single-Cell Transcriptomics: \hp}
The \hp dataset~\citep{dingtabula} is a curated single-cell RNA-sequencing collection from human pancreatic tissue with 14 cell type labels spanning endocrine, exocrine, and immune populations. This dataset is widely used as a benchmark in the single-cell foundation model literature. From the 1,200 genes available, we retain the top 200 most variable genes using Cell Ranger~\citep{zheng2017massively} and ScanPy~\citep{wolf2018scanpy}, following standard practice for reducing noise from uninformative transcripts while preserving biologically meaningful variation. We use the preprocessing, gene filtering, and data splits provided by~\citep{dingtabula} (9.5k/1k/4.2k). Cell-type annotation serves as a representative downstream task: it is the most common evaluation objective in single-cell analysis and requires the acquired gene subset to preserve fine-grained phenotypic distinctions between closely related cell populations. Crucially, the \ours policy is trained without any access to cell-type labels; the annotation task is used purely for post-hoc evaluation of the acquired gene panels.

\subsubsection{Spatial Proteomics: NSCLC Patient Cohort}
The NSCLC clinical cohort~\citep{cords2024cancer} consists of spatial proteomics data acquired via imaging mass cytometry (IMC) at $1\,\mu\mathrm{m}$/px resolution with $d = 40$ semantically meaningful protein markers. The marker panel includes tumor markers (e.g., \texttt{panCK}), proliferation markers (e.g., \texttt{Ki-67}), immune cell markers (e.g., \texttt{CD4}, \texttt{CD8}, \texttt{CD68}), stromal markers, and vascular markers, providing a comprehensive view of the tumor microenvironment. Clinical annotations include cancer subtyping (2 classes: adenocarcinoma vs.\ squamous cell carcinoma) and cancer grading (3 classes), the latter being substantially more challenging due to subtler morphological and molecular differences between grades. We adopt the preprocessing pipeline described in~\citep{wenckstern2025ai}, which includes cell segmentation, marker normalization, and quality filtering. The experimental setup is inherently sequential: \ours iteratively selects one protein marker at a time, the VirTues backbone updates its tissue embedding given the newly observed marker, and the updated embedding informs the next acquisition decision. This mirrors the real-world workflow in multiplexed imaging, where tissue sections are characterized through successive rounds of antibody staining and each additional round carries a cost. The acquisition budget is set to $T = 10$ markers (25\% of the full panel), reflecting realistic constraints on antibody panel size in clinical multiplex imaging.

\subsubsection{Geospatial Earth Observations: MMEarth-Bench}
\label{appendix:geospatial_details}

The MMEarth-Bench dataset~\citep{gordon2026mmearth} is a global multimodal Earth-observation benchmark to evaluate models under heterogeneous sensing conditions. Each location is represented by 12 aligned geospatial views, including optical satellite imagery, radar measurements, terrain and elevation products, land-cover maps, canopy-height information, meteorological variables, and geolocation metadata. It provides a multimodal acquisition setting: each view corresponds to a distinct sensing product or environmental data source that may differ in availability, cost, spatial resolution, and information content. We treat each modality group as one acquisition unit and pair \ours with a frozen multimodal geospatial representation model trained on the full set of available views. We evaluate downstream transfer on soil pH regression, reporting $R^2$
 as a function of the acquired modality budget.

\subsubsection{Synthetic Dataset}
\label{appendix:synthetic_details}
\begin{wrapfigure}{r}{0.5\textwidth}
    \centering
    \includegraphics[width=0.9\linewidth]{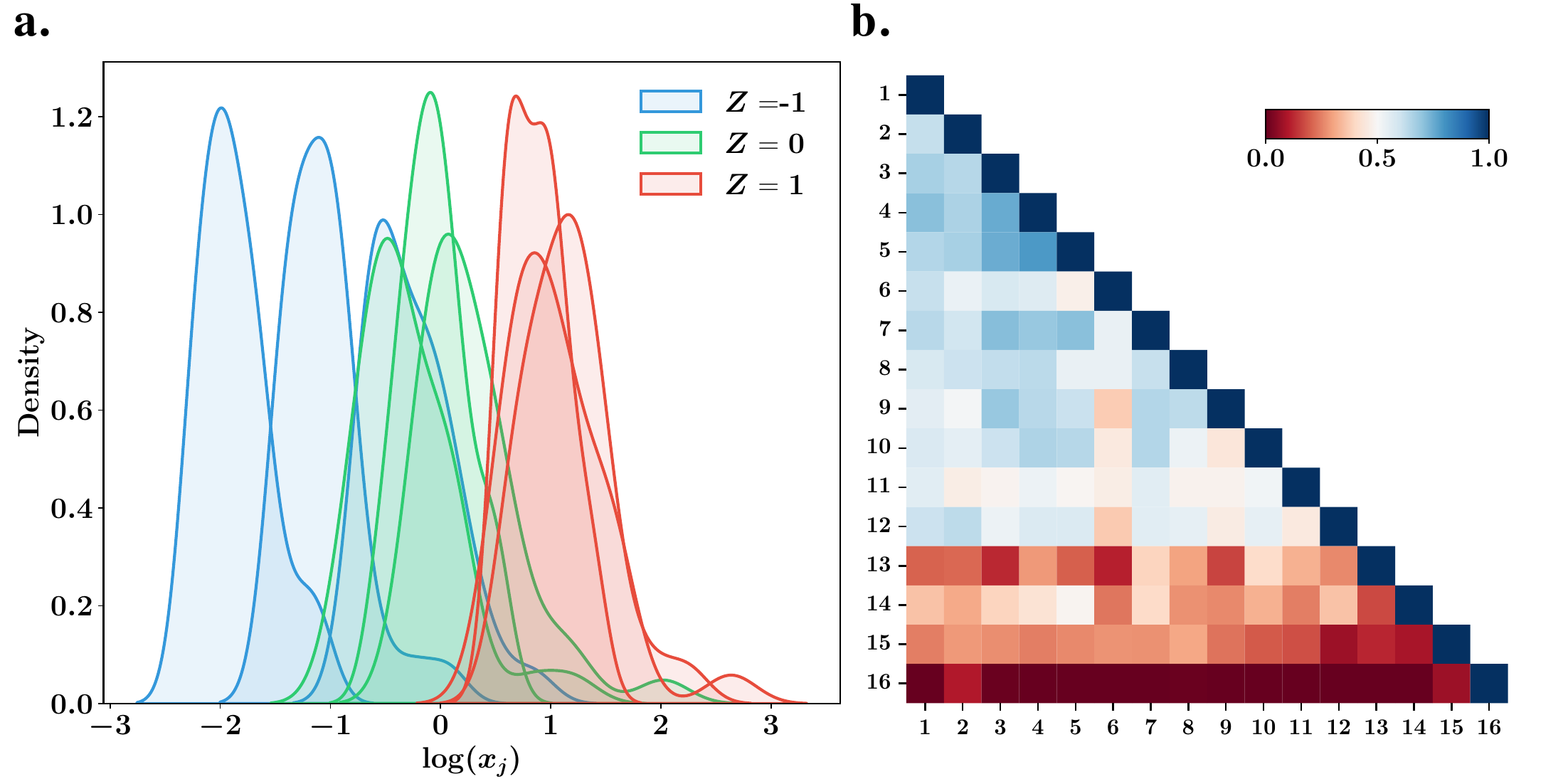}
    \caption{\textbf{Overview of the Latent Gumbel Correlation Model.} \textbf{a.} Distribution of $\log(\xb_j)$ for 3 equiprobable latent states $Z$ with support $\{-1,0,1\}$. \textbf{b.} Correlation of the features under the gradient correlation family described in \S~\ref{appendix:synthetic_details}.}
    \label{fig:app-cor}
\end{wrapfigure}
We generate synthetic data under the data generating process (DGP) of Assumption~\ref{assumption:dgp}. We consider a gradient correlation family shared by a common latent state $Z$, where $Z \sim \text{Discrete}\left(\{z_m\}_{m=1}^M, \{p_m\}_{m=1}^M\right)$,
$\varepsilon_j \sim \text{Gumbel}(0,\sigma)$ being independent noise. Each feature is generated as
$X_j \sim \eta_j + \beta_j Z + \varepsilon_j$, where $\eta_j$ and $\beta_j$ are deterministic coefficients, and we develop a gradient-like correlation structure by letting
\[
\beta_j := \beta_{\max} \left(1 - \alpha \frac{j-1}{d-1}\right),
\]
and set $x_j = e^{X_j}$. In Fig.~\ref{fig:app-cor}a, we show the data distribution when the support of $Z$ has cardinality 3, and $z_m \in \{-1,0,1\}$. Here $\beta_{\max}$ is the maximum strength and $\alpha \in [0,1]$ controls the decay rate. Under this construction, early features exhibit strong mutual correlations through $Z$, while later features are increasingly dominated by independent noise. The induced correlation structure is:
\[
\text{Corr}(X_i, X_j) = \frac{\beta_i \beta_j \text{Var}(Z)}{\sqrt{\left(\beta_j^2 \text{Var}(Z) + \frac{\pi^2\sigma^2}{6} \right)\left(\beta_k^2 \text{Var}(Z) + \frac{\pi^2\sigma^2}{6} \right)}}.
\]

\paragraph{Regret bound experiment (Fig.~\ref{fig:croissant}} We set $p = 25, d = 32$ (i.e., $\Wb \in \RR^{p\times d}$), $\sigma = 0.2$, $M=5$ with $z_m \in \{-1, -1/2, 0, 1/2, 1\}$ and $p_m = 0.2 \; \forall m$. We consider $\eta_j = \frac{j-1}{d-1}-\frac{1}{2}$ for $j \in 1, \dots, d$. For the gradient correlation family, we set $\beta_{\max} = 1.5$ and $\alpha = 0.9$.

\paragraph{Random isotropic regression experiment (Fig.~\ref{fig:control}).} We increase the acquisition horizon by setting $d=64$ and $p=50$, keeping all other hyperparameters fixed. We sample the ground-truth regression weights as $\Wb^* \sim \mathcal{N}(0, \tfrac{1}{d} I)$ and generate labels as $\yb = \Wb^* \xb + \bm{\varepsilon'}$ where $\bm{\varepsilon}' \in \mathbb{R}^p$ has i.i.d. entries sampled from $\mathrm{Gumbel}(0,1)$.

\subsection{Architectures and Backbone Details}
\label{appendix:architectures}

\subsubsection{Masked Autoencoder (\mnist, \fmnist, \minib)}
For image and tabular data, we train a masked autoencoder via self-supervised learning on the full dataset. Given an input $x$, a random subset of features is revealed via a binary mask $m \in \{0,1\}^d$, and the model learns an encoder $z = f_\theta(x \odot m, m)$ and decoder $\hat{x} = g_\theta(z)$ to reconstruct the full input using an MSE objective. The encoder receives both the masked input and the mask itself, allowing it to distinguish between zero-valued features and unobserved ones. The masking ratio is sampled uniformly during training to expose the model to a wide range of partial observations. After pretraining, the encoder is frozen and used as the representation backbone for \ours. For downstream evaluation, a linear probe is trained on $z$ for classification under random masking to simulate partial observations; this ensures that the probe is calibrated to the type of partial inputs encountered during sequential acquisition. The same frozen encoder is shared across all baselines and \ours.

\subsubsection{TABULA Foundation Model}
For single-cell RNA-seq, we use the pretrained TABULA foundation model~\citep{dingtabula}, a self-supervised transformer designed for single-cell transcriptomic data. TABULA is trained on large-scale single-cell atlases and can embed partially observed gene expression profiles into cell-level representations. When a subset of genes is unobserved, TABULA treats these entries as missing and produces an embedding conditioned only on the available genes. This native support for partial observations makes it well-suited as a backbone for \ours. The downstream task of cell-type annotation is evaluated via a linear probe trained on top of the cell embeddings. We use TABULA without any fine-tuning; the model weights are frozen throughout both policy learning and evaluation.

\subsubsection{VirTues Foundation Model}
For the NSCLC patient cohort, we deploy \ours with the VirTues spatial proteomics foundation model~\citep{wenckstern2025ai}. VirTues is a graph-based transformer that operates on cell-level protein expression profiles within tissue sections, producing both cell-level and tissue-level representations. Tissue-level representations are obtained by averaging the cell-level summary tokens across all cells in the tissue sample. When a protein marker is unobserved, its expression value is treated as missing in the cell-level input, and VirTues produces an embedding conditioned on the available markers. Linear probes are trained for cancer subtyping and grading using the tissue-level embeddings. Importantly, the same \ours policy is used for both downstream tasks: the acquisition order is determined once by the policy and then evaluated on both tasks, demonstrating task-agnostic panel design.

\subsubsection{GeoSpatial Encoder}
For MMEarth-Bench, a masked encoder is used. Concretely, each of the 12 Earth-observation modalities is first converted to a modality-level vector by global average pooling for spatial inputs and identity for tile-level inputs, then projected to a shared dimension and multiplied by a binary modality mask indicating which modalities are observed. The masked per-modality projections are concatenated
  together with the mask itself and passed through an MLP encoder to produce a latent representation $z$. This encoder is trained  under random modality masking, so it learns to make reconstructions from arbitrary partial modality subsets while
  explicitly distinguishing unobserved modalities from zero-valued features. 

\paragraph{Backbone-agnosticism.} Across all settings, both \ours and the baselines have access only to the pretrained models during the feature selection phase. Linear probes are used exclusively for downstream evaluation after feature acquisition. \ours interfaces with each backbone only through the partial embedding $\zb(S_t)$ and the full-information target $\zb^\star$; no architecture-specific modifications are needed. This design means that \ours can be deployed with any new backbone simply by providing the embedding function, without rewriting any acquisition logic.

\subsection{Evaluation Protocol}
\label{appendix:evaluation}

Each dataset is partitioned into training, validation, and test splits. For image and tabular tasks, we use a 70:15:15 split. For cell-type annotation, we adopt the predefined splits released with TABULA~\citep{dingtabula}, and for cancer subtyping and grading, we follow the test splits defined in~\citep{wenckstern2025ai}.

\paragraph{Acquisition curves.} The primary evaluation metric is the acquisition curve: for each method, we plot downstream performance (MSE or classification accuracy) as a function of the number of features acquired $t \in \intset{1}{d}$. This allows us to compare methods across the full range of acquisition budgets and identify regimes where differences are most pronounced. In practice, the early regime (5--25\% of features) is of greatest interest, since this is where cost savings are largest.

\paragraph{Linear probes.} For all real-world datasets, downstream performance is evaluated by training a linear probe on the partial embeddings $\zb(S_t)$ produced by the frozen backbone. A single linear probe is trained per downstream task using random masking augmentation and held fixed across acquisition budgets. This ensures that the evaluation is task-specific (it depends on the downstream labels) while the acquisition policy remains task-agnostic.

\paragraph{Reproducibility.} All experiments are run with three random seeds for both the \ours policy and the downstream evaluation. The \texttt{Random} baseline is additionally averaged over five independent runs per seed to reduce variance. Error bands in all figures correspond to standard error across seeds.

\subsection{Baselines}
\label{appendix:baselines}

\newcommand{\bsoracle}{\texttt{Oracle}\xspace}
\newcommand{\bsrandom}{\texttt{Random}\xspace}
\newcommand{\bsprior}{\texttt{Prior}\xspace}
\newcommand{\bsgir}{\texttt{GIR}\xspace}
\newcommand{\bsesmmax}{\texttt{ESM-Max}\xspace}
\newcommand{\bsesmmin}{\texttt{ESM-Min}\xspace}
\newcommand{\bslap}{\texttt{Laplacian Score}\xspace}
\newcommand{\bsspec}{\textsc{spec}-$\varphi_3$\xspace}
\newcommand{\bsmcfs}{\texttt{MCFS}\xspace}
\newcommand{\bsudfs}{\texttt{UDFS}\xspace}
\newcommand{\bssvd}{\texttt{SVD-Ent}\xspace}
\newcommand{\bsvar}{\texttt{Variance}\xspace}
\newcommand{\bsgsmrl}{\texttt{GSMRL-AIR}\xspace}
\newcommand{\bsalg}{\texttt{ALG}\xspace}
\newcommand{\bscig}{\texttt{CIG}\xspace}

We compare \ours against a total of 15 baselines spanning synthetic~(S), image~(I), tabular~(T), single-cell RNA-seq~(C), and multiplex imaging~(M) datasets. Below, we describe each baseline and indicate the settings in which it is evaluated.

\subsubsection{Oracle and Random Baselines}

\xhdr{\bsoracle (S, I, T).}
A one-step myopic oracle that greedily selects the next feature maximizing the immediate reduction in distance between the current embedding and the full (gold) embedding. This policy is anti-causal: it has access to unobserved features and corresponds to the oracle analyzed in our theoretical results (\S\ref{section:theory}). It serves as an upper bound on greedy acquisition performance but is not deployable at test time.

\xhdr{\bsrandom (S, I, T, C, M).}
Selects the next view uniformly at random from the set of unobserved features. Provides a minimal reference point for all experiments.

\subsubsection{Model-Driven Heuristic Baselines}

\xhdr{\bsprior (S).}
Specific to the synthetic setting. Ranks features according to their unconditional expected contribution $\mathbb{E}[E_j]$, which under Assumption~\ref{assumption:dgp} admits the closed form:
\[
\EE[\Eb_j] = \|\mathbf{w}_j\|_2^2 \Gamma(1-2\sigma)e^{2\eta_j} \sum_{m=1}^M p_m e^{2\beta_j z_m} \propto \|\mathbf{w}_j\|_2^2 e^{2\eta_j} \sum_{m=1}^M p_m e^{2\beta_j z_m}.
\]
Unlike \ours, this baseline does not condition on observed features and therefore captures only population-level feature importance.

\xhdr{\bsgir (I, T, S, M).}
Global Impact Ranking is a practical extension of \bsprior that does not rely on a known generative model. For each feature $i$, a validation-set score $s_i := \|\zb_{\text{full}} - \zb_{-i}\|_2$ is computed, where $\zb_{\mathrm{full}}$ denotes the self-supervised embedding obtained using all features and $\zb_{-i}$ denotes the embedding obtained when feature $i$ is withheld. This induces a fixed, instance-independent acquisition order that captures population-level importance but not individual-level adaptivity.

\xhdr{\bsesmmax, \bsesmmin (M).}
For multiplex imaging only. Protein markers are embedded using ESM2~\citep{lin2023evolutionary}. For each marker, pairwise Euclidean distances are computed in the ESM2 embedding space. Markers are ranked by either minimum (\bsesmmin) or maximum (\bsesmmax) distance, yielding fixed acquisition orders based on semantic similarity or dissimilarity, following~\citep{jain2025testtime}.

\subsubsection{Unsupervised Feature Selection Baselines}

The following methods were originally designed to identify informative feature \emph{subsets} given full feature access~\citep{solorio2020review}. We adapt them to the sequential acquisition setting by using their induced feature rankings as acquisition orders. While not designed for instance-adaptive or sequential settings, they provide strong unsupervised baselines for ranking-based selection. All scores are computed on the validation set and induce a fixed acquisition order applied at test time.

\xhdr{\bsvar (I, T, S).}
Ranks features by their marginal variance over the validation set, selecting in decreasing order of variance.

\xhdr{\bsspec (I, T).}
Ranks features based on alignment with the leading $k\!-\!1$ eigenvectors of the graph Laplacian, where $k$ is the number of clusters~\citep{zhao2007spectral}. Features with stronger alignment to these optimal soft cluster indicators receive higher scores.

\xhdr{\bslap (I, T).}
The Laplacian Score~\citep{he2005laplacian} evaluates features by how well they preserve local geometric structure encoded by a nearest-neighbor graph. For each feature $r$, it computes $L_r = \frac{\widetilde{f}_r^\top L\widetilde{f}_r}{\widetilde{f}_r^\top D \widetilde{f}_r}$, where $L = D - S$ is the graph Laplacian, $S$ is the similarity matrix, and $\widetilde{f}_r$ is the mean-centered weighted feature vector. Features with smaller scores are ranked higher.

\xhdr{\bsmcfs (I, T).}
Multi-Cluster Feature Selection~\citep{cai2010unsupervised} selects features via spectral analysis of the graph Laplacian. It first solves the generalized eigenvalue problem $Ly = \lambda Dy$ to obtain $K$ eigenvectors $Y = [y_1,\ldots,y_K]$. For each $y_k$, it solves an $\ell_1$-regularized regression $\min_{a_k} \|y_k - X^T a_k\|^2 + \beta|a_k|$ to obtain sparse coefficients. Features are ranked by $\texttt{MCFS}(j) = \max_k |a_{k,j}|$ and selected in decreasing order.

\xhdr{\bsudfs (I, T).}
Unsupervised Discriminative Feature Selection~\citep{yang2011udfs} learns a sparse projection matrix $W \in \mathbb{R}^{d \times c}$ by minimizing local discriminative scores with $\ell_{2,1}$-norm regularization: $\min_{W^T W = I} \text{Tr}(W^T M W) + \gamma \|W\|_{2,1}$, where $M$ encodes local neighborhood structure via $k$-nearest neighbors. Features are ranked by the row norms $\|w_i\|_2$ of $W$ in descending order.

\xhdr{\bssvd (I, T).}
SVD-Entropy~\citep{varshavsky2006novel} ranks features by their contribution to SVD-entropy. For each feature $i$, its contribution is $\text{CE}_i = E(A) - E(A_{-i})$, where $E = -\frac{1}{\log N} \sum_{j=1}^N V_j \log V_j$ and $V_j = s_j^2 / \sum_k s_k^2$ are normalized squared singular values. Features with larger $\mathrm{CE}_i$ are ranked higher.

\subsubsection{Adaptive Label-Free Baselines}

The baselines above induce fixed, instance-independent acquisition orders. To compare against policies that condition acquisition on the current instance, we additionally evaluate the following label-free adaptive baselines on \minib and \hp (\S~\ref{appendix:adaptive_baselines}). As for \ours, these methods observe only the currently acquired values and the acquisition mask at deployment, and never access the full-view embedding or downstream labels.

\xhdr{\bsgsmrl (T, C).}
An instance-adaptive RL method based on acquisition with generative surrogate models~\citep{li2021active}, adapted to the label-free setting. The policy conditions its action selection on the per-sample conditional imputations of unobserved features produced by a generative surrogate, so that acquisition decisions depend on the values observed for each instance.

\xhdr{\bsalg (T, C).}
Adaptive Latent Gain is a learned regressor that predicts, for each unobserved view, the conditional one-step improvement in representation recovery directly from the current partial observation, and greedily acquires the view with the largest predicted gain. \bsalg shares the reward and state representation of \ours, but optimizes only the immediate gain rather than a long-horizon objective, and therefore isolates the contribution of sequential planning.

\xhdr{\bscig (T).}
Cluster Information Gain fits a Gaussian mixture model to the training data and scores each unobserved feature by the mutual information between the feature and the mixture component under the posterior given the currently observed features. The feature with the largest information gain is acquired next. Evaluated on \minib only.

\subsubsection{Summary Table}

\begin{table}[htb!]
    \centering
    \footnotesize   
    \caption{Summary description of implemented baselines. Methods marked with $(\dagger)$ rely on unsupervised feature selection and have oracle access to all features at ranking time. Methods marked with a $(\star)$ are only deployed on the cancer grading and typing datasets, since they rely on a protein sequence foundation model \citep{lin2023evolutionary}. Methods marked with $(\ddagger)$ are label-free adaptive baselines that select instance-specific feature sets through continuous updates of their selection policies.}
    \begin{tabular}{ccccc}
    \toprule
        \textbf{Name} & \textbf{Model Driven?} &\textbf{Selection Criterion} & \textbf{Adaptive?} & \textbf{Source} \\
        \toprule
         \bsoracle & \ding{51} & 1-step Oracle knowledge & \ding{51}& ----\\
         \bsgir & \ding{51} & Average contribution to latent state &   \ding{55} & ---- \\
         \bsrandom & \ding{55} &  Uniform distribution over unseen features & \ding{55} & ---- \\
         \bslap $^\dagger$ & \ding{55} & Locality preservation & \ding{55}&\citet{he2005laplacian}\\
         \bsspec $^\dagger$ & \ding{55} & Graph-based ranking & \ding{55}& \citet{zhao2007spectral}\\
         \bsudfs $^\dagger$ & \ding{55} & Local discriminative structure & \ding{55} & \citet{yang2011udfs}\\
         \bsmcfs $^\dagger$ & \ding{55} & Multi-cluster structure & \ding{55} &\citet{cai2010unsupervised}\\
         \bssvd $^\dagger$ & \ding{55} & SVD-entropy contribution & \ding{55} &\citet{varshavsky2006novel}\\
         \bsvar $^\dagger$ & \ding{55} & Maximum variance & \ding{55} & ---- \\
         \bsesmmax ($\star$)& \ding{51}&ESM2-based & \ding{55}& \citet{lin2023evolutionary}\\
         \bsesmmin ($\star$)& \ding{51}&ESM2-based & \ding{55} & \citet{lin2023evolutionary}\\
         \bsgsmrl ($\ddagger$) & \ding{55} & Generative surrogate imputations & \ding{51} & \citet{li2021active}\\
         \bsalg ($\ddagger$) & \ding{51} & Predicted one-step latent improvement & \ding{51} & ---- \\
         \bscig ($\ddagger$) & \ding{55} & GMM mutual information & \ding{51} & ---- \\
         \midrule 
          \ours & \ding{51} & \begin{tabular}{@{}c@{}}Learned policy via \\  expected latent improvement \end{tabular} & \ding{51} & This paper \\
         \bottomrule
    \end{tabular}
    \vspace{0.2cm}
    \label{tab:summary_baselines}
\end{table}

\subsection{Implementation Details of \ours}
\label{appendix:implementation}

We cast sequential feature acquisition in \ours as a finite-horizon decision problem over partial observations, closely aligned with the conditional surrogate objective analyzed in \S~\ref{section:theory}. The goal of the agent is to sequentially select features so as to minimize the discrepancy between the latent representation induced by the acquired subset and the full-information latent representation of a frozen self-supervised model.

\subsubsection{MDP Formulation}

Let $d$ denote the total number of features and $T$ the acquisition budget. An episode corresponds to sequentially selecting $T$ features without replacement. At step $t$, the agent has observed a subset $S_t \subseteq \{1,\dots,d\}$. We define a finite-horizon Markov decision process $(\mathcal{S},\mathcal{A},P,r)$ as follows.

\paragraph{State.}
The observable state summarizes the acquisition history and the induced latent representation:
\[
s_t := \left(S_t, \zb(S_t), t/T, \|\zb(S_t)\|_2, \xb \odot \bm{m}_t \right)
\]
where $\bm{m}_t \in \{0,1\}^d$ is the binary acquisition mask corresponding to $S_t$, and $\zb(S_t)$ denotes the latent embedding produced by the frozen self-supervised encoder given the partially observed input. This state representation renders the process Markov with respect to the surrogate objective.

\paragraph{Action.}
At each step, the agent selects a feature index $a_t \in \Acal_t := \{1, \dots, d\} \setminus S_t$, corresponding to acquiring a previously unobserved feature. We apply invalid action masking~\citep{huang2020closer}, assigning zero probability to actions in $S_t$ during policy evaluation and optimization.

\paragraph{Transition.}
Transitions are deterministic: selecting $a_t = j$ yields $S_{t+1} = S_t \cup \{j\}$ and updates the latent embedding to $\zb(S_{t+1})$.

\subsubsection{Reward Design}

Let $\zb^*$ denote the latent embedding obtained from the full feature set. Consistent with the surrogate loss analyzed in \S~\ref{section:theory}, we define the instantaneous reward as the normalized reduction in representation error:
\[
r_t = (1 + \mathbbm{1}\{t < t_0\}\lambda) \left(\frac{\|\zb^* - \zb(S)\|_2 - \|\zb^* - \zb(S \cup \{j\})\|_2}{\|\zb^* - \zb(S_0)\|_2}\right),
\]
where $\lambda > 0$ is the gain factor and $S_0$ denotes the singleton initial feature set. This reward is a shaped version of the marginal gain $\Delta(a_t \mid S_t) = \|\zb^*-\zb(S_t)\|_2-\|\zb^*-\zb(S_t\cup\{a_t\})\|_2$, which serves as a proxy for the conditional expected contribution $\mathbb{E}[E_{a_t} \mid \mathcal{F}_t]$ appearing in our theoretical analysis. The step-dependent amplification factor $(1 + \lambda \mathbbm{1}\{t < t_0\})$ encourages refinement in early acquisition stages, where marginal reductions are typically largest.

\subsubsection{Policy Learning}

We learn a stochastic policy $\pi_\theta(a_t \mid s_t)$ using PPO~\citep{schulman2017proximal} with invalid action masking~\citep{huang2020closer}. During both training and inference, the policy has access only to the frozen self-supervised encoder and the induced latent representations. Downstream task labels and linear probes are used exclusively for post-hoc evaluation and do not influence the policy or reward.

\subsubsection{Hyperparameters}

We implement \ours using the \texttt{Stable-Baselines3 (SB3)}~\citep{raffin2021stable} framework with \texttt{sb3-contrib} for support of masked action spaces. Unless mentioned, all PPO parameters are default of SB3. All experiments were conducted on NVIDIA A100 GPUs (80GB) with 128 AMD EPYC 7543 32-core processors. Table~\ref{tab:hyperparams} summarizes the key hyperparameters across datasets.

\begin{table}[t]
    \centering
    \caption{Hyperparameters used for \ours across datasets. The remaining hyperparameters follow the defaults in \texttt{Stable-Baselines3}.}
    \resizebox{\textwidth}{!}{\begin{tabular}{lcccccc}
    \toprule
    \textbf{Parameter} & \textbf{\mnist/\fmnist} & \textbf{\minib} & \textbf{\hp} & \textbf{NSCLC} & \textbf{MMEarth-Bench} & \textbf{Synthetic} \\
    \midrule
    Acquisition budget $T$ & 10 & 7 & 25 & 10 & 5 & varies \\
    Step threshold $t_0$ & $T/2$ & $T/2$ & $T/2$ & $T/2$ & $T/2$ &  $T/2$ \\
    Gain factor $\lambda$ & 1.5 & 1.5 & 1.5 & 1.5 & 1.5 & 1.5 \\
    Discount factor $\gamma$ & 0.98 & 0.98 & 0.99 & 0.98 & 0.98 & 0.98 \\
    Training steps & 300k & 300k & 500k & 300k & 200k & 300k \\
    \bottomrule
    \end{tabular}}
    \label{tab:hyperparams}
\end{table}

The step threshold $t_0$ is uniformly set to half of the episode length $T$. Across experiments, the acquisition budget $T$ is chosen to lie between 5\% and 25\% of the total number of available features, depending on the dataset. We set $\gamma = 0.98$ for small-horizon acquisition and $\gamma = 0.99$ for long-horizon (\hp). For datasets with fewer features, we train for 300,000 steps; for larger feature spaces, we train for 500,000 steps (except for MMEarth-Bench, where we train for 200,000 steps).

\newpage
\section{Additional Experiments}
\label{appendix:additional_experiments}
\begin{wrapfigure}{r}{0.5\textwidth}
    \vspace{-10pt}
    \centering
   \includegraphics[width=0.6\linewidth]{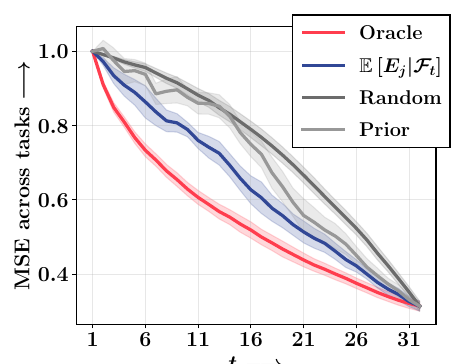}
    \caption{\small Downstream performance of random, prior-based  ($\mathbb{E}[\Eb_j]$), optimal-test-time, and oracle policies on a random isotropic regression task }
    \label{fig:control}
    \vspace{-15pt}
\end{wrapfigure}
\subsection{Controlled Synthetic Experiment}
\looseness -1 We first isolate the core question in a controlled setting with known data-generating process: \emph{how much does conditioning on partial observations help?} For the synthetic data, we consider a randomized isotropic linear regression task with $\Wb \sim \Ncal(0, \frac{1}{d}\Ib_d)$ and $\varepsilon' \sim \text{Gumbel}(\mu',\sigma')$ yielding labels $y = \Wb^\top \xb + \varepsilon'$. We fit $\widehat{{\Wb}}$ via ridge regression on the full feature set (see Assumption~\ref{assumption:dgp}), with labels from isotropic linear regression (details in \S~\ref{appendix:synthetic_details}). Fig.~\ref{fig:control} confirms that the optimal test-time policy consistently outperforms both non-oracle baselines and closely tracks the oracle in early stages, validating the theoretical separation of \S\ref{section:theory} and motivating the long-horizon RL strategy of \ours.

\subsection{Extended Baseline Comparisons}
\label{appendix:extended_baselines}

In vision and tabular settings, features are individually cheap but collectively redundant; a policy that identifies the most informative subset early reduces measurement overhead at test time. For \mnist, \fmnist, and \minib, we train a masked autoencoder and freeze the encoder as \textsc{echo}-$k$'s backbone. For \hp, we use TABULA~\citep{dingtabula} scRNA foundation model. \ours interfaces with it only through $\zb(\cdot)$ and $\zb^\star$, requiring no architecture-specific changes (see \S~\ref{appendix:architectures} for backbone details).

\textbf{Results on \mnist and \fmnist.}
Both datasets are downsampled to $14 \times 14$ following~\citet{li2021active}, yielding $d=196$ pixel-features with 10-class classification as the downstream task. The goal is to classify each image (handwritten digit or clothing item) by sequentially revealing as few pixels as possible. All methods start from the same top-corner pixel (Fig.~\ref{fig:cost}). \ours rapidly concentrates its acquisitions on discriminative image regions, achieving gains of up to 12\% over all non-oracle baselines in the early regime (10--20\% of pixels). No single baseline is consistently second-best: \texttt{Variance} leads on \mnist while \texttt{Random} is surprisingly competitive on \fmnist. 

\textbf{Results on \minib.}
MiniBooNE contains $d=33$ reconstructed physics variables from a neutrino experiment, with binary signal-vs-background classification as the downstream task; these variables are costly to compute and highly correlated (see \S~\ref{appendix:datasets}). \ours outperforms all heuristic baselines and the one-step oracle (Fig.~\ref{fig:bigfig}). We attribute this to the oracle's myopic strategy, which over-selects correlated variables early and suffers diminishing returns, whereas \ours optimizes over a long horizon and learns to exploit complementary information.

\begin{figure}[t]
    \centering
    \includegraphics[width=1\linewidth]{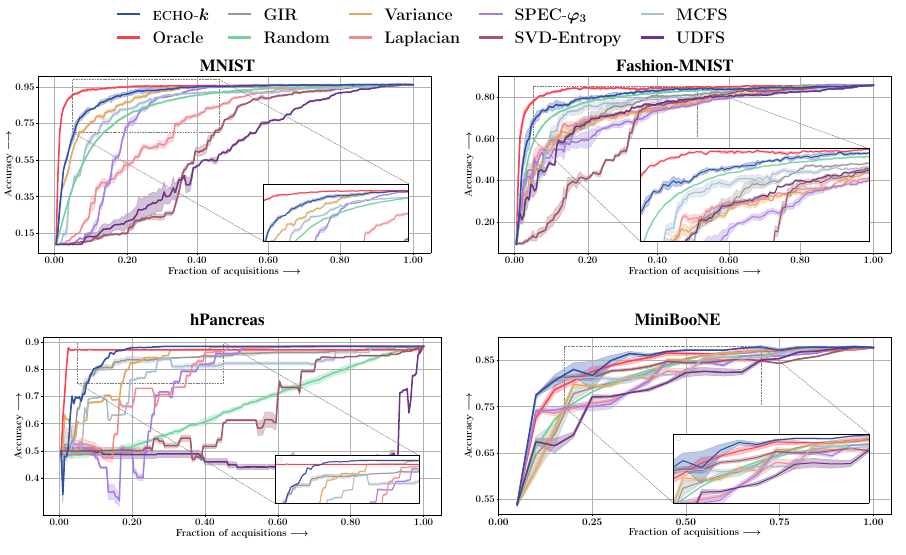}
    \caption{Results of \ours across all baselines for the MNIST (top-left), Fashion-MNIST (top-right), hPancreas (bottom-left), and \minib (bottom-right) datasets.}
    \label{fig:bigfig}
\end{figure}

\subsection{Comparison with Adaptive Acquisition Policies}
\label{appendix:adaptive_baselines}
Most baselines in the main text are fixed acquisition orderings. To verify that the gains of \ours are not solely due to comparison against fixed rankings, we compare against label-free controls that condition acquisition on the current instance and match the information setting of \ours:
\begin{itemize}
    \item \textbf{GSMRL-AIR}, an instance-adaptive RL method that conditions action selection on the per-sample conditional imputations of a generative surrogate~\citep{li2021active}, adapted to the label-free setting.
    \item \textbf{Adaptive Latent Gain (ALG)}, a learned regressor that predicts the conditional one-step representation improvement directly from the current partial observation. ALG shares the reward and state representation of \ours, but not its long-horizon objective.
    \item \textbf{Cluster Information Gain (CIG)}, which scores candidate views by mutual information under a fitted Gaussian mixture model posterior (\minib only).
\end{itemize}
All methods observe only the currently acquired values and the acquisition mask, and never access the full-view embedding or downstream labels at deployment.

\begin{table}[h]
\centering
\setlength{\aboverulesep}{0pt}
\setlength{\belowrulesep}{0pt}
\caption{\small \textbf{Comparison with adaptive acquisition policies.} Accuracy (\%) at different acquisition budgets on \minib and \hp. All policies operate without downstream labels. \legendbox{blue!40} denotes best and \legendbox{pink!40} second best performance.}
\label{tab:adaptive_baselines}
\adjustbox{max width=\textwidth}{
\begin{tabular}{lcccc@{\hspace{\tabcolsep}\hspace{\tabcolsep}}lcccc}
\toprule
\multirow{2}{*}{\textbf{Method}} & \multicolumn{4}{c}{\textbf{\minib}} & \multirow{2}{*}{\textbf{Method}} & \multicolumn{4}{c}{\textbf{\hp}} \\
\arrayrulecolor{lightgray} \cmidrule{2-5} \cmidrule{7-10} \arrayrulecolor{black}
 & \textbf{20\%} & \textbf{40\%} & \textbf{60\%} & \textbf{80\%} & & \textbf{20\%} & \textbf{40\%} & \textbf{60\%} & \textbf{80\%} \\ \midrule
CIG & \secbest{$79.80$} & \secbest{$84.87$} & $86.17$ & $86.97$ & & & & & \\
ALG & $77.50$ & $83.60$ & $86.12$ & \secbest{$87.26$} & ALG & $73.38$ & $80.16$ & $83.12$ & $86.33$ \\
GSMRL-AIR & $79.18$ & $84.78$ & \secbest{$86.40$} & $86.26$ & GSMRL-AIR & \secbest{$77.23$} & \secbest{$81.87$} & \secbest{$84.41$} & \secbest{$86.40$} \\ \arrayrulecolor{lightgray} \hline \arrayrulecolor{black}
\ours & \best{$81.70$} & \best{$86.87$} & \best{$87.60$} & \best{$87.83$} & \ours & \best{$80.81$} & \best{$88.04$} & \best{$88.43$} & \best{$88.45$} \\
\bottomrule
\end{tabular}
}
\end{table}

Table~\ref{tab:adaptive_baselines} shows that \ours outperforms all adaptive controls at every budget on both datasets, including GSMRL-AIR, a fully adaptive RL baseline operating under the same label-free constraints. The comparison with ALG is particularly informative: both methods use representation improvement to guide acquisition, but \ours optimizes gains over a longer horizon. This indicates that the advantage of \ours comes from long-horizon planning rather than from the recovery signal alone.

\subsection{Comparison with Supervised Acquisition}
\label{appendix:supervised}
Table~\ref{tab:supervised_full} reports the full comparison of \ours with five supervised acquisition methods on \minib: DIME~\citep{gadgil2023estimating}, GDFS~\citep{covert2023learning}, SEFA~\citep{norcliffe2025stochastic}, GSMRL~\citep{li2021active}, and EDDI~\citep{ma2018eddi}. Unlike \ours, these methods use downstream labels during policy learning.

\begin{table}[h]
\centering
\setlength{\aboverulesep}{0pt}
\setlength{\belowrulesep}{0pt}
\caption{\small \textbf{Comparison with supervised acquisition on \minib.} Accuracy (\%) at different acquisition budgets. All methods except \ours use downstream labels. \legendbox{blue!40} denotes best and \legendbox{pink!40} second best performance.}
\label{tab:supervised_full}
\begin{tabular}{lcccc}
\toprule
\textbf{Method} & \textbf{10\%} & \textbf{20\%} & \textbf{40\%} & \textbf{60\%} \\ \midrule
DIME & $79.97$ & $84.23$ & $86.87$ & $87.07$ \\
GDFS & \best{$80.23$} & \best{$84.73$} & \secbest{$87.50$} & \secbest{$87.80$} \\
SEFA & $78.67$ & \secbest{$84.70$} & \best{$87.73$} & \best{$88.13$} \\
GSMRL & $75.73$ & $83.27$ & $87.37$ & $87.77$ \\
EDDI & $72.30$ & $76.70$ & $83.50$ & $85.07$ \\ \arrayrulecolor{lightgray} \hline \arrayrulecolor{black}
\ours & \secbest{$80.03$} & $81.70$ & $86.87$ & $87.60$ \\
\bottomrule
\end{tabular}
\end{table}

At 10\% acquisition, \ours is within 0.2 points of the best supervised method and exceeds four of the five, despite never observing downstream labels. The gap is largest at 20\%, where GDFS leads by 3.0 points, and narrows to 0.9 and 0.5 points at 40\% and 60\%, respectively. Given that \ours never observes downstream labels during training, we view this as the expected cost of task agnosticism.

\subsection{Cost of Task Agnosticism}
\label{appendix:task_specific}
When a single downstream task is fixed and sufficient labels are available, a task-specific policy may be preferable. To quantify this trade-off, we train supervised acquisition policies for NSCLC subtyping (S) and grading (G) using the same architecture and training budget as \ours, replacing the latent reward with the downstream-loss improvement
\begin{equation*}
r_t^{\mathrm{sup}} = \ell\big(g(\zb(\Pb_{t-1})), y\big) - \ell\big(g(\zb(\Pb_t)), y\big).
\end{equation*}
PPO-Subtype and PPO-Grade optimize their respective endpoints, while PPO-Multitask averages the two task-specific losses at each step. We evaluate each policy on both tasks.

\begin{table}[h]
\centering
\setlength{\aboverulesep}{0pt}
\setlength{\belowrulesep}{0pt}
\caption{\small \textbf{Task-specific and task-agnostic acquisition on NSCLC.} Accuracy (\%) for subtyping (S), grading (G), and their mean at different acquisition budgets. Supervised PPO policies use task labels during policy learning, while \ours uses only the representation-recovery reward.}
\label{tab:task_specific}
\adjustbox{max width=\textwidth}{
\begin{tabular}{lccccccccc}
\toprule
\multirow{2}{*}{\textbf{Policy}} & \multicolumn{3}{c}{\textbf{20\%}} & \multicolumn{3}{c}{\textbf{40\%}} & \multicolumn{3}{c}{\textbf{60\%}} \\
\arrayrulecolor{lightgray} \cmidrule(lr){2-4} \cmidrule(lr){5-7} \cmidrule(lr){8-10} \arrayrulecolor{black}
 & \textbf{S} & \textbf{G} & \textbf{Mean} & \textbf{S} & \textbf{G} & \textbf{Mean} & \textbf{S} & \textbf{G} & \textbf{Mean} \\ \midrule
PPO-Subtype & $83.2$ & $62.8$ & $73.0$ & $85.4$ & $64.2$ & $74.8$ & $86.3$ & $65.0$ & $75.7$ \\
PPO-Grade & $78.5$ & $67.8$ & $73.1$ & $82.2$ & $68.2$ & $75.2$ & $83.1$ & $68.0$ & $75.5$ \\
PPO-Multitask & $82.9$ & $66.5$ & $74.7$ & $84.1$ & $67.2$ & $75.6$ & $84.8$ & $67.9$ & $76.3$ \\ \arrayrulecolor{lightgray} \hline \arrayrulecolor{black}
\ours & $82.8$ & $65.8$ & $74.3$ & $83.8$ & $67.1$ & $75.5$ & $85.2$ & $67.4$ & $76.3$ \\
\bottomrule
\end{tabular}
}
\end{table}

At 20\% acquisition, \ours trails the corresponding task-specific policies by 0.4 points on subtyping and 2.0 points on grading (Table~\ref{tab:task_specific}). The specialized policies also transfer poorly across endpoints, losing approximately 3--5 points when evaluated on the other task. More importantly, \ours is within 0.4, 0.1, and 0.0 points of PPO-Multitask in mean accuracy at 20\%, 40\%, and 60\%, respectively, despite using neither task label. This quantifies the expected cost of task agnosticism, while showing that preserving the pretrained representation nearly matches direct multitask supervision.

\subsection{Robustness to Partially Observed Data}
Table~\ref{tab:extended-partial} provides additional robustness results under partially observed reference data on MNIST-14 and FMNIST-14. We train \ours after randomly masking an $m$-fraction of views when computing the reference representation used for the self-supervised reward. Across both datasets, performance degrades smoothly as m increases, rather than collapsing. On MNIST-14, partially observed-reference policies remain close to or even slightly above the complete-reference policy across several budgets, indicating that the acquisition policy can tolerate substantial noise in the reference target. On FMNIST-14, the degradation is more visible but remains gradual: even with 50\% missing reference views, \ours stays competitive with strong baselines and preserves most of its early-budget advantage. These results complement the biological experiments in Table~\ref{tab:incomplete_data}, showing that the method’s robustness to incomplete reference data is not specific to a single modality or dataset.

\subsection{Heterogeneous Acquisition Costs}
To test whether \ours can adapt when views have nonuniform acquisition costs, we run a controlled cost-aware experiment on MNIST-14. We assign each pixel a sample-dependent acquisition cost based on its intensity, rescaled to the range $[0.25,2.0]$, so that the policy must trade off representation recovery against cumulative cost. We compare a cost-aware variant of \ours, trained with a cost-penalized reward and cost information in the state, against the standard cost-unaware policy.

Fig.~\ref{fig:cost} shows that cost-aware training changes the acquisition behavior. Over the first ten acquisitions, the cost-aware policy selects substantially cheaper pixels on average than the cost-unaware policy, reducing the mean acquisition cost from 0.89 to 0.56. This shift also improves performance in the low-budget regime: when accuracy is plotted against cumulative cost rather than number of pixels, the cost-aware policy reaches high accuracy with fewer cost units. As the budget grows, both policies converge, indicating that cost-awareness is most useful when acquisition resources are limited. This controlled experiment confirms that \ours can incorporate heterogeneous costs by changing the reward and state, without modifying the representation-recovery objective.

\begin{figure}[t]
    \centering
    \includegraphics[width=\linewidth]{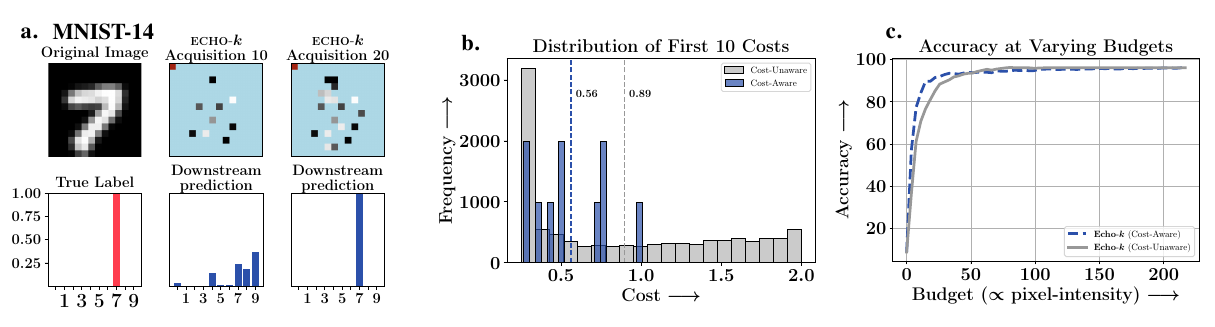}
    \caption{\textbf{a.} Overview of the task and acquisition behavior of \textsc{echo}-$k$: we illustrate how \ours\ sequentially acquires pixels in an unsupervised manner, progressively concentrating the (unobserved) downstream class probability mass. \textbf{b.} Distribution of acquisition costs among the first ten selected pixels; the cost-aware policy selects lower-cost pixels early. \textbf{c}. Accuracy as a function of cumulative acquisition cost; cost-aware training improves performance in the low-budget regime, while both policies converge at larger budgets.}
    \label{fig:cost}
\end{figure}

\begin{table}[t]
\centering
\caption{\textbf{
} We train \ours after randomly masking an $m$-fraction of views when computing the reference representation. Accuracy is reported at different acquisition budgets on MNIST-14 and FMNIST-14. \legendbox{blue!40} denotes best and \legendbox{pink!40} second best performance, apart from the Oracle.}
\label{tab:extended-partial}
\adjustbox{max width=\textwidth}{
\begin{tabular}{lrrrrrrrrr}
\toprule
\multirow{2}{*}{\textbf{Method}} & \multicolumn{4}{c}{\textbf{MNIST-14}} &  & \multicolumn{4}{c}{\textbf{FMNIST-14}} \\
\cmidrule{2-5} \cmidrule{7-10}
 & \textbf{20\%} & \textbf{40\%} & \textbf{60\%} & \textbf{80\%} &  & \textbf{20\%} & \textbf{40\%} & \textbf{60\%} & \textbf{80\%} \\ \midrule
Oracle & $0.937 {\scriptstyle \pm 0.005}$ & $0.951 {\scriptstyle \pm 0.002}$ & $0.953 {\scriptstyle \pm 0.002}$ & $0.956 {\scriptstyle \pm 0.002}$ &  & $0.824 {\scriptstyle \pm 0.003}$ & $0.843 {\scriptstyle \pm 0.006}$ & $0.840 {\scriptstyle \pm 0.003}$ & $0.842 {\scriptstyle \pm 0.004}$ \\ \arrayrulecolor{lightgray} \hline 
Variance & $0.759 {\scriptstyle \pm 0.007}$ & $0.865 {\scriptstyle \pm 0.005}$ & $0.928 {\scriptstyle \pm 0.005}$ & \best{$0.951 {\scriptstyle \pm 0.006}$} &  & $0.565 {\scriptstyle \pm 0.022}$ & $0.670 {\scriptstyle \pm 0.008}$ & $0.725 {\scriptstyle \pm 0.016}$ & $0.741 {\scriptstyle \pm 0.005}$ \\
\textsc{spec}-$\varphi_3$ & $0.168 {\scriptstyle \pm 0.014}$ & $0.754 {\scriptstyle \pm 0.018}$ & $0.894 {\scriptstyle \pm 0.003}$ & $0.942 {\scriptstyle \pm 0.002}$ &  & $0.543 {\scriptstyle \pm 0.036}$ & $0.600 {\scriptstyle \pm 0.039}$ & $0.681 {\scriptstyle \pm 0.012}$ & $0.738 {\scriptstyle \pm 0.013}$ \\
MCFS & $0.678 {\scriptstyle \pm 0.010}$ & $0.829 {\scriptstyle \pm 0.005}$ & $0.904 {\scriptstyle \pm 0.004}$ & $0.918 {\scriptstyle \pm 0.002}$ &  & $0.631 {\scriptstyle \pm 0.029}$ & $0.760 {\scriptstyle \pm 0.016}$ & $0.767 {\scriptstyle \pm 0.013}$ & $0.793 {\scriptstyle \pm 0.010}$ \\
Random & $0.627 {\scriptstyle \pm 0.009}$ & $0.800 {\scriptstyle \pm 0.005}$ & $0.871 {\scriptstyle \pm 0.003}$ & $0.907 {\scriptstyle \pm 0.003}$ &  & $0.708 {\scriptstyle \pm 0.004}$ & $0.779 {\scriptstyle \pm 0.004}$ & $0.805 {\scriptstyle \pm 0.002}$ & \secbest{$0.821 {\scriptstyle \pm 0.005}$} \\ 
\ours & $0.830 {\scriptstyle \pm 0.004}$ & \secbest{$0.913 {\scriptstyle \pm 0.012}$} & $0.935 {\scriptstyle \pm 0.009}$ & \best{$0.951 {\scriptstyle \pm 0.003}$} &  & \best{$0.749 {\scriptstyle \pm 0.019}$} & \best{$0.793 {\scriptstyle \pm 0.007}$} & \best{$0.818 {\scriptstyle \pm 0.007}$} & \best{$0.833 {\scriptstyle \pm 0.009}$} \\ \arrayrulecolor{lightgray} \hline 
\ours ($m=$10\%) & \secbest{$0.845 {\scriptstyle \pm 0.015}$} & \best{$0.921 {\scriptstyle \pm 0.002}$} & \best{$0.941 {\scriptstyle \pm 0.002}$} & $0.949 {\scriptstyle \pm 0.003}$ &  & \secbest{$0.742 {\scriptstyle \pm 0.024}$} & \secbest{$0.780 {\scriptstyle \pm 0.008}$} & \secbest{$0.814 {\scriptstyle \pm 0.008}$} & \secbest{$0.821 {\scriptstyle \pm 0.011}$} \\
\ours ($m=$20\%) & \best{$0.862 {\scriptstyle \pm 0.002}$} & \secbest{$0.913 {\scriptstyle \pm 0.006}$} & $0.939 {\scriptstyle \pm 0.007}$ & $0.948 {\scriptstyle \pm 0.003}$ &  & $0.735 {\scriptstyle \pm 0.020}$ & $0.767 {\scriptstyle \pm 0.013}$ & $0.803 {\scriptstyle \pm 0.006}$ & $0.813 {\scriptstyle \pm 0.008}$ \\
\ours ($m=$30\%) & $0.836 {\scriptstyle \pm 0.008}$ & $0.912 {\scriptstyle \pm 0.003}$ & \secbest{$0.940 {\scriptstyle \pm 0.008}$} & \secbest{$0.950 {\scriptstyle \pm 0.005}$} &  & $0.721 {\scriptstyle \pm 0.018}$ & $0.769 {\scriptstyle \pm 0.014}$ & $0.798 {\scriptstyle \pm 0.009}$ & $0.814 {\scriptstyle \pm 0.008}$ \\
\ours ($m=$40\%) & $0.830 {\scriptstyle \pm 0.017}$ & $0.903 {\scriptstyle \pm 0.014}$ & $0.934 {\scriptstyle \pm 0.001}$ & \secbest{$0.950 {\scriptstyle \pm 0.011}$} &  & $0.716 {\scriptstyle \pm 0.024}$ & $0.776 {\scriptstyle \pm 0.003}$ & $0.787 {\scriptstyle \pm 0.003}$ & $0.801 {\scriptstyle \pm 0.009}$ \\
\ours ($m=$50\%) & $0.823 {\scriptstyle \pm 0.019}$ & $0.900 {\scriptstyle \pm 0.011}$ & $0.938 {\scriptstyle \pm 0.007}$ & $0.949 {\scriptstyle \pm 0.002}$ &  & $0.711 {\scriptstyle \pm 0.013}$ & $0.765 {\scriptstyle \pm 0.018}$ & $0.792 {\scriptstyle \pm 0.006}$ & $0.815 {\scriptstyle \pm 0.002}$ \\
\bottomrule
\end{tabular}
}
\end{table}

\subsection{Adaptive Stopping}
\label{appendix:stopping}
A fixed acquisition horizon may be restrictive in deployment, where the policy should also decide whether the currently acquired views are already sufficient. We extend \ours with a STOP action, converting the fixed-horizon MDP into an optimal-stopping problem. For a measurement action $a$, the reward is
\begin{equation*}
r_t = \frac{e(\Pb_{t-1}) - e(\Pb_t)}{e(\Pb_0) + \epsilon} - \lambda c_a, \qquad e(\Pb) = \|\zb(\Pb) - \zb^{\star}\|_2,
\end{equation*}
while STOP terminates the episode with zero additional reward. Since STOP yields zero future reward, the policy continues only when the expected cumulative representation gain outweighs the acquisition cost, and with $\gamma = 1$, the learned stopping time $\tau$ directly trades recovery against total acquisition cost. As in \ours, $\zb^{\star}$ is used only to construct the label-free training reward; at deployment, both acquisition and stopping depend only on the current partial state.

\begin{wraptable}{r}{0.45\textwidth}
\centering
\setlength{\aboverulesep}{0pt}
\setlength{\belowrulesep}{0pt}
\caption{\small \textbf{Adaptive stopping on \minib.} Accuracy (\%) and number of acquired features.}
\label{tab:stopping}
\adjustbox{max width=\linewidth}{
\begin{tabular}{lcc}
\toprule
\textbf{Policy} & \textbf{\# Features} & \textbf{Accuracy} \\ \midrule
\ours (fixed horizon) & $10$ & $86.11$ \\
\ours (STOP) & $9.86$ (mean) & $86.06$ \\
\bottomrule
\end{tabular}
}
\end{wraptable}
On \minib, \ours with stopping reaches 86.06\% accuracy at a mean of 9.86 acquired features, showing instance-dependent stopping (Table~\ref{tab:stopping}). This matches the fixed-horizon policy at a comparable budget (86.11\% at $t=10$), while requiring no budget to be specified in advance. Together with the heterogeneous-cost experiment above, this shows that the representation-recovery objective of \ours can be combined with cost-aware rewards to decide both which views to acquire and when to stop.

\subsection{Empirical Validation of the Near Isometry Assumption}
\label{appendix:near_isometry}

A key ingredient in our theoretical analysis is Assumption~\ref{asm:isometry} (Near Isometry), which requires that the normalized columns of the embedding weight matrix are approximately orthogonal. We validate this assumption empirically on two classes of models.

\subsubsection{DinoV2 Foundation Models}

We examine the DINOv2 family of vision transformers~\citep{oquab2023dinov2}, including ViT-S/14, ViT-B/14, ViT-L/14, and ViT-G/14 (with register tokens). For each model, we: (1) extract the final MLP projection weights $\Wb \in \RR^{p \times d}$ from the last transformer block's feed-forward network (fc2 layer); (2) normalize columns to get $\bm{B}$ and compute the Gram matrix $\bm{G} = \bm{B}^\top \bm{B}$; (3) analyze the distribution of the off-diagonal elements of $\bm{G} - \bm{I}_d$. As shown in Fig.~\ref{fig:app-weight-dinov2}, these values concentrate tightly near zero across all model scales, confirming approximate orthogonality. Larger models exhibit tighter concentration, consistent with theoretical expectations for high-dimensional random-like weight matrices.

\begin{figure}[H]
    \centering
    \includegraphics[width=0.8\linewidth,page=1]{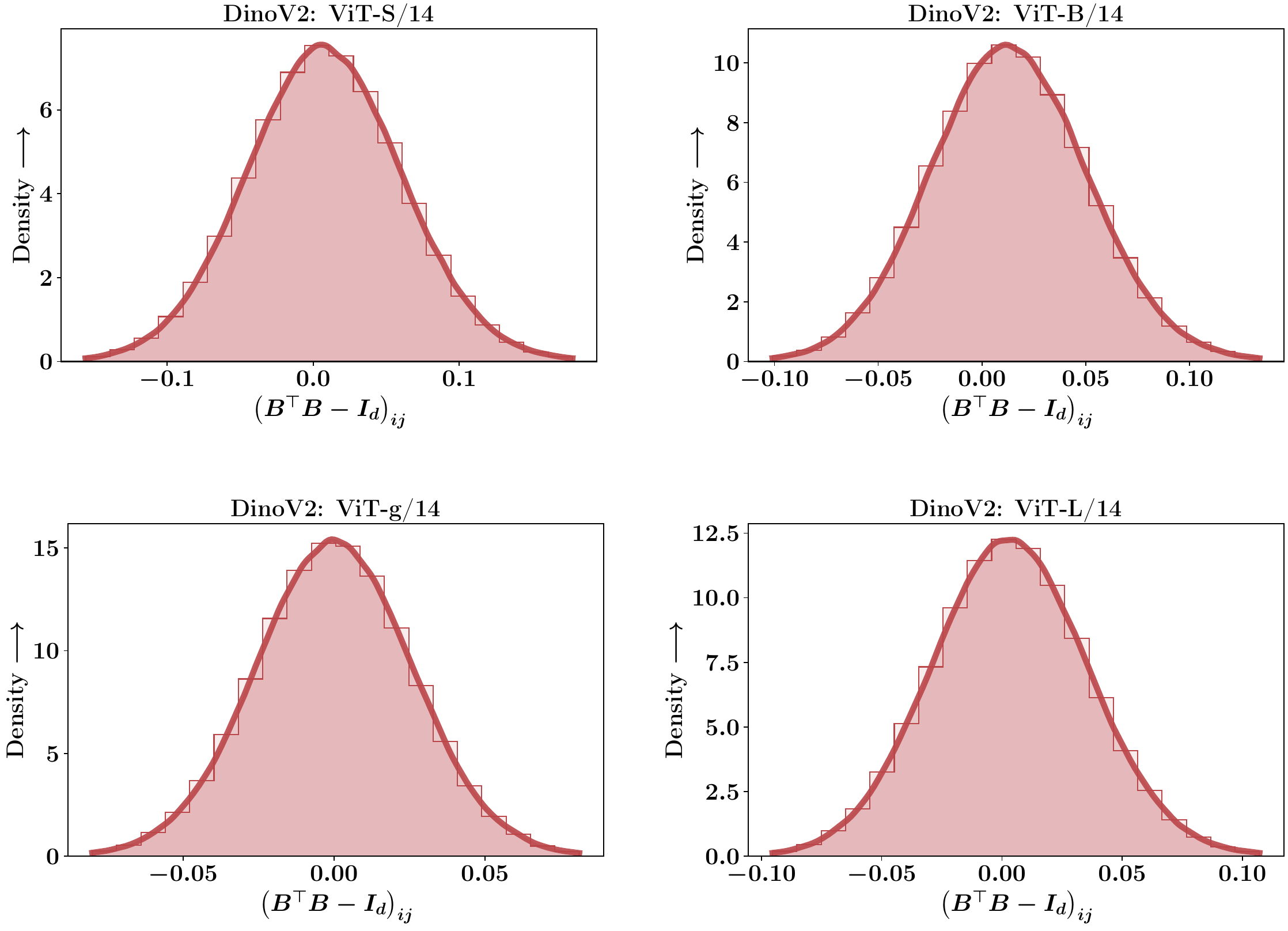}
    \caption{Distribution of $\bm{B}^\top\bm{B}-\bm{I}_d$ for 4 different DinoV2 models of increasing capacity.}
    \label{fig:app-weight-dinov2}
\end{figure}

\subsubsection{Trained Autoencoders}
\label{appendix:viz_weights}

We further validate the assumption on trained autoencoders across varying dimensions $d$ from 32 to 4096. For each dimension, we generate low-rank data $\bm{X}^\top = \bm{U} \bm{V}^\top + \bm{\epsilon}$ where $\bm{U} \in \RR^{n \times r}$, $\bm{V} \in \RR^{d \times r}$. We extract the last layer weight matrix $\Wb$ and compute the effective rank $\text{p}_{eff}$ using the entropy of normalized singular values~\citep{roy2007effective}:
\[
p_{\text{eff}} = \exp\left(-\sum_i \frac{s_i}{\sum_k s_k} \log \frac{s_i}{\sum_k s_k}\right).
\]
We sample random subsets $J \subseteq \{1, \dots, d\}$ of varying sizes $m$, normalize the corresponding weight vectors to form $\bm{B}_J$, and compute $\|\bm{B}_J^\top \bm{B}_J - \bm{I}_m\|_{\text{op}}$. We compare against the theoretical bound $C\left(\sqrt{m/p_{\text{eff}}} + m/p_{\text{eff}}\right)$. Across 50 repetitions, the near isometry property holds with $C \in [0.93,1.73]$ across all tested dimensions. As $d$ increases, the standard deviation of off-diagonal elements decreases, validating that trained neural network weights naturally form near-orthonormal systems (Fig.~\ref{fig:app-weight-syn}).

\begin{figure}[t]
    \centering
    \includegraphics[width=0.8\linewidth,page=2]{icml2026/figs_final/coh.pdf}
    \caption{Variation of $\|\bm{B}^\top\bm{B}-\bm{I}_d\|_{\text{op}}$ for varying values of $d$ on trained autoencoder models and variation of the empirical estimation of $C$ with $d$.}
    \label{fig:app-weight-syn}
\end{figure}
\begin{figure}[H]
    \centering
    \includegraphics[width=0.8\linewidth]{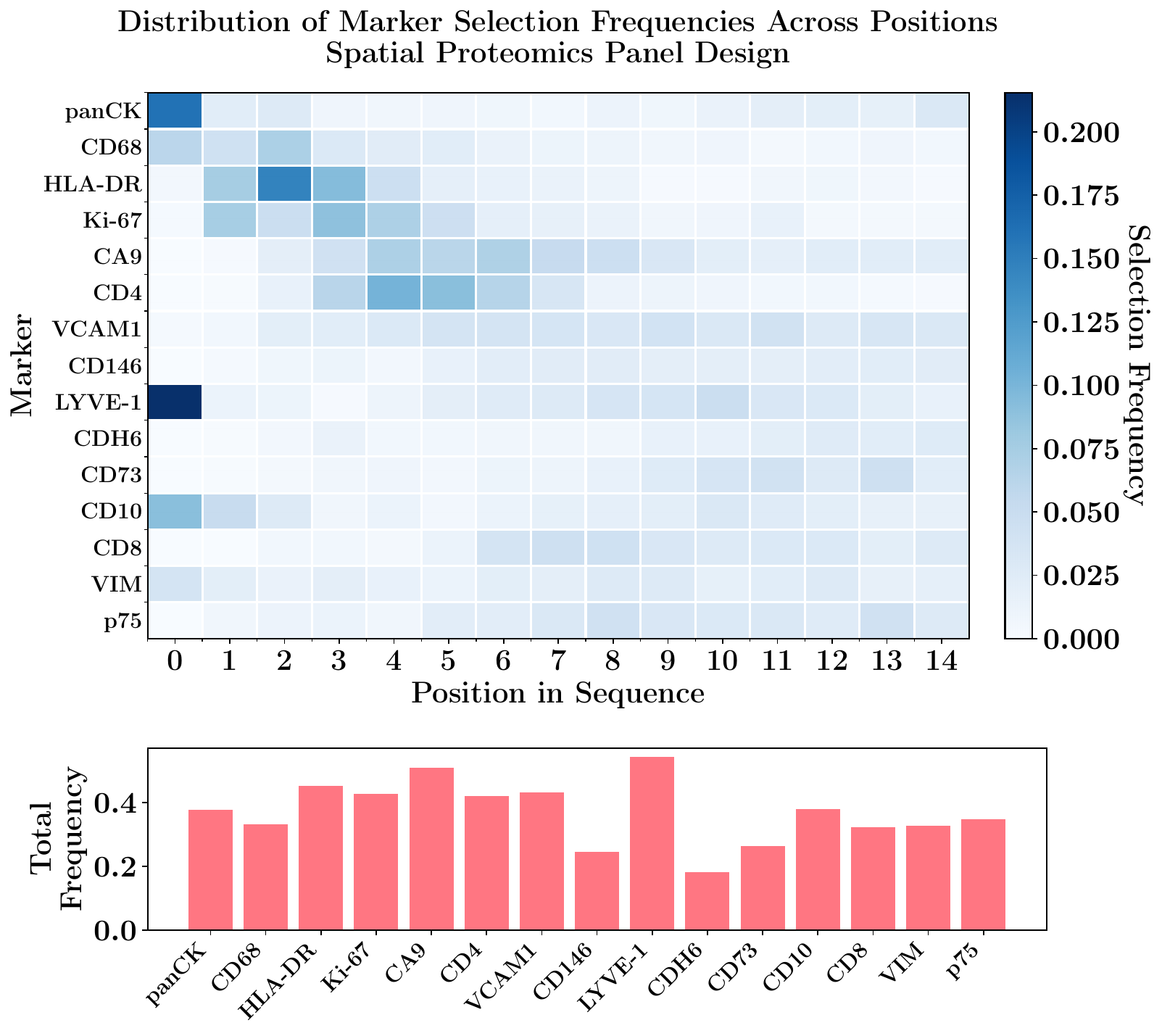}
    \caption{Analysis of the top-15 markers and their frequency, as chosen by \ours jointly for the cancer subtyping and grading task on the NSCLC spatial proteomics dataset.}
    \label{fig:app-weight-vir}
\end{figure}
\subsection{Analysis of Markers Acquired by \ours on Spatial Proteomics}
\label{appendix:marker_analysis}

In Fig.~\ref{fig:app-weight-vir}, we analyze the distribution of markers chosen by \ours on the NSCLC dataset tasks: cancer subtyping and grading from the marker panel provided by~\citep{cords2024cancer}. In an unsupervised fashion, \ours prioritizes markers that are well-established in the clinical literature for their roles in tumor characterization and microenvironmental context:

\begin{itemize}[leftmargin=1.5em]
    \item \texttt{panCK}: a pan-cytokeratin marker for epithelial (tumor) cell identity, consistently selected first.
    \item \texttt{Ki-67}: a proliferation marker widely used in clinical grading of lung cancers~\citep{martin2004ki}.
    \item \texttt{CD4}: a marker of helper T-cell infiltration, reflecting the adaptive immune response in the tumor microenvironment.
    \item \texttt{LYVE-1}: a marker of lymphatic endothelial cells, relevant to lymphatic involvement and metastatic potential.
\end{itemize}

This early selection pattern suggests that \ours effectively identifies markers with strong relevance to both tumor biology and immune context, thereby supporting both subtyping and grading tasks without any supervision. The alignment with clinical domain knowledge provides additional validation that the self-supervised acquisition objective captures diagnostically meaningful information.

\newpage
\section*{NeurIPS Paper Checklist}

The checklist is designed to encourage best practices for responsible machine learning research, addressing issues of reproducibility, transparency, research ethics, and societal impact. Do not remove the checklist: {\bf The papers not including the checklist will be desk rejected.} The checklist should follow the references and follow the (optional) supplemental material.  The checklist does NOT count towards the page
limit. 

Please read the checklist guidelines carefully for information on how to answer these questions. For each question in the checklist:
\begin{itemize}
    \item You should answer \answerYes{}, \answerNo{}, or \answerNA{}.
    \item \answerNA{} means either that the question is Not Applicable for that particular paper or the relevant information is Not Available.
    \item Please provide a short (1--2 sentence) justification right after your answer (even for \answerNA). 
\end{itemize}

{\bf The checklist answers are an integral part of your paper submission.} They are visible to the reviewers, area chairs, senior area chairs, and ethics reviewers. You will also be asked to include it (after eventual revisions) with the final version of your paper, and its final version will be published with the paper.

The reviewers of your paper will be asked to use the checklist as one of the factors in their evaluation. While \answerYes{} is generally preferable to \answerNo{}, it is perfectly acceptable to answer \answerNo{} provided a proper justification is given (e.g., error bars are not reported because it would be too computationally expensive'' or ``we were unable to find the license for the dataset we used''). In general, answering \answerNo{} or \answerNA{} is not grounds for rejection. While the questions are phrased in a binary way, we acknowledge that the true answer is often more nuanced, so please just use your best judgment and write a justification to elaborate. All supporting evidence can appear either in the main paper or the supplemental material, provided in appendix. If you answer \answerYes{} to a question, in the justification please point to the section(s) where related material for the question can be found.

IMPORTANT, please:
\begin{itemize}
    \item {\bf Delete this instruction block, but keep the section heading ``NeurIPS Paper Checklist"},
    \item  {\bf Keep the checklist subsection headings, questions/answers and guidelines below.}
    \item {\bf Do not modify the questions and only use the provided macros for your answers}.
\end{itemize}


\begin{enumerate}

\item {\bf Claims}
    \item[] Question: Do the main claims made in the abstract and introduction accurately reflect the paper's contributions and scope?
    \item[] Answer: \answerYes{} 
    \item[] Justification: The paper introduces \textsc{ECHO}-$k$, provides theoretical justifications, and experiments, as reflected in the abstract and introduction.
    \item[] Guidelines:
    \begin{itemize}
        \item The answer \answerNA{} means that the abstract and introduction do not include the claims made in the paper.
        \item The abstract and/or introduction should clearly state the claims made, including the contributions made in the paper and important assumptions and limitations. A \answerNo{} or \answerNA{} answer to this question will not be perceived well by the reviewers. 
        \item The claims made should match theoretical and experimental results, and reflect how much the results can be expected to generalize to other settings. 
        \item It is fine to include aspirational goals as motivation as long as it is clear that these goals are not attained by the paper. 
    \end{itemize}

\item {\bf Limitations}
    \item[] Question: Does the paper discuss the limitations of the work performed by the authors?
    \item[] Answer: \answerYes{} 
    \item[] Justification: The paper contains an impact and limitations statement in the appendix which discusses the limitations of the work.
    \item[] Guidelines:
    \begin{itemize}
        \item The answer \answerNA{} means that the paper has no limitation while the answer \answerNo{} means that the paper has limitations, but those are not discussed in the paper. 
        \item The authors are encouraged to create a separate ``Limitations'' section in their paper.
        \item The paper should point out any strong assumptions and how robust the results are to violations of these assumptions (e.g., independence assumptions, noiseless settings, model well-specification, asymptotic approximations only holding locally). The authors should reflect on how these assumptions might be violated in practice and what the implications would be.
        \item The authors should reflect on the scope of the claims made, e.g., if the approach was only tested on a few datasets or with a few runs. In general, empirical results often depend on implicit assumptions, which should be articulated.
        \item The authors should reflect on the factors that influence the performance of the approach. For example, a facial recognition algorithm may perform poorly when image resolution is low or images are taken in low lighting. Or a speech-to-text system might not be used reliably to provide closed captions for online lectures because it fails to handle technical jargon.
        \item The authors should discuss the computational efficiency of the proposed algorithms and how they scale with dataset size.
        \item If applicable, the authors should discuss possible limitations of their approach to address problems of privacy and fairness.
        \item While the authors might fear that complete honesty about limitations might be used by reviewers as grounds for rejection, a worse outcome might be that reviewers discover limitations that aren't acknowledged in the paper. The authors should use their best judgment and recognize that individual actions in favor of transparency play an important role in developing norms that preserve the integrity of the community. Reviewers will be specifically instructed to not penalize honesty concerning limitations.
    \end{itemize}

\item {\bf Theory assumptions and proofs}
    \item[] Question: For each theoretical result, does the paper provide the full set of assumptions and a complete (and correct) proof?
    \item[] Answer: \answerYes{} 
    \item[] Justification: The paper provides a central theorem bridging the oracle-test-time-policy gap and assumptions in the main, along with detailed proof in the Appendix.
    \item[] Guidelines:
    \begin{itemize}
        \item The answer \answerNA{} means that the paper does not include theoretical results. 
        \item All the theorems, formulas, and proofs in the paper should be numbered and cross-referenced.
        \item All assumptions should be clearly stated or referenced in the statement of any theorems.
        \item The proofs can either appear in the main paper or the supplemental material, but if they appear in the supplemental material, the authors are encouraged to provide a short proof sketch to provide intuition. 
        \item Inversely, any informal proof provided in the core of the paper should be complemented by formal proofs provided in appendix or supplemental material.
        \item Theorems and Lemmas that the proof relies upon should be properly referenced. 
    \end{itemize}

    \item {\bf Experimental result reproducibility}
    \item[] Question: Does the paper fully disclose all the information needed to reproduce the main experimental results of the paper to the extent that it affects the main claims and/or conclusions of the paper (regardless of whether the code and data are provided or not)?
    \item[] Answer: \answerYes{} 
    \item[] Justification: The paper provides all experimental details, evaluation protocols, hyperparameters, hardware architecture, and Python libraries.
    \item[] Guidelines:
    \begin{itemize}
        \item The answer \answerNA{} means that the paper does not include experiments.
        \item If the paper includes experiments, a \answerNo{} answer to this question will not be perceived well by the reviewers: Making the paper reproducible is important, regardless of whether the code and data are provided or not.
        \item If the contribution is a dataset and\slash or model, the authors should describe the steps taken to make their results reproducible or verifiable. 
        \item Depending on the contribution, reproducibility can be accomplished in various ways. For example, if the contribution is a novel architecture, describing the architecture fully might suffice, or if the contribution is a specific model and empirical evaluation, it may be necessary to either make it possible for others to replicate the model with the same dataset, or provide access to the model. In general. releasing code and data is often one good way to accomplish this, but reproducibility can also be provided via detailed instructions for how to replicate the results, access to a hosted model (e.g., in the case of a large language model), releasing of a model checkpoint, or other means that are appropriate to the research performed.
        \item While NeurIPS does not require releasing code, the conference does require all submissions to provide some reasonable avenue for reproducibility, which may depend on the nature of the contribution. For example
        \begin{enumerate}
            \item If the contribution is primarily a new algorithm, the paper should make it clear how to reproduce that algorithm.
            \item If the contribution is primarily a new model architecture, the paper should describe the architecture clearly and fully.
            \item If the contribution is a new model (e.g., a large language model), then there should either be a way to access this model for reproducing the results or a way to reproduce the model (e.g., with an open-source dataset or instructions for how to construct the dataset).
            \item We recognize that reproducibility may be tricky in some cases, in which case authors are welcome to describe the particular way they provide for reproducibility. In the case of closed-source models, it may be that access to the model is limited in some way (e.g., to registered users), but it should be possible for other researchers to have some path to reproducing or verifying the results.
        \end{enumerate}
    \end{itemize}

\item {\bf Open access to data and code}
    \item[] Question: Does the paper provide open access to the data and code, with sufficient instructions to faithfully reproduce the main experimental results, as described in supplemental material?
    \item[] Answer: \answerYes{} 
    \item[] Justification: The data /models are publicly available, and we attach the code to train and evaluate as a zip file in the supplement. 
    \item[] Guidelines:
    \begin{itemize}
        \item The answer \answerNA{} means that paper does not include experiments requiring code.
        \item Please see the NeurIPS code and data submission guidelines (\url{https://neurips.cc/public/guides/CodeSubmissionPolicy}) for more details.
        \item While we encourage the release of code and data, we understand that this might not be possible, so \answerNo{} is an acceptable answer. Papers cannot be rejected simply for not including code, unless this is central to the contribution (e.g., for a new open-source benchmark).
        \item The instructions should contain the exact command and environment needed to run to reproduce the results. See the NeurIPS code and data submission guidelines (\url{https://neurips.cc/public/guides/CodeSubmissionPolicy}) for more details.
        \item The authors should provide instructions on data access and preparation, including how to access the raw data, preprocessed data, intermediate data, and generated data, etc.
        \item The authors should provide scripts to reproduce all experimental results for the new proposed method and baselines. If only a subset of experiments are reproducible, they should state which ones are omitted from the script and why.
        \item At submission time, to preserve anonymity, the authors should release anonymized versions (if applicable).
        \item Providing as much information as possible in supplemental material (appended to the paper) is recommended, but including URLs to data and code is permitted.
    \end{itemize}

\item {\bf Experimental setting/details}
    \item[] Question: Does the paper specify all the training and test details (e.g., data splits, hyperparameters, how they were chosen, type of optimizer) necessary to understand the results?
    \item[] Answer: \answerYes{} 
    \item[] Justification: All experimental details are provided in the main and appendix.
    \item[] Guidelines:
    \begin{itemize}
        \item The answer \answerNA{} means that the paper does not include experiments.
        \item The experimental setting should be presented in the core of the paper to a level of detail that is necessary to appreciate the results and make sense of them.
        \item The full details can be provided either with the code, in appendix, or as supplemental material.
    \end{itemize}

\item {\bf Experiment statistical significance}
    \item[] Question: Does the paper report error bars suitably and correctly defined or other appropriate information about the statistical significance of the experiments?
    \item[] Answer: \answerYes{} 
    \item[] Justification: All experiments have been run multiple times for statistical significance, and the error bars have been provided for all experiments.
    \item[] Guidelines:
    \begin{itemize}
        \item The answer \answerNA{} means that the paper does not include experiments.
        \item The authors should answer \answerYes{} if the results are accompanied by error bars, confidence intervals, or statistical significance tests, at least for the experiments that support the main claims of the paper.
        \item The factors of variability that the error bars are capturing should be clearly stated (for example, train/test split, initialization, random drawing of some parameter, or overall run with given experimental conditions).
        \item The method for calculating the error bars should be explained (closed form formula, call to a library function, bootstrap, etc.)
        \item The assumptions made should be given (e.g., Normally distributed errors).
        \item It should be clear whether the error bar is the standard deviation or the standard error of the mean.
        \item It is OK to report 1-sigma error bars, but one should state it. The authors should preferably report a 2-sigma error bar than state that they have a 96\% CI, if the hypothesis of Normality of errors is not verified.
        \item For asymmetric distributions, the authors should be careful not to show in tables or figures symmetric error bars that would yield results that are out of range (e.g., negative error rates).
        \item If error bars are reported in tables or plots, the authors should explain in the text how they were calculated and reference the corresponding figures or tables in the text.
    \end{itemize}

\item {\bf Experiments compute resources}
    \item[] Question: For each experiment, does the paper provide sufficient information on the computer resources (type of compute workers, memory, time of execution) needed to reproduce the experiments?
    \item[] Answer: \answerYes{} 
    \item[] Justification: All hardware details for the experiments have been provided in the appendix.
    \item[] Guidelines:
    \begin{itemize}
        \item The answer \answerNA{} means that the paper does not include experiments.
        \item The paper should indicate the type of compute workers CPU or GPU, internal cluster, or cloud provider, including relevant memory and storage.
        \item The paper should provide the amount of compute required for each of the individual experimental runs as well as estimate the total compute. 
        \item The paper should disclose whether the full research project required more compute than the experiments reported in the paper (e.g., preliminary or failed experiments that didn't make it into the paper). 
    \end{itemize}
    
\item {\bf Code of ethics}
    \item[] Question: Does the research conducted in the paper conform, in every respect, with the NeurIPS Code of Ethics \url{https://neurips.cc/public/EthicsGuidelines}?
    \item[] Answer: \answerYes{} 
    \item[] Justification: Yes, the research conducted conforms with the Code of Ethics. 
    \item[] Guidelines:
    \begin{itemize}
        \item The answer \answerNA{} means that the authors have not reviewed the NeurIPS Code of Ethics.
        \item If the authors answer \answerNo, they should explain the special circumstances that require a deviation from the Code of Ethics.
        \item The authors should make sure to preserve anonymity (e.g., if there is a special consideration due to laws or regulations in their jurisdiction).
    \end{itemize}

\item {\bf Broader impacts}
    \item[] Question: Does the paper discuss both potential positive societal impacts and negative societal impacts of the work performed?
    \item[] Answer: \answerYes{} 
    \item[] Justification: The paper contains an impact and limitations statement which discusses the impact of the work. 
    \item[] Guidelines:
    \begin{itemize}
        \item The answer \answerNA{} means that there is no societal impact of the work performed.
        \item If the authors answer \answerNA{} or \answerNo, they should explain why their work has no societal impact or why the paper does not address societal impact.
        \item Examples of negative societal impacts include potential malicious or unintended uses (e.g., disinformation, generating fake profiles, surveillance), fairness considerations (e.g., deployment of technologies that could make decisions that unfairly impact specific groups), privacy considerations, and security considerations.
        \item The conference expects that many papers will be foundational research and not tied to particular applications, let alone deployments. However, if there is a direct path to any negative applications, the authors should point it out. For example, it is legitimate to point out that an improvement in the quality of generative models could be used to generate Deepfakes for disinformation. On the other hand, it is not needed to point out that a generic algorithm for optimizing neural networks could enable people to train models that generate Deepfakes faster.
        \item The authors should consider possible harms that could arise when the technology is being used as intended and functioning correctly, harms that could arise when the technology is being used as intended but gives incorrect results, and harms following from (intentional or unintentional) misuse of the technology.
        \item If there are negative societal impacts, the authors could also discuss possible mitigation strategies (e.g., gated release of models, providing defenses in addition to attacks, mechanisms for monitoring misuse, mechanisms to monitor how a system learns from feedback over time, improving the efficiency and accessibility of ML).
    \end{itemize}
    
\item {\bf Safeguards}
    \item[] Question: Does the paper describe safeguards that have been put in place for responsible release of data or models that have a high risk for misuse (e.g., pre-trained language models, image generators, or scraped datasets)?
    \item[] Answer: \answerNA{} 
    \item[] Justification: The work uses standard datasets published in other works, and does not use generative models.
    \item[] Guidelines:
    \begin{itemize}
        \item The answer \answerNA{} means that the paper poses no such risks.
        \item Released models that have a high risk for misuse or dual-use should be released with necessary safeguards to allow for controlled use of the model, for example by requiring that users adhere to usage guidelines or restrictions to access the model or implementing safety filters. 
        \item Datasets that have been scraped from the Internet could pose safety risks. The authors should describe how they avoided releasing unsafe images.
        \item We recognize that providing effective safeguards is challenging, and many papers do not require this, but we encourage authors to take this into account and make a best faith effort.
    \end{itemize}

\item {\bf Licenses for existing assets}
    \item[] Question: Are the creators or original owners of assets (e.g., code, data, models), used in the paper, properly credited and are the license and terms of use explicitly mentioned and properly respected?
    \item[] Answer: \answerYes{} 
    \item[] Justification: All original creators of code, data, and models are cited thoroughly in the paper.
    \item[] Guidelines:
    \begin{itemize}
        \item The answer \answerNA{} means that the paper does not use existing assets.
        \item The authors should cite the original paper that produced the code package or dataset.
        \item The authors should state which version of the asset is used and, if possible, include a URL.
        \item The name of the license (e.g., CC-BY 4.0) should be included for each asset.
        \item For scraped data from a particular source (e.g., website), the copyright and terms of service of that source should be provided.
        \item If assets are released, the license, copyright information, and terms of use in the package should be provided. For popular datasets, \url{paperswithcode.com/datasets} has curated licenses for some datasets. Their licensing guide can help determine the license of a dataset.
        \item For existing datasets that are re-packaged, both the original license and the license of the derived asset (if it has changed) should be provided.
        \item If this information is not available online, the authors are encouraged to reach out to the asset's creators.
    \end{itemize}

\item {\bf New assets}
    \item[] Question: Are new assets introduced in the paper well documented and is the documentation provided alongside the assets?
    \item[] Answer: \answerNA{} 
    \item[] Justification: No new assets are introduced in the paper.
    \item[] Guidelines:
    \begin{itemize}
        \item The answer \answerNA{} means that the paper does not release new assets.
        \item Researchers should communicate the details of the dataset\slash code\slash model as part of their submissions via structured templates. This includes details about training, license, limitations, etc. 
        \item The paper should discuss whether and how consent was obtained from people whose asset is used.
        \item At submission time, remember to anonymize your assets (if applicable). You can either create an anonymized URL or include an anonymized zip file.
    \end{itemize}

\item {\bf Crowdsourcing and research with human subjects}
    \item[] Question: For crowdsourcing experiments and research with human subjects, does the paper include the full text of instructions given to participants and screenshots, if applicable, as well as details about compensation (if any)? 
    \item[] Answer: \answerNA{} 
    \item[] Justification: The paper does not contain crowdsourcing experiments.
    \item[] Guidelines:
    \begin{itemize}
        \item The answer \answerNA{} means that the paper does not involve crowdsourcing nor research with human subjects.
        \item Including this information in the supplemental material is fine, but if the main contribution of the paper involves human subjects, then as much detail as possible should be included in the main paper. 
        \item According to the NeurIPS Code of Ethics, workers involved in data collection, curation, or other labor should be paid at least the minimum wage in the country of the data collector. 
    \end{itemize}

\item {\bf Institutional review board (IRB) approvals or equivalent for research with human subjects}
    \item[] Question: Does the paper describe potential risks incurred by study participants, whether such risks were disclosed to the subjects, and whether Institutional Review Board (IRB) approvals (or an equivalent approval/review based on the requirements of your country or institution) were obtained?
    \item[] Answer: \answerNA{} 
    \item[] Justification: The paper does not contain research with human subjects.
    \item[] Guidelines:
    \begin{itemize}
        \item The answer \answerNA{} means that the paper does not involve crowdsourcing nor research with human subjects.
        \item Depending on the country in which research is conducted, IRB approval (or equivalent) may be required for any human subjects research. If you obtained IRB approval, you should clearly state this in the paper. 
        \item We recognize that the procedures for this may vary significantly between institutions and locations, and we expect authors to adhere to the NeurIPS Code of Ethics and the guidelines for their institution. 
        \item For initial submissions, do not include any information that would break anonymity (if applicable), such as the institution conducting the review.
    \end{itemize}

\item {\bf Declaration of LLM usage}
    \item[] Question: Does the paper describe the usage of LLMs if it is an important, original, or non-standard component of the core methods in this research? Note that if the LLM is used only for writing, editing, or formatting purposes and does \emph{not} impact the core methodology, scientific rigor, or originality of the research, declaration is not required.
    \item[] Answer: \answerNA{} 
    \item[] Justification: LLMs were only used for editing of grammar in the paper.
    \item[] Guidelines:
    \begin{itemize}
        \item The answer \answerNA{} means that the core method development in this research does not involve LLMs as any important, original, or non-standard components.
        \item Please refer to our LLM policy in the NeurIPS handbook for what should or should not be described.
    \end{itemize}

\end{enumerate}

\end{document}